\definecolor{DarkBlue}{rgb}{0.0,0.0,0.55}
\let\footnote=\endnote
\renewcommand{\hat}{\widehat}
\renewcommand{\tilde}{\widetilde}
\DeclarePairedDelimiterX{\infdivx}[2]{(}{)}{%
  #1 \; \delimsize\| \; #2%
}
\newcommand*\xbar[1]{%
  \hbox{%
    \vbox{%
      \hrule height 0.4pt 
      \kern0.5ex
      \hbox{%
        \kern-0em
        \ensuremath{#1}%
        \kern-0em
      }%
    }%
  }%
} 
\definecolor{royalpurple}{rgb}{0.47, 0.32, 0.66}
\definecolor{greenfresh}{HTML}{00897B}
\definecolor{bluefresh}{HTML}{1E88E5}
\definecolor{redfresh}{HTML}{E53935}
\definecolor{royalpurple}{rgb}{0.47, 0.32, 0.66}
\def\beq{\begin{equation}}
\def\eeq{\end{equation}}
\def\bet{\begin{theorem}}
\def\eet{\end{theorem}}
\def\bel{\begin{lemma}}
\def\eel{\end{lemma}}
\begin{document}



\RUNTITLE{Prior-Aligned Meta-RL: Thompson Sampling with Learned Priors and Guarantees in Finite-Horizon MDPs}

\TITLE{{\large{Prior-Aligned Meta-RL: Thompson Sampling with Learned Priors and Guarantees in Finite-Horizon MDPs}}}

\ARTICLEAUTHORS{%
\AUTHOR{Runlin Zhou}
\AFF{Department of Statistics, University of Science and Technology of China, 
}
\AUTHOR{Chixiang Chen}
\AFF{Department of Epidemiology and Public Health, University of Maryland, Baltimore, 
}
\AUTHOR{Elynn Chen\footnote{Correspondence to: Elynn Chen (elynn.chen@stern.nyu.edu)}}
\AFF{Department of Technology, Operations, and Statistics, Stern School of Business, New York University, 
}
} 

\ABSTRACT{%
We study meta-reinforcement learning in finite-horizon MDPs where related tasks share similar structures in their optimal action-value functions. 
Specifically, we posit a linear representation $Q^*_h(s,a)=\Phi_h(s,a)\,\theta^{(k)}_h$ and place a Gaussian meta-prior $ \mathcal{N}(\theta^*_h,\Sigma^*_h)$ over the task-specific parameters $\theta^{(k)}_h$. Building on randomized value functions, we propose two Thompson-style algorithms: (i) MTSRL, which learns only the prior mean and performs posterior sampling with the learned mean and known covariance; and (ii) $\text{MTSRL}^{+}$, which additionally estimates the covariance and employs prior widening to control finite-sample estimation error. Further, we develop a prior-alignment technique that couples the posterior under the learned prior with a meta-oracle that knows the true prior, yielding meta-regret guarantees: we match prior-independent Thompson sampling in the small-task regime and strictly improve with more tasks once the prior is learned. Concretely, for known covariance we obtain $\tilde{O}(H^{4}S^{3/2}\sqrt{ANK})$ meta-regret, and with learned covariance $\tilde{O}(H^{4}S^{3/2}\sqrt{AN^3K})$; both recover a better behavior than prior-independent after $K \gtrsim \tilde{O}(H^2)$ and $K \gtrsim \tilde{O}(N^2H^2)$, respectively. Simulations on a stateful recommendation environment (with feature and prior misspecification) show that after brief exploration, MTSRL/MTSRL\(^+\) track the meta-oracle and substantially outperform prior-independent RL and bandit-only meta-baselines. Our results give the first meta-regret guarantees for Thompson-style RL with learned Q-priors, and provide practical recipes (warm-start via RLSVI, OLS aggregation, covariance widening) for experiment-rich settings.
}%


\KEYWORDS{Meta-Reinforcement Learning, Thompson Sampling, Bayesian RL, Learned Priors, Meta-Regret, Finite-Horizon MDPs} 

\maketitle

%


\section{Introduction}

Reinforcement learning (RL) agents are increasingly deployed in settings where they must solve not just one environment, but an array of related tasks. Examples include personalized recommendations, adaptive pricing, and treatment policies in healthcare. In such meta-RL problems, the primary goal, also the central challenge, is to transfer knowledge across tasks so that it can accelerate learning in new environments. Recent work in bandits has begun to address this question. Meta-Thompson Sampling (MetaTS), AdaTS, and hierarchical Bayesian bandits \citep{kveton2021meta,basu2021no,hong2022hierarchical,wan2021metadata} learn priors across bandit tasks, showing that transfer can improve performance. In dynamic pricing, \citet{bastani2022meta} introduced a prior-alignment proof technique and prior widening to control estimation error, though their analysis is confined to horizon-$H=1$ bandits. Meanwhile, meta-RL approaches such as MAML \citep{finn2017model} and PEARL \citep{rakelly2019efficient} focus on representation learning or adaptation, without maintaining explicit Bayesian priors or analyzing Thompson-style regret.

Before delving into new methods in meta-RL, it is worthwhile to highlight that posterior sampling (a.k.a.~Thompson sampling) has emerged as a powerful paradigm for single-task RL, through posterior sampling for RL (PSRL) \citep{osband2013more,osband2016posterior} and randomized value functions such as RLSVI \citep{osband2016generalization,zanette2020frequentist}. 
However, it remains unclear how to extend its benefits to the meta setting where tasks share hidden structure. This paper develops the first Thompson-style algorithms for meta-RL with shared Gaussian priors over optimal value functions. We posit that across tasks, the optimal $Q$-functions admit a linear parameterization with parameters $\theta_h^{(k)}$ drawn from a common Gaussian prior $\mathcal{N}(\theta_h^*,\Sigma_h^*)$. 
This structural assumption shifts the focus from learning dynamics or reward distributions, as in PSRL and RLSVI, to learning a distribution over $Q^*$-parameters. 

Building on this foundation, we design two new Thompson-style meta-RL algorithms:~\underline{M}eta \underline{T}hompson \underline{S}ampling for \underline{RL} (\emph{MTSRL}), which estimates the shared prior mean while assuming known covariance, and \emph{MTSRL$^+$}, which additionally learns the covariance and employs prior widening to ensure robustness. 
We analyze both algorithms through a new prior-alignment framework that couples their learned-prior posteriors to a meta-oracle with the true prior, yielding the first meta-regret guarantees for Thompson sampling in finite-horizon RL. 
Together with simulations in a recommendation environment, our results demonstrate both the theoretical and practical benefits of leveraging learned $Q^*$-priors across tasks.

\smallskip
\noindent
\textbf{Challenges.} 
While prior alignment and prior widening were previously proposed in the bandit setting \citep{bastani2022meta}, extending these techniques to finite-horizon RL is highly non-trivial. 
Algorithmically, the presence of multiple stages $h=1,\ldots,H$ introduces \emph{Bellman dependencies}: each parameter $\theta_h^{(k)}$ must be estimated from temporally correlated trajectories, and errors at later stages propagate backward to earlier ones. 
Designing MTSRL and MTSRL$^+$ required careful integration of (i) OLS-based per-task regression that respects Bellman backups, (ii) cross-task averaging to form a consistent prior mean estimator, and (iii) covariance estimation with \emph{widening} to maintain stability under finite-sample error. 
Theoretically, adapting prior alignment to RL required a new change-of-measure argument that couples the posterior induced by the learned $Q^*$-prior to that of a meta-oracle, while controlling compounding errors across $H$ stages. 
These difficulties make the extension far from a direct generalization of bandit results, and resolving them is central to our analysis.

\smallskip
\noindent
\textbf{Our Contributions.} 
This paper makes the following contributions:
\begin{itemize}
    \item \textbf{First Thompson-style meta-RL algorithms.} We introduce MTSRL and MTSRL$^+$, which learn Gaussian priors over optimal $Q^*$-parameters across tasks and exploit them via posterior sampling.
    \item \textbf{Novel proof technique.} We develop a \emph{prior alignment} argument that couples the learned-prior posterior to a meta-oracle with the true prior, enabling the first \emph{meta-regret guarantees} for Thompson sampling in finite-horizon RL.
    \item \textbf{Robust prior estimation.} We propose covariance \emph{widening} to handle finite-sample uncertainty in estimating $\Sigma_h^*$, ensuring stable performance even under misspecification.
    \item \textbf{Sharp theoretical results.} We show that our algorithms match prior-independent Thompson sampling in the small-$K$ regime and strictly improve in experiment-rich regimes, with bounds of $\tilde{O}(H^4 S^{3/2}\sqrt{ANK})$ (known covariance) and $\tilde{O}(H^4 S^{3/2}\sqrt{AN^3K})$ (unknown covariance).
    \item \textbf{Practical validation.} Simulations in a stateful recommendation environment (with feature and prior misspecification) demonstrate that MTSRL/MTSRL$^+$ closely track the meta-oracle and significantly outperform prior-independent RL and bandit-only baselines.
\end{itemize}

Together, these contributions establish prior-aligned meta-RL as a new direction: Bayesian value-based exploration that learns and exploits shared priors over optimal value functions. Conceptually, our work bridges posterior sampling for single-task RL \citep{osband2013more,osband2016posterior,osband2016generalization,zanette2020frequentist} with meta-Thompson sampling in bandits \citep{kveton2021meta,basu2021no,hong2022hierarchical,bastani2022meta}. Technically, our analysis introduces alignment and widening tools that may be of independent interest in Bayesian RL.

\subsection{Related Work and Our Distinctions}

\noindent\textbf{Posterior sampling and randomized value functions in RL.} 
Posterior Sampling for RL (PSRL) established Bayesian exploration for \textbf{single-task} episodic MDPs and proved near-optimal Bayes regret in tabular settings; subsequent work clarified its limitations in non-episodic settings \citep{osband2013more,osband2016posterior}. 
Randomized Least-Squares Value Iteration (RLSVI) introduced \textit{randomized value functions} with linear function approximation and regret guarantees, motivating posterior-style exploration without optimism \citep{osband2016generalization,zanette2020frequentist}. 
\textit{Our work differs by learning a prior across multiple tasks over $Q^*$-parameters and analyzing meta-regret against a meta-oracle, rather than Bayes regret for a single MDP}. 

\smallskip
\noindent
\textbf{Meta-Thompson sampling and learned priors in bandits.} 
MetaTS, AdaTS, and their extensions study learning the prior across bandit tasks (including contextual and linear) and demonstrate how performance improves as the number of tasks grows \citep{kveton2021meta,basu2021no,hong2022hierarchical}. 
The meta dynamic pricing line goes further by introducing a \textit{prior-alignment} proof technique and \textit{prior widening} for covariance uncertainty \citep{bastani2022meta}. 

We adopt the same high-level idea that learn the prior and then sample, \textit{but extend it to finite-horizon RL with Bellman structure and $H>1$ dynamics}. 
In particular, we learn $Q^*$-priors (rather than reward/arm priors) and establish RL meta-regret via a new alignment analysis tailored to value-function generalization. 
Moreover, while meta-bandit work documents sensitivity of TS to misspecified hyper-priors and proposes prior widening to mitigate finite-sample covariance error, we adapt this idea to \textit{RL with function approximation, proving meta-regret guarantees under learned mean and covariance for $Q^*$ through a prior-alignment change of measure that couples the learned-prior posterior to a meta-oracle}. 

\smallskip
\noindent
\textbf{Hierarchical and multi-task Bayesian bandits.} 
A separate line of work formalizes multi-task learning via \textit{hierarchical priors} and proves Bayes regret benefits from shared structure, with recent advances sharpening bounds and extending to sequential or parallel task arrivals \citep{wang2021multitask,wan2021metadata,hong2022hierarchical,guan2024improved}. 

Beyond hierarchical Bayes approaches, alternative formulations also study shared-plus-private structure across tasks: for example, \citet{xu2025multitask} decompose parameters into a global component plus sparse individual deviations using robust statistics and LASSO, while \citet{bilaj2024meta} assume task parameters lie near a shared low-dimensional affine subspace and use online PCA to accelerate exploration. 

All of these methods, however, operate at horizon $H{=}1$. 
\textit{Our contribution brings hierarchical-prior benefits to multi-step RL, coupling the learned $Q^*$-prior to Bellman updates and analyzing meta-regret in MDPs}. 

\smallskip
\noindent
\textbf{Meta-RL via representation and adaptation (non-Bayesian priors).} 
Meta-RL approaches such as MAML and PEARL learn \textit{initializations or latent task representations} for rapid adaptation \citep{finn2017model,rakelly2019efficient}, while MQL demonstrates strong off-policy meta-training with a context variable for $Q$-learning \citep{fakoor2019meta}. 
Transfer RL across different tasks has been studied in \cite{chen2022transfer,chen2024reinforcement,chen2024data,chai2025deep,chai2025transfer,zhang2025transfer}.
These methods do \textit{not} maintain explicit Bayesian priors over $Q^*$ nor analyze Thompson-style meta-regret. 
\textit{Our approach is complementary: we retain the Bayesian decision-making perspective (posterior sampling) and introduce explicit Gaussian priors over $Q^*$ across tasks}. 

\section{Problem Formulation with Q\(^*\)-Priors}
We study meta-reinforcement learning in finite-horizon MDPs where related tasks share structure in their optimal value functions. 
Unlike classical approaches such as posterior sampling for RL (PSRL) \citep{osband2013more,osband2016posterior} or randomized least-squares value iteration (RLSVI) \citep{osband2016generalization,zanette2020frequentist}, which treat each task independently, we posit that the optimal $Q$-functions admit a linear parameterization with \emph{shared Gaussian priors across tasks}. 
This structural assumption enables posterior-sampling algorithms that explicitly leverage information across MDPs, going beyond existing single-task RL analyses or horizon-$H{=}1$ bandit formulations. 
In particular, Section~\ref{sec:known-prior} develops \emph{TSRL with known Q\(^*\)-priors}, the first Thompson-sampling baseline in RL that admits \emph{meta-regret guarantees relative to a prior-knowing oracle}. 
This benchmark then serves as the foundation for our learned-prior algorithms (MTSRL and MTSRL\(^+\)).

\subsection{Model Setup with Shared Q\(^*\)-Priors}
The $k$-th finite-horizon Markov Decision Process (MDP) is denoted $\mathcal{M}^{(k)} = (\mathcal{S},\mathcal{A},H,P^{(k)},\mathcal{R}^{(k)},\pi)$, where $\mathcal{S}$ is the state space, $\mathcal{A}$ is the action space, $H$ is the horizon, $P^{(k)}$ are the transition kernels, $\mathcal{R}^{(k)}$ are the reward distributions, and $\pi$ is the initial state distribution. 
At each period $h=1,\ldots,H-1$, given state $s_h^{(k)}$ and action $a_h^{(k)}$, the next state $s_{h+1}^{(k)}$ is drawn from $P^{(k)}_{h,s_h^{(k)},a_h^{(k)}}$, and reward $r_h^{(k)} \in [0,1]$ is drawn from $\mathcal{R}^{(k)}_{h,s_h^{(k)},a_h^{(k)}}$. 
Each MDP runs for $N$ episodes, with trajectories indexed by $(s_{nh}^{(k)}, a_{nh}^{(k)}, r_{nh}^{(k)})$.

The optimal value function of MDP $\mathcal{M}^{(k)}$ is 
\[
V_{*,h}^{(k)}(s) = \max_{\mu} \; \mathbb{E}_{\mathcal{M}^{(k)}}\!\left[\sum_{i=h}^{H} R^{(k)}_{i,s^{(k)}_i,\mu(s^{(k)}_i)} \;\big|\; s^{(k)}_h=s\right],
\]
and the corresponding optimal $Q$-function is
\begin{align*}
Q_{*,h}^{(k)}(s,a) = \mathbb{E}_{\mathcal{M}^{(k)}} \big[ R^{(k)}_{h,s^{(k)}_h,a^{(k)}_h} + V_{*,h+1}^{(k)}(s_{h+1}^{(k)})
    \big| s^{(k)}_h = s, a^{(k)}_h = a \big].
\end{align*}
We assume a linear parameterization of the optimal $Q$-function:
\[
Q_{*,h}^{(k)}(s,a) = \Phi_h(s,a)\,\theta^{(k)}_{h}, 
\]
where $\theta^{(k)}_{h}\in \mathbb{R}^M$ is the parameter vector for MDP $k$ and $\Phi_h \in \mathbb{R}^{SA \times M}$ is a generalization matrix whose row $\Phi_h(s,a)$ corresponds to the state–action pair $(s,a)$. 

Crucially, we assume a \textbf{shared Gaussian prior across tasks}:
\[
\theta^{(k)}_{h} \sim \mathcal{N}(\theta^*_h, \Sigma^*_h), \qquad k\in[K], \; h\in[H],
\]
where $(\theta^*_h,\Sigma^*_h)$ are common but unknown. 
This formulation generalizes the bandit setting of \citet{bastani2022meta} (recovered when $H=1$, $S=1$), and forms the basis for the algorithms in Section~\ref{sec:known-prior} and beyond.

\subsection{TSRL with Known $Q^*$-Priors: A Meta-Regret Baseline} \label{sec:known-prior}
We begin with the benchmark case in which the agent is given access to the \emph{true Gaussian prior} over the task-specific optimal $Q^*$-parameters. 
This setting is distinct from existing posterior-sampling methods in RL: PSRL \citep{osband2013more,osband2016posterior} assumes generative models over rewards and transitions, while RLSVI \citep{osband2016generalization,zanette2020frequentist} relies on randomized value functions without cross-task structure. 
It also extends beyond meta-Thompson sampling in bandits \citep{kveton2021meta,basu2021no,bastani2022meta}, which are confined to horizon-$H{=}1$ problems and priors over reward parameters. 

We introduce two algorithms:~the \underline{T}hompson \underline{S}ampling for \underline{RL} algorithm with a \emph{known prior} (TSRL) and its enhanced version TSRL$^+$.  
In contrast to existing methods, TSRL is the first algorithm to employ \textbf{Gaussian priors directly over $Q^*$-parameters} in \textbf{finite-horizon RL}, and we analyze its \textbf{meta-regret against a prior-knowing oracle}. 
This establishes a principled baseline that both clarifies our theoretical target and motivates the learned-prior algorithms ( MTSRL and MTSRL\(^+\)) in Section \ref{sec:algorithms}. 

For convenience, we use the shorthand notation $\{\cdot\}$ to denote the collection $\{\cdot\}_{h=1}^{H}$ of horizon-dependent quantities, whose cardinality is $H$. We also suppresses the task index $k$ for the rest of this section. 

\smallskip
\noindent
\textbf{TSRL.} 
TSRL is defined as a \emph{meta baseline}: it assumes access to the true shared prior $(\theta^*_h, \Sigma^*_h)$ and applies posterior sampling to each task $M^{(k)}$ independently. 
Thus TSRL can be regarded as a prior-informed analogue of RLSVI at the task level, but crucially it serves as the \emph{meta-oracle benchmark} for our regret analysis across multiple tasks.
Given the prior mean $\{\theta^{*}_{h}\}$, covariance $\{\Sigma^{*}_{h}\}$, and the number of episodes $N$, TSRL proceeds in the same manner as RLSVI but incorporates the prior in posterior updates. 
In each episode $n$, the algorithm computes posterior parameters $\{\theta^{TS}_{nh}\}$ and $\{\Sigma^{TS}_{nh}\}$ from the observed trajectory history and the prior $\{\theta^{*}_{h}\}, \{\Sigma^{*}_{h}\}$. 
It then samples $\tilde{\theta}_{nh} \sim \mathcal{N}(\theta^{TS}_{nh}, \Sigma^{TS}_{nh})$ for each $h$, and selects actions greedily according to
\[
\vspace{-1ex}
a_{nh} \in \arg\max_{\alpha \in \mathcal{A}} \big( \Phi_h \tilde{\theta}_{nh} \big)(s_{nh}, \alpha).
\]
The environment returns reward $r_{nh}$ and next state $s_{n,h+1}$, which are used to update posterior estimates. 
Over time, TSRL learns estimates $\tilde{\theta}_{Nh}$ that approximate the underlying parameters $\theta_{h}$. 
Further details and thoeretical guarantees are given in  Section~\ref{sec:TSRL}.

\smallskip
\noindent
\textbf{TSRL\(^+\).} 
TSRL+ enhances TSRL by introducing an initialization phase with RLSVI, which enhances the stability of the prior estimates.
The pseudocode is provided in Algorithm~\ref{alg:TSRL+}.
Specifically, we introduce a positive input parameter $\lambda_e$ and run RLSVI, which is equivalent to TSRL($\{0\}, \{ \tfrac{1}{\lambda} \mathbf{I}\}, 1$), during the initialization phase. This process continues until the Fisher information matrix
\vspace{-1ex}
\[
V^{(k)}_{nh} = \sum_{i=1}^{n-1} \Phi_{h}^\top(s^{(k)}_{ih},a^{(k)}_{ih}) \Phi_{h}(s^{(k)}_{ih},a^{(k)}_{ih})
\]
achieves a minimum eigenvalue of at least $\lambda_e$, ensuring that a well-defined OLS estimate of $\theta_{h}^{(k)}$ is obtained by the end of the $N$ epochs. This prepares the estimates for subsequent use in the meta-Thompson sampling for RL (MTSRL).

Let $\mathcal{N}^{(k)}_{h}$ denote the (random) length of this initialization period,
\[
\mathcal{N}^{(k)}_{h} = \arg\min_{n} \left\{ \lambda_{\min}\!\left(V^{(k)}_{nh}\right) \geq \lambda_e \right\}.
\]
We show in Appendix~\ref{append:proe} that $\mathcal{N}_{h}^{(k)} = \tilde{O}(1)$ with high probability, under the assumption $\min_{h,s,a} \lambda_{\min}(\Phi_{h}^\top(s,a)\Phi_{h}(s,a)) \ge \lambda_{0}$. 
Thus, the initialization occupies only a negligible fraction of the overall runtime, after which $\text{TSRL}^{+}$ proceeds as TSRL with the known prior.

\begin{algorithm}
\caption{$\text{TSRL}^+$($\{\theta^{*}_{h}\}$,$\{\Sigma^{*}_{h}\}$,$\lambda_e$,$N$)}
\begin{algorithmic}[1]
    \setlength{\itemsep}{0pt}
    \State \textbf{Input:} \\
    Data $\left\{\Phi_1(s_{i1}, a_{i1}), r_{i1}, \ldots,\Phi_{H}(s_{iH}, a_{iH}), r_{iH}\right\}_{i < n}$, exploration parameter $\lambda_e$, prior mean vectors $\{\theta^{*}_{h}\}$, 
    covariance matrixs $\{\Sigma^{*}_{h}\}$, epochs' amounts $N$, and noise parameter $\{\beta_n\}_{n=1}^{N}$, $\tilde{\theta}_{H+1} = 0$.
    \State \textbf{Initialization:} $n \leftarrow 1$, 
\While{$\exists h, \lambda_{\min} \left(\sum_{i=1}^{n-1}\Phi_{h}^\top(s_{ih},a_{ih})\Phi_{h}(s_{ih},a_{ih})\right) < \lambda_e$}
    \State Run TSRL($\{0\}$,$\{\frac{1}{\lambda} \mathbf{I}\}$, 1)
    \State $n\leftarrow n+1$
\EndWhile
\While{$n \leq N$}
    \State Run TSRL($\{\theta^{*}_{h}\}$,$\{\Sigma^{*}_{h}\}$, 1)
\EndWhile
\end{algorithmic}
\label{alg:TSRL+}
\end{algorithm}

\section{Learning $Q^*$-Priors: The MTSRL and MTSRL\(^+\) Algorithms}
\label{sec:algorithms}

We now move from the single-MDP setting to the {\it meta setting with multiple tasks}, where the goal is to leverage the shared prior structure $\theta_h^{(k)} \sim \mathcal{N}(\theta_h^*, \Sigma_h^*)$ across tasks.
To this end, we introduce two Thompson sampling-based algorithms: \emph{Meta Thompson Sampling for RL (MTSRL)} and its enhanced variant \emph{MTSRL$^+$}.
MTSRL estimates a common prior mean across tasks via OLS regression while assuming the covariance $\{\Sigma_h^*\}$ is known, and then performs posterior sampling using this learned mean.
MTSRL$^+$ removes the known-covariance assumption by jointly estimating both the prior mean and covariance, and employs \emph{prior widening} to control finite-sample estimation error, thereby achieving improved robustness.


\smallskip
\noindent
\textbf{Meta-oracle (known prior).}~We define the \emph{meta-oracle policy} that, for each task $\mathcal{M}^{(k)}$, runs $\text{TSRL}^+$ with the \emph{true} prior $(\{\theta_h^*\},\{\Sigma_h^*\})$ (Section~\ref{sec:known-prior}).
We compare our learned-prior algorithms to this oracle.

\subsection{MTSRL (Known Covariance)}
\label{sec:alg:mtsrl}
We first consider the setting where the prior covariance $\{\Sigma_h^*\}$ is known. 
The corresponding algorithm, MTSRL, is presented in Algorithm~\ref{alg:MTSRL}. 
In this case, the first $K_0 = \tilde{O}(H^2)$ tasks are allocated to an initial exploration phase, 
during which the algorithm relies on a prior-independent strategy. 
Once this warm-up is completed, MTSRL transitions to exploiting the shared structure across tasks. 
Specifically, for each task $k$, the procedure operates in two regimes:
\vspace{-2ex}
\begin{enumerate}[label=(\roman*)]
    \item \textbf{Epoch $k \leq K_{0}$}: MTSRL executes the prior-independent Thompson sampling algorithm RLSVI \citep{osband2016generalization,russo2019worst}, 
    which corresponds to running Algorithm~\ref{alg:TSRL+} with a conservative prior.  

    \item \textbf{Epoch $k > K_{0}$}: MTSRL leverages past data to estimate the shared prior mean. 
    For each previous task $j < k$ and for every stage $h$, it computes an OLS estimate of the parameter
    \[
    \dot{\theta}_h^{(j)} = V_{Nh}^{(j)^{-1}} \left( \sum_{i=1}^{N} \Phi_{h}(s^{(j)}_{ih}, a^{(j)}_{ih})^\top \dot{b}_{ih}^{(j)} \right),
    \]
    where $\dot{b}_{ih}^{(j)} = r_{ih}^{(j)} + \max_{\alpha} \big( \Phi_{h+1}\dot{\theta}_{h+1}^{(j)} \big)(s_{i,h+1}^{(j)}, \alpha)$ if $h < H$, 
    and $\dot{b}_{ih}^{(j)} = r_{ih}^{(j)}$ if $h=H$ (with $\dot{\theta}^{(j)}_{H+1}=0$). 
    These task-specific estimates are then averaged to form an estimator of the prior mean:
    \begin{equation}\label{equ:methe}
        \hat{\theta}^{(k)}_h = \frac{1}{k-1}\sum_{j=1}^{k-1} \dot{\theta}_h^{(j)}.
    \end{equation}
    Finally, MTSRL runs Thompson Sampling (Algorithm~\ref{alg:TSRL+}) on task $k$ using the estimated prior 
    $(\{\hat{\theta}^{(k)}_h\},\{\Sigma_h^*\})$, i.e., $\text{TSRL}^+(\{\hat{\theta}^{(k)}_h\},\{\Sigma_h^*\}, \lambda_e, L)$.
\end{enumerate}

\begin{algorithm}
\caption{MTSRL Algorithm}\label{alg:MTSRL}
\begin{algorithmic}[1]
\setlength{\itemsep}{0pt}
\State \textbf{Input:} The prior covariance matrix $\{\Sigma^*_h\}$, the total number of MDPs $K$, the episode amount of each MDP $N$, the length of each episode $H$, the noise parameter $\{\beta_n\}_{n=1}^{N}$, $\tilde{\theta}_{H+1} = 0$.
\For{each MDP epoch $k = 1, \ldots, K$}
    \If{$k \leq K_{0}$}
        \State Run $\text{TSRL}^+(\{0\},\{\frac{1}{\lambda} \mathbf{I}\}, \lambda_e, N)$.
    \Else
        \State Update $\{\hat{\theta}_h^{(k)}\}$ according to Eq. \ref{equ:methe}, and run $\text{TSRL}^+(\{\theta^{(k)}_h\},\{\Sigma_h^*\}, \lambda_e, N)$.
    \EndIf
\EndFor
\end{algorithmic}
\end{algorithm}

\subsection{MTSRL\texorpdfstring{$^{+}$}{+} (Unknown Covariance)}
\label{sec:alg:mtsrlplus}
When $\{\Sigma_h^*\}$ is unknown, we additionally estimate and \emph{widen} the prior covariance.
The $\text{MTSRL}^+$ algorithm is presented in Algorithm 3. We first define some additional notation, and then describe the algorithm in detail.

\textbf{Additional notation:} 
To estimate $\Sigma_h^*$, we require unbiased and independent estimates for the unknown true parameter realizations $\theta_h^{(k)}$ across MDPs. Instead of using all $N$ steps as in the MTSRL algorithm, we utilize the initialization steps $n \in [\mathcal{N}_j]$ (where $\mathcal{N}_j = \max_{h}\{\mathcal{N}_h^{(j)}\}$) to generate an estimate $\ddot{\theta}_h^{(j)}$ for $\theta_h^{(j)}$, and an expected $\ddot{\Sigma}_h^{(j)}$ for $\Sigma_h^{(j)}$, i.e.,$\forall j < k$, and $\forall h$
        \begin{align*}
            & \ddot{\theta}_h^{(j)} = V_{\mathcal{N}_{j}h}^{(j)^{-1}} \left( \sum_{i=1}^{\mathcal{N}_{j}} \Phi_{h}(s^{(j)}_{ih}, a^{(j)}_{ih})^\top \ddot{b}_{ih}^{(j)} \right), \\
            &  \ddot{\Sigma}_h^{(j)} = \mathbb{E}\left(\ddot{\theta}_h^{(j)}-\theta_h^{(j)}\right)\left(\ddot{\theta}_h^{(j)}-\theta_h^{(j)}\right)^\top.
        \end{align*}
    Here 
    \begin{align*}
            \ddot{b}_{ih}^{(j)} \leftarrow 
    \begin{cases}
        r_{ih}^{(j)} + \max_{\alpha} \left( \Phi_{h+1} \ddot{\theta}_{h+1}^{(j)} \right) (s_{i, h+1}^{(j)}, \alpha) & \text{if } h < H \\
        r_{ih}^{(j)}  & \text{if } h = H
    \end{cases},
    \end{align*}
    and we define $\ddot{\theta}^{(j)}_{H+1} = 0, \forall j$.

\textbf{Algorithm Description:} The first $K_1$ epochs are treated as exploration epochs, where we employ the prior-independent Thompson Sampling algorithm and $K_1  = \tilde{O}(H^2N^2)$. 

Note that we now require $\tilde{O}(H^2N^2)$ exploration epochs, whereas we only required $\tilde{O}(H^2)$ exploration epochs for the MTSRL algorithm.

As described in the overview, the $\text{MTSRL}^+$ algorithm proceeds in two phases:
\vspace{-2ex}
\begin{enumerate}[label=(\roman*)]
    \item \textbf{Epoch $k \leq K_1$}: the MTSRL algorithm runs the prior-independent Thompson sampling algorithm (\cite{osband2016generalization},\cite{russo2019worst}) RLSVI. This is simply Algorithm \ref{alg:TSRL+} with a conservative prior.

    \item \textbf{Epoch $k > K_1$}: the $\text{MTSRL}^+$ algorithm computes an estimator $\hat{\theta}_h^{(k)}$ of the prior mean $\theta_h^{*}$ using Eq. \ref{equ:methe} (in the same manner as MTSRL algorithm) through $\ddot{\theta}_h^{(j)}$, and an estimator $\hat{\Sigma}_h^{(k)}$ of the prior covariance $\Sigma_h^*$ as 
    \begin{equation}\label{eq:sighat}
    \begin{aligned}
        &\qquad\qquad\quad\hat{\theta}^{(k)}_h = \frac{\sum_{j=1}^{k-1} \ddot{\theta}_h^{(j)}}{k-1},\\
        &\hat{\Sigma}_h^{(k)} = \frac{1}{k-2} \sum_{i=1}^{k-1} \left( \ddot{\theta}_h^{(i)} - \frac{\sum_{j=1}^{k-1} \ddot{\theta}_h^{(j)}}{k-1} \right) \bigg( \ddot{\theta}_h^{(i)}- \frac{\sum_{j=1}^{k-1} \ddot{\theta}_h^{(j)}}{k-1} \bigg)^\top
        - \frac{\sum_{i=1}^{k-1} \ddot{\Sigma}_h^{(k)} }{k-1}.
    \end{aligned}
    \end{equation}

    
    As noted earlier, we then \emph{widen} our estimator to account for finite-sample estimation error:
    \begin{equation}\label{eq:sigw}
        \hat{\Sigma}_h^{w(k)} = \hat{\Sigma}_h^{(k)} + w \cdot I_{M},
    \end{equation}

    where $w$ is widen-parameter, and $I_{M}$ is the $(M)$-dimensional identity matrix.
    
    Then, the $\text{MTSRL}^+$ algorithm runs Thompson Sampling (Algorithm \ref{alg:TSRL+}) with the estimated prior $(\{\hat{\theta}_{h}^{(k)}\}, \{\hat{\Sigma}_h^{w(k)}\}$, i.e., $\text{TSRL}^+(\{\theta^{(k)}_h\},\{\Sigma_h^{w(k)}\}, \lambda_e, L)$. 
    
\end{enumerate}

\begin{algorithm}\label{alg:MTSRLplus}
\caption{$\text{MTSRL}^+$ Algorithm}
\begin{algorithmic}[1]
\setlength{\itemsep}{0pt}
\State \textbf{Input:} The total number of MDPs $K$, the epoch amount of each MDP $N$, the length of each epoch $H$, the noise parameter $\{\beta_n\}_{n=1}^{N}$, widen-parameter $w$, $\tilde{\theta}_{H+1} = 0$.
\For{each MDP epoch $k = 1, \ldots, K$}
    \If{$k \leq K_1$}
        \State Run $\text{TSRL}^+(\{0\},\{\frac{1}{\lambda} \mathbf{I}\}, \lambda_e, N)$.
    \Else
        \State Update $\{\hat{\theta}_h^{(k)}\}$ and $\{\hat{\Sigma}_h^{(k)} \}$ according to Eq. \ref{equ:methe} and \ref{eq:sighat}, 
        \State Compute widened estimate $\{\hat{\Sigma}_h^{w(k)} \}$ according to Eq. \ref{eq:sigw}, 
        \State run $\text{TSRL}^+(\{\theta^{(k)}_h\},\{\Sigma_h^{w(k)}\}, \lambda_e, N)$.
    \EndIf
\EndFor
\end{algorithmic}
\end{algorithm}


\section{Theory: Meta-Regret Analysis}
\label{sec:theory}

We measure performance relative to the \emph{meta-oracle} that knows $(\{\theta_h^*\},\{\Sigma_h^*\})$ and runs $\text{TSRL}^+$ on each task.

\paragraph{Regret and meta-regret.}
For a policy $\mu$ and task $\mathcal{M}^{(k)}$, define the per-task regret over $N$ episodes as
\[
\text{Regret}^{(k)}(N;\mu)=\sum_{n=1}^{N}\mathbb{E}_{\mathcal{M}^{(k)}}\!\left[V_{*,1}^{(k)}(s^{(k)}_{n1})-\sum_{h=1}^{H}r^{(k)}_{nh}\right].
\]
The \emph{meta-regret} of $\mu$ over $K$ tasks is
\[
\mathcal{R}_{K,N}(\mu)=\sum_{k=1}^{K}\mathbb{E}\!\left[\sum_{n=1}^{N}\sum_{h=1}^{H}\big(r^{\text{oracle}(k)}_{nh}-r^{(k)}_{nh}\big)\right],
\]
where $r^{\text{oracle}(k)}_{nh}$ is the reward obtained on task $k$ by the meta-oracle (TSRL$^+$ with the true prior).

We make the following standard assumptions.

\begin{assumption}[Positive-definite prior covariance]
\label{ass:pd}
For all $h\in[H]$, $\lambda_{\min}(\Sigma_h^*)\ge \underline{\lambda}>0$.
\end{assumption}

\begin{assumption}[Bounded features and parameters]
\label{ass:bounded}
For all $(h,s,a)$, $\|\Phi_h(s,a)\|\le \Phi_{\max}$ and $\|\theta_h^*\|\le S$.
\end{assumption}

These assumptions ensure well-posed posteriors and bounded per-step variance, as is standard in linear value-function analyses.

\smallskip
\noindent
\textbf{Known-prior benchmark (oracle).} \label{sec:oracle-bound}
The theorem \ref{thm:oracle} analyzes the Bayes regret of the Meta-oracle policy.

\begin{theorem}[Oracle benchmark]
\label{thm:oracle}
Under Assumptions~\ref{ass:pd}--\ref{ass:bounded}, the regret of running $\text{TSRL}^+$ with the true prior on each task satisfies
\[
\sup_{\{\mathcal{M}^{(k)}\}_{k=1}^{K}} \sum_{k=1}^{K}\text{Regret}^{(k)}(N;\text{TSRL}^+) \;\le\; \tilde{O}\!\left(H^{3}S^{3/2}\sqrt{AN}\,K\right).
\]
\end{theorem}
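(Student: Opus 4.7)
The strategy is to reduce the meta-regret bound to a per-task analysis and then pay a factor of $K$ by linearity. Because the meta-oracle runs $\text{TSRL}^+$ independently on each MDP $\mathcal{M}^{(k)}$ with its true Gaussian prior $(\{\theta_h^*\},\{\Sigma_h^*\})$, the joint expectation decomposes as $\sum_{k=1}^K \mathbb{E}[\text{Regret}^{(k)}(N;\text{TSRL}^+)]$, and each summand can be treated as an independent single-task Bayes regret problem. So the entire theorem reduces to showing that a single execution of $\text{TSRL}^+$ on one linearly parameterized finite-horizon MDP, with the correct Gaussian prior on $\{\theta_h\}$, incurs regret at most $\tilde{O}(H^3 S^{3/2}\sqrt{AN})$.

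I would split the per-task regret into two phases matching the definition of Algorithm~\ref{alg:TSRL+}. In the initialization phase, RLSVI runs until $\lambda_{\min}(V_{nh}^{(k)})\ge \lambda_e$; by the argument that Appendix~\ref{append:proe} is claimed to give (using Assumption~\ref{ass:bounded} together with $\min_{h,s,a}\lambda_{\min}(\Phi_h^\top(s,a)\Phi_h(s,a))\ge \lambda_0$), the stopping time $\mathcal{N}_h^{(k)}$ is $\tilde{O}(1)$ with high probability, contributing at most $\tilde{O}(H)$ to regret since each episode yields per-step reward in $[0,1]$ and there are $H$ stages. In the main phase, the algorithm is posterior sampling with the true Gaussian prior, so by the standard posterior matching identity the expected instantaneous regret equals the expected policy-value gap between the sampled $\tilde{\theta}_{nh}$ and the Bayes-optimal $\theta_h$, which can be bounded by the posterior standard deviation at the visited $(s_{nh},a_{nh})$.

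The core of the single-task analysis then follows the Bayes regret argument for randomized value functions \citep{osband2016generalization,russo2019worst,zanette2020frequentist}: perform a stage-wise decomposition via the Bellman operator, express the per-episode regret as a sum over $h$ of $\Phi_h(s_{nh},a_{nh})(\tilde{\theta}_{nh}-\theta_h)$ terms, and bound each by a multiple of $\|\Phi_h(s_{nh},a_{nh})\|_{\Sigma_{nh}^{TS}}$ after conditioning on concentration of the Gaussian sample (which holds because Assumption~\ref{ass:pd} keeps the posterior covariance well-conditioned). Summing over $n\in[N]$ and applying the elliptical potential lemma to $V_{Nh}^{(k)}$ yields $\tilde{O}(\sqrt{M N})$ per stage, where $M = \dim(\theta_h)$. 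Summing over the $H$ stages and accounting for the propagation of the $V_h \mapsto V_{h+1}$ plug-in errors through the Bellman backup contributes the cubic $H$ factor (one $H$ from stages, one $H$ from the value range, one $H$ from the recursive error accumulation), while invoking Assumption~\ref{ass:bounded} with the worst-case $M=\tilde{O}(SA)$ and a bound $\|\Phi_h\|_\infty \le \sqrt{S}$ yields the $S^{3/2}\sqrt{A}$ dimensional scaling.

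The main obstacle I anticipate is controlling the backward-in-time error propagation when each stage's regression target $\dot{b}_{ih}^{(k)}$ already contains the sampled $\tilde{\theta}_{i,h+1}$, so that concentration of the posterior at stage $h$ is only meaningful conditional on good behavior at stage $h+1$. To handle this I would set up a high-probability event $\mathcal{E}$ on which the posterior means and sampled parameters at every $(n,h)$ lie within confidence ellipsoids of radius $\tilde{O}(\sqrt{H})$ around the true $\theta_h$, then do backward induction on $h$ to show that under $\mathcal{E}$ the $h$-th stage regression noise is subgaussian with variance scaling like $H^2$. A union bound over $(n,h)$ contributes only logarithmic factors absorbed in $\tilde{O}(\cdot)$, and a final multiplication by $K$ produces the stated meta-regret bound $\tilde{O}(H^3 S^{3/2}\sqrt{AN}\,K)$.
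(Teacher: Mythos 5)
Your high-level skeleton matches the paper's: the bound is proved per task and multiplied by $K$, the initialization phase is dispatched via the $\mathcal{N}_h^{(k)}=\tilde O(1)$ argument of Appendix~\ref{append:proe}, and the main phase is an RLSVI-style analysis. But the mechanism you propose for the main phase does not prove the statement as written. The theorem carries a $\sup_{\{\mathcal{M}^{(k)}\}}$, i.e.\ it is a worst-case (frequentist) regret bound for an arbitrary fixed MDP; the ``posterior matching identity'' you invoke only applies when the true MDP is itself a draw from the prior, so it yields a Bayes regret bound and cannot control the term $V_{*,1}-\tilde V_{n,1}$ for an adversarially chosen $\mathcal{M}^{(k)}$. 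The paper instead proves Theorem~\ref{thm:pri} via the \emph{stochastic optimism} route of \citet{russo2019worst}: Lemmas~\ref{lem:core} and~\ref{lem:needcore} show $\mathbb{P}\bigl(\tilde V_{n,1}(s_1)\ge V_{*,1}(s_1)\mid\mathcal{H}_{n-1}\bigr)\ge\Phi(-1)$ pointwise in the MDP, and Lemma~\ref{lem:fin} converts this constant-probability optimism into a regret reduction with the factor $c=\Phi(-1)^{-1}$. Your later appeal to confidence ellipsoids only controls the concentration side $|\tilde V_{n,1}-V_{\mu^n,1}|$, not the optimism side, so the decomposition is incomplete.

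Second, you omit the one genuinely new ingredient relative to vanilla RLSVI: because TSRL's posterior mean is shrunk toward the prior mean $\theta_h^*$, the perturbed Bellman backup carries an extra bias of order $\frac{\nu_h(s,a)}{l_n(h,s,a)+\nu_h(s,a)}H$ at every $(h,s,a)$. The heart of the paper's proof is inequality~\eqref{equ:in}, which shows that the injected noise variance $\beta_n=4\max(1,\overline{\nu})SH^3\log(2HSAn)$ simultaneously dominates the Hoeffding estimation error $\sqrt{e_n(h,s,a)}$ \emph{and} this prior-shrinkage bias; this is where the assumption $\nu_{nh}(s,a)\le\overline{\nu}$ enters and where the $S$ and $H^3$ factors in $\beta_n$ (hence the $S^{3/2}$ and $H^3$ in the final bound) actually originate. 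Relatedly, Theorem~\ref{thm:pri} is proved only for the tabular case $\Phi_h=I$ with diagonal $\Sigma_h^*$, so your elliptical-potential accounting in a general $M$-dimensional feature space, with the informal substitution $M=\tilde O(SA)$ and $\|\Phi_h\|_\infty\le\sqrt{S}$, does not reproduce the paper's derivation of the $H^3S^{3/2}\sqrt{A}$ constants.
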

This result highlights the best possible performance one can achieve with perfect prior knowledge, serving as a benchmark for comparing the MTSRL and MTSRL\(^+\) algorithms, which estimate the prior from data.

\smallskip
\noindent
\textbf{Meta-regret of MTSRL (known covariance) and MTSRL$^{+}$ (unknown covariance).}\label{sec:theory-mtsrl}
Theorems \ref{thm:MTSRL} and \ref{thm:MTSRLplus} provide the meta-regret bounds for the MTSRL and MTSRL\(^+\) algorithms, respectively, characterizing their performance relative to the Meta-oracle policy.
Detailed proofs are presented in Section \ref{append:proMSTRL} and \ref{append:proMSTRL+} in the supplemental material.

\begin{theorem}
\label{thm:MTSRL}
Let $K_0=\tilde{O}(H^2)$. Under Assumptions~\ref{ass:pd}--\ref{ass:bounded}, the meta-regret of Algorithm~\ref{alg:MTSRL} satisfies
\[
\mathcal{R}_{K,N}(\text{MTSRL})=
\begin{cases}
\tilde{O}\!\left(H^{3}S^{3/2}\sqrt{AN}\,K\right), & K\le K_0,\\[3pt]
\tilde{O}\!\left(H^{4}S^{3/2}\sqrt{AN\,K}\right), & K>K_0.
\end{cases}
\]
\end{theorem}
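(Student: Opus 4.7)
The plan is to decompose the meta-regret of Algorithm~\ref{alg:MTSRL} according to its two phases. Phase~1 consists of the $K_0=\tilde{O}(H^2)$ exploration tasks on which MTSRL executes $\text{TSRL}^+$ with the conservative prior $(\{0\},\{\tfrac{1}{\lambda}I\})$, and Phase~2 consists of the remaining tasks on which MTSRL plugs in the learned prior mean $\{\hat{\theta}_h^{(k)}\}$ with the known covariance $\{\Sigma_h^*\}$. Writing the per-task meta-regret as (MTSRL per-task regret)$-$(oracle per-task regret), I will upper-bound the Phase-1 contribution by the single-task regret bound of Theorem~\ref{thm:oracle}, and the Phase-2 contribution by a prior-alignment change-of-measure coupling between the learned-prior posterior and the true-prior posterior of the meta-oracle.

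For Phase~1, each of the first $K_0$ tasks contributes at most the single-task regret $\tilde{O}(H^3 S^{3/2}\sqrt{AN})$ of $\text{TSRL}^+$ with the uninformative prior, since the oracle reward is at most the optimal value and the meta-regret per task is dominated by the MTSRL per-task regret. Summing over $K_0=\tilde{O}(H^2)$ yields $\tilde{O}(H^5 S^{3/2}\sqrt{AN})$, which is exactly the first branch of the theorem when $K\le K_0$ and is absorbed by the second-branch bound whenever $K\gtrsim H^2$, because $H^5 \lesssim H^4\sqrt{K}$.

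For Phase~2, the central step is the prior-alignment argument. Since MTSRL and the meta-oracle run identical algorithmic updates on task $k$ and differ only in whether the prior mean is $\hat{\theta}_h^{(k)}$ or $\theta_h^*$ (with the same $\Sigma_h^*$), I couple their trajectory laws via the Radon-Nikodym derivative between $\mathcal{N}(\hat{\theta}_h^{(k)},\Sigma_h^*)$ and $\mathcal{N}(\theta_h^*,\Sigma_h^*)$ at every stage. Pinsker's inequality then bounds the induced total variation by the KL divergence $\tfrac{1}{2}\sum_h \|\hat{\theta}_h^{(k)}-\theta_h^*\|^2_{(\Sigma_h^*)^{-1}}$, yielding a per-task regret gap of the form $\tilde{O}(H^3 S^{3/2}\sqrt{AN})\cdot \tilde{O}(H/\sqrt{k-1})$, where one factor of $H$ arises from summing KL contributions across stages and from propagating the prior mismatch through the Bellman backups. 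The concentration of $\hat{\theta}_h^{(k)}$ is handled separately: it averages $k-1$ task-level OLS estimates $\dot{\theta}_h^{(j)}$ that, owing to the $\text{TSRL}^+$ initialization guaranteeing $\lambda_{\min}(V_{Nh}^{(j)})\ge \lambda_e$, are unbiased for $\theta_h^{(j)}\sim \mathcal{N}(\theta_h^*,\Sigma_h^*)$ up to sub-leading OLS noise. Standard sub-Gaussian concentration then gives $\|\hat{\theta}_h^{(k)}-\theta_h^*\|=\tilde{O}(1/\sqrt{k-1})$ uniformly in $h$ with high probability, and summing $\sum_{k=K_0+1}^{K} 1/\sqrt{k-1} = \tilde{O}(\sqrt{K})$ produces the Phase-2 bound $\tilde{O}(H^4 S^{3/2}\sqrt{ANK})$ claimed by the second branch.

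The principal obstacle will be making the change-of-measure bound tight in the multi-stage RL setting. Unlike the $H{=}1$ bandit case of \citet{bastani2022meta}, the prior-mean mismatch at stage $h$ enters the posterior update used in randomized value iteration at every episode, and the task-level OLS targets $\dot{b}_{ih}^{(j)}$ themselves depend on downstream estimates $\dot{\theta}_{h+1}^{(j)}$, so the alignment must be performed jointly across $h=1,\ldots,H$ while preventing a blow-up worse than the single extra factor of $H$ claimed. Simultaneously controlling this joint coupling, the low-probability event that the warm-up length $\mathcal{N}_h^{(j)}$ fails to be $\tilde{O}(1)$, and the backward error propagation of the OLS scheme is the main technical content of the proof.
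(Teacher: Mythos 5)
Your overall architecture (Phase-1 bound by $K_0$ copies of the single-task rate, Phase-2 bound by a coupling whose strength improves as $\hat{\theta}_h^{(k)}\to\theta_h^*$ at rate $1/\sqrt{k}$, then $\sum_k k^{-1/2}=O(\sqrt{K})$) matches the paper, and your treatment of Phase 1 and of the concentration of $\hat{\theta}_h^{(k)}$ is essentially the paper's. However, the central Phase-2 mechanism you propose — bounding the discrepancy between the two trajectory laws by Pinsker's inequality applied to the KL divergence between the priors $\mathcal{N}(\hat{\theta}_h^{(k)},\Sigma_h^*)$ and $\mathcal{N}(\theta_h^*,\Sigma_h^*)$ — does not deliver the per-task bound you claim. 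A total-variation argument gives an \emph{additive} control $|\mathbb{E}_P[f]-\mathbb{E}_Q[f]|\le \|f\|_\infty\cdot \mathrm{TV}(P,Q)$, where $f$ is the cumulative reward over the task, so $\|f\|_\infty = O(NH)$; with $\mathrm{TV}=O(1/\sqrt{k})$ this yields a per-task meta-regret of $O(NH/\sqrt{k})$ and a total of $\tilde{O}(HN\sqrt{K})$, which has the wrong $N$-dependence ($N$ instead of $\sqrt{N}$). The product form you assert, $\tilde{O}(H^3S^{3/2}\sqrt{AN})\cdot\tilde{O}(H/\sqrt{k})$, requires a \emph{multiplicative} relationship between the two regrets, and Pinsker does not give that.

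The paper obtains the multiplicative form by a different coupling: both algorithms take identical actions during the $\mathcal{N}_k=\tilde{O}(1)$ warm-up episodes of task $k$ (so they observe the same covariates), and the change of measure is performed on the \emph{artificial Gaussian noise} $z_{ih}$ injected by randomized value iteration during those episodes. There is an explicit one-to-one map between noise realizations under which the two posteriors coincide \emph{exactly} at episode $\mathcal{N}_k+1$ (the covariances agree automatically since $\Sigma_h^*$ is known), after which the two policies are identical in law. The likelihood ratio of the aligned noise is shown to be at most $\exp\bigl(O(H\mathcal{N}_k\sqrt{M\log(\cdot)/k})\bigr)\le 1+O(1/\sqrt{k})$ on the event that the noise norm is moderate, so the MTSRL regret is bounded by $(1+O(1/\sqrt{k}))$ times the \emph{oracle's} regret plus a tail term of order $H^2/K$; this is what produces the $\tilde{O}(H^4S^{3/2}\sqrt{AN}/\sqrt{k})$ per-task meta-regret. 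To repair your proof you would need to replace the Pinsker step with this exact-alignment/likelihood-ratio argument (or some other device giving a multiplicative bound relative to the oracle's regret rather than the raw reward range), and note that the definition of $K_0=\tilde{O}(H^2)$ is precisely what guarantees the exponent in the likelihood ratio is at most $1$ for all $k>K_0$.
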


\begin{theorem}
\label{thm:MTSRLplus}
Let $K_1=\tilde{O}(H^2N^2)$ and define $\hat{\Sigma}_h^{w(k)}$ as in \eqref{eq:sighat}--\eqref{eq:sigw}. Under Assumptions~\ref{ass:pd}--\ref{ass:bounded}, the meta-regret of Algorithm~\ref{alg:MTSRLplus} satisfies
\[
\mathcal{R}_{K,N}(\text{MTSRL}^{+})=
\begin{cases}
\tilde{O}\!\left(H^{3}S^{3/2}\sqrt{AN}\,K\right), & K\le K_1,\\[3pt]
\tilde{O}\!\left(H^{4}S^{3/2}\sqrt{AN^{3}K}\right), & K>K_1.
\end{cases}
\]
\end{theorem}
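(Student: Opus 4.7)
}

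The plan is to split the meta-regret into the exploration phase ($k\le K_1$) and the post-exploration phase ($k>K_1$) and bound each using the single-task analysis of Theorem \ref{thm:oracle} together with a prior-alignment argument tailored to the widened learned prior. First I would write
\[
\mathcal{R}_{K,N}(\text{MTSRL}^{+}) \;=\; \sum_{k=1}^{K_1}\Delta_k \;+\; \sum_{k=K_1+1}^{K}\Delta_k,
\]
where $\Delta_k$ denotes the per-task regret gap between MTSRL$^{+}$ and the meta-oracle on task $k$. For $k\le K_1$ the algorithm runs prior-independent RLSVI, which is $\text{TSRL}^{+}$ with a conservative prior; its per-task regret is bounded by $\tilde{O}(H^{3}S^{3/2}\sqrt{AN})$ by the same argument as in Theorem \ref{thm:oracle}. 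Summing yields the first case and also the contribution $\tilde{O}(H^{3}S^{3/2}\sqrt{AN}\,K_1)$ to the second case, which will be dominated by the post-exploration term once $K>K_1$.

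Next I would control the two prior estimation errors driving the post-exploration analysis. Since the estimators $\ddot{\theta}_h^{(j)}$ are computed only on the initialization window $[\mathcal{N}_j]$, Assumption \ref{ass:bounded} and the minimum-eigenvalue condition $\lambda_{\min}(V^{(j)}_{\mathcal{N}_j h})\ge\lambda_e$ give that, conditional on $\theta_h^{(j)}$, each $\ddot{\theta}_h^{(j)}$ is an unbiased bounded-variance estimator, and the $\ddot{\theta}_h^{(j)}$ are mutually independent across $j$. A vector Bernstein bound then gives
\[
\|\hat{\theta}_h^{(k)}-\theta_h^{*}\|=\tilde{O}(1/\sqrt{k}), \qquad \|\hat{\Sigma}_h^{(k)}-\Sigma_h^{*}\|_{\mathrm{op}}=\tilde{O}(1/\sqrt{k}),
\]
where the bias subtraction $(k-1)^{-1}\sum_i \ddot{\Sigma}_h^{(i)}$ in \eqref{eq:sighat} is what makes $\hat{\Sigma}_h^{(k)}$ a consistent estimator of $\Sigma_h^{*}$. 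I would then calibrate the widening parameter $w=\tilde{\Theta}(1/\sqrt{k})$ (replaced at run time by a slowly-decaying deterministic schedule) so that on a high-probability event $\hat{\Sigma}_h^{w(k)}\succeq\Sigma_h^{*}$ while $\|\hat{\Sigma}_h^{w(k)}-\Sigma_h^{*}\|_{\mathrm{op}}=\tilde{O}(1/\sqrt{k})$, guaranteeing both that the learned-prior posterior is \emph{no less exploratory} than the oracle posterior and that the mismatch remains small. The threshold $K_1=\tilde{O}(H^2N^2)$ is chosen so that this covariance-widening scale has become small enough for the alignment step below.

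The core step is a prior-alignment change of measure, extending the $H{=}1$ coupling of \cite{bastani2022meta} to finite-horizon RL. Conditional on the estimation event above, on each task $k>K_1$ I would couple the posterior at stage $h$ under $(\hat{\theta}_h^{(k)},\hat{\Sigma}_h^{w(k)})$ to the oracle posterior at stage $h$ under $(\theta_h^{*},\Sigma_h^{*})$ through a Gaussian Radon--Nikodym factor, whose KL is controlled by $\|\hat{\theta}_h^{(k)}-\theta_h^{*}\|_{(\Sigma_h^{*})^{-1}}^{2}$ plus a $\log\det$-term bounded via $\|\hat{\Sigma}_h^{w(k)}-\Sigma_h^{*}\|_{\mathrm{op}}$. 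Because the widening ensures $\hat{\Sigma}_h^{w(k)}\succeq\Sigma_h^{*}$, an importance-sampling identity lets me invoke the oracle regret bound of Theorem \ref{thm:oracle} on the learned-prior run at multiplicative cost $1+\tilde{O}(Nw+N/\sqrt{k})$ per task, where the factor $N$ inside the multiplier is the number of posterior draws per task. Applying this at each of the $H$ stages through backward induction on $h$ (using that the Bellman targets $\ddot{b}_{ih}^{(j)}$ depend only on $\ddot{\theta}_{h+1}^{(j)}$, already controlled in the inductive step) gives per-task excess regret $\tilde{O}(H^{3}S^{3/2}\sqrt{AN}\cdot N/\sqrt{k})$, and an additional $H$ factor from the backward propagation yields $\tilde{O}(H^{4}S^{3/2}\sqrt{AN}\cdot N/\sqrt{k})$.

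Finally, summing over $k=K_1+1,\dots,K$ with $\sum_{k=K_1+1}^{K}1/\sqrt{k}=O(\sqrt{K})$ gives total post-exploration regret $\tilde{O}(H^{4}S^{3/2}\sqrt{AN^{3}K})$, which absorbs the $\tilde{O}(H^{3}S^{3/2}\sqrt{AN}\,K_1)=\tilde{O}(H^{5}S^{3/2}\sqrt{A}\,N^{5/2})$ exploration cost in the $K>K_1$ regime since $K_1=\tilde{O}(H^2N^2)$ is exactly the crossover. The main obstacle I expect is the simultaneous calibration of $w$, $K_1$, and the $H$-stage induction in the alignment step: unlike the bandit case, the unbiasedness and cross-task independence of $\ddot{\theta}_h^{(j)}$ have to be established inductively in $h$ because the Bellman target at stage $h$ uses $\ddot{\theta}_{h+1}^{(j)}$, and the change of measure must be applied to all $H$ stages jointly without the KL-costs compounding beyond a polynomial factor. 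This is precisely where the widening rate $w=\tilde{\Theta}(1/\sqrt{k})$ and the exploration length $K_1=\tilde{O}(H^2N^2)$ are tuned to exactly counter the worst-case Bellman propagation of the finite-sample covariance error.
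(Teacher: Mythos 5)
Your proposal follows essentially the same route as the paper's proof: a clean event giving $\tilde{O}(1/\sqrt{k})$ concentration of both the estimated prior mean and (bias-corrected) covariance, widening calibrated so that $\hat{\Sigma}_h^{w(k)}\succeq\Sigma_h^*$ (which is exactly what the paper uses to keep the Gaussian density ratios bounded), a change-of-measure/prior-alignment step whose multiplicative cost is $1+\tilde{O}(N/\sqrt{k})$ per task with the extra $N$ arising from the per-episode posterior draws, and a final $\sum_k 1/\sqrt{k}=O(\sqrt{K})$ summation. The only cosmetic difference is that you merge the mean-alignment and covariance-mismatch controls into one Radon--Nikodym factor, whereas the paper treats them as two separate steps (its Lemmas on alignment and on the post-alignment density-ratio product), but the substance is the same.
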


For small numbers of tasks ($K \lesssim \tilde{O}(H^2)$ for MTSRL; $K \lesssim \tilde{O}(H^2N^2)$ for MTSRL$^+$), our meta-regret matches the prior-independent Thompson sampling rate, as shown in Theorems \ref{thm:MTSRL} and \ref{thm:MTSRLplus}, reflecting the exploration phase. As $K$ grows, the learned prior improves performance, yielding the stated $\tilde{O}$ dependencies. These results formalize that prior learning is particularly advantageous in \emph{experiment-rich} regimes.


\section{Details about TSRL algorithm}\label{sec:TSRL}

We next detail the TSRL algorithm and its theoretical bounds. While TSRL can be tightened to a $\sqrt{HS}$ bound \citep{agrawal2021improved}, this refinement is beyond our scope and omitted here.

\subsection{The TSRL algorithm}
Let $H_n = (s_{11}, a_{11}, r_{11}, \dots, s_{n-1,H}, a_{n-1,H}, r_{n-1,H})$ denote the history of observations made prior to period $n$. Observing the actual realized history $H_n$, the algorithm computes the posterior  $\mathcal{N}\left( \theta^{TS}_{nh},\Sigma^{TS}_{nh} \right), h \in [H]$ for round $n$. Specifically, we define $b_{ih}=r_{ih} + \max_{\alpha} \left( \Phi_{h+1} \tilde{\theta}_{i, h+1} \right) (s_{i, h+1}, \alpha)$ for $h < H$, and $b_{ih}=r_{ih}$ for $h = H$.
The posterior at period $n$ is:
    \begin{align*}
         \theta^{TS}_{nh} \leftarrow  &\left( \frac{1}{\beta_n} \sum_{i=1}^{n-1}\Phi_{h}^\top(s_{ih},a_{ih})\Phi_{h}(s_{ih},a_{ih}) + \Sigma_h^{*-1} \right)^{-1} \bigg(\frac{1}{\beta_n}\sum_{i=1}^{n-1}\Phi_{h}^\top(s_{ih},a_{ih})b_{ih} + \Sigma_h^{*-1}\theta^*_h\bigg) \\
        \Sigma^{TS}_{nh} \leftarrow &\left( \frac{1}{\beta_n} \sum_{i=1}^{n-1}\Phi_{h}^\top(s_{ih},a_{ih})\Phi_{h}(s_{ih},a_{ih}) +  \Sigma_h^{*-1} \right)^{-1}
    \end{align*}


\begin{algorithm}[t]
    \caption{TSRL($\{\theta^{*}_{h}\}$,$\{\Sigma^{*}_{h}\}$,$n$):Known-Prior \underline{T}hompson \underline{S}ampling in \underline{RL}}
    \begin{algorithmic}[1]
    \setlength{\itemsep}{0pt}
        \State \textbf{Input:} $\left\{\Phi_1(s_{i1}, a_{i1}), r_{i1}, \ldots,\Phi_{H}(s_{iH}, a_{iH}), r_{iH}\right\}_{i < n}$, the noise parameter $\{\beta_n\}_{n=1}^{N}$,the prior mean vectors $\{\theta^{*}_{h}\}$ and covariance matrixs $\{\Sigma^{*}_{h}\}$,  $\tilde{\theta}_{H+1} = 0$.
        \For{$n = 1, \ldots, N$}
            \For{$h = H, \ldots, 1$}
                \State Compute the posterior $\theta^{TS}_{nh},\Sigma^{TS}_{nh}$
                \State Sample $\tilde{\theta}_{nh} \sim \mathcal{N}\left( \theta^{TS}_{nh},\Sigma^{TS}_{nh} \right)$ from Gaussian posterior
            \EndFor
            \State Observe $s_{n1}$
            \For{$h = 1, \ldots, H-1$}
                \State Sample $a_{nh} \in \arg\max\limits_{\alpha \in \mathcal{A}} \left( \Phi_h \tilde{\theta}_{nh} \right)(s_{nh}, \alpha)$
                \State Observe $r_{nh}$ and $s_{l, h+1}$
            \EndFor
            \State Sample $a_{nH} \in \arg\max\limits_{\alpha \in \mathcal{A}}\left( \Phi_H \tilde{\theta}_{nH} \right)(s_{nH}, \alpha)$
            \State Observe $r_{nH}$
        \EndFor
    \end{algorithmic}
\label{alg_known}
\end{algorithm}




To delve into the motivation of the algorithm, we offer both a mathematical interpretation and an intuitive explanation in Appendix \ref{sec:exT}.

\subsection{TSRL: Bayesian Regret Analysis}

We impose the following standard assumption.
\begin{assumption}
For $\forall (n,h,s,a)$, when $\Sigma^{*}_{h} = \text{diag}(\sigma_h^{*2}(s,a))_{s,a}$, and $\sigma^{*2}_{h}(s,a)/ \beta_{n}= \nu_{nh}(s,a)$, we have: $\nu_{nh}(s,a) \le \overline{\nu}$
\end{assumption}

This assumption is intended to constrain the influence of the prior. With this assumption in place, we now proceed to establish the corresponding results.

\begin{theorem}\label{thm:pri}
If Algorithm \ref{alg_known} is executed with $\Phi_h = I$ for $h = 1, ..., H$, $\Sigma^{*}_{h} = \text{diag}(\sigma_h^{*2}(s,a))_{s,a}$, 
then for a tuning parameter sequence $\{\beta_n\}_{n \in \mathbb{N}} $ with $\beta_n= 4 \max(1,\overline{\nu}) SH^3 log(2HSAn)$:
\vspace{-1.5ex}
\[
\sup_{\mathcal{M}} \text{Regret}(N;\text{TSRL}) \;\le\; \tilde{O}\!\left(H^{3}S^{3/2}\sqrt{AN}\right).
\vspace{-2ex}
\]
\end{theorem}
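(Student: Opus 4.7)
The plan is to adapt the Bayesian regret analysis of RLSVI \citep{osband2016generalization,russo2019worst} to the setting of an informative Gaussian prior. In the tabular regime ($\Phi_h=I$), the diagonal structure of $\Sigma_h^*$ makes the Gaussian–Gaussian posterior factorize across state–action pairs, reducing each $(s,a)$-update to scalar conjugate Gaussian inference on $\widetilde{\theta}_{nh}(s,a)$. The tuning $\beta_n=4\max(1,\overline{\nu})\,SH^3\log(2HSAn)$ is chosen so that the injected randomization dominates both the prior regularization (via the assumption $\nu_{nh}(s,a)\le\overline{\nu}$) and the sub-Gaussian noise of the Bellman target $b_{ih}\in[0,H]$; this is what makes the learned-prior posterior behave like a flat-prior RLSVI posterior up to constants.

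\textbf{Regret decomposition and anti-concentration.} I would write the per-episode Bayes regret as
\begin{equation*}
\mathbb{E}\!\left[V_{*,1}(s_{n1})-V^{\mu_n}_1(s_{n1})\right]
= \underbrace{\mathbb{E}\!\left[V_{*,1}-\widetilde{V}_{n1}\right]}_{\text{optimism gap}}
+ \underbrace{\mathbb{E}\!\left[\widetilde{V}_{n1}-V^{\mu_n}_1\right]}_{\text{estimation gap}},
\end{equation*}
where $\widetilde{V}_{nh}(s):=\max_a(\Phi_h\widetilde{\theta}_{nh})(s,a)$. Gaussian anti-concentration then gives $\mathbb{P}(\widetilde{V}_{n1}\ge V_{*,1}\mid H_n)\ge c_0$ for some absolute constant, so $\mathbb{E}[(V_{*,1}-\widetilde{V}_{n1})^+]$ is controlled by a constant multiple of the posterior standard deviation summed along the sampled trajectory.

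\textbf{Bellman rollout and posterior concentration.} The estimation gap telescopes via the standard Bellman identity:
\begin{equation*}
\widetilde{V}_{n1}(s_{n1})-V^{\mu_n}_1(s_{n1})
=\sum_{h=1}^{H}\mathbb{E}_{\mu_n}\!\left[\widetilde{Q}_{nh}(s_h,a_h)-r_{nh}-\widetilde{V}_{n,h+1}(s_{h+1})\,\middle|\,s_{n1}\right].
\end{equation*}
Using the closed-form posterior-mean expression and a self-normalized / Azuma-type concentration on $b_{ih}$, I would derive $|\widetilde{\theta}_{nh}(s,a)-Q^{*}_h(s,a)|\lesssim\sqrt{\beta_n/N_{nh}(s,a)}$ on a high-probability event, where $N_{nh}(s,a)$ is the cumulative visit count. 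Chaining backward in $h$ then bounds each Bellman residual by the per-$(s,a)$ posterior standard deviation.

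\textbf{Aggregation and main obstacle.} A standard tabular pigeonhole/potential argument gives $\sum_{n=1}^{N} 1/\sqrt{N_{nh}(s_{nh},a_{nh})}\le\widetilde{O}(\sqrt{SAN})$ for each $h$; summing over the $H$ stages and multiplying by the per-step envelope $O(H\sqrt{\beta_N})$ coming from value propagation and anti-concentration yields, after substituting $\beta_N=\widetilde{O}(SH^3)$, the advertised $\widetilde{O}(H^3 S^{3/2}\sqrt{AN})$ rate. The hardest step will be preserving anti-concentration under the informative prior: an off-center $\theta_h^*$ could in principle bias samples away from the optimistic direction, and it is precisely the assumption $\nu_{nh}(s,a)\le\overline{\nu}$ that caps the prior's pull on the posterior so that constant-probability optimism still holds. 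A secondary challenge is the $H$-fold propagation of sampled value functions along the rollout without accruing extra factors of $H$ or $S$, which I would handle through a law-of-total-variance bookkeeping of Bellman residuals combined with a union bound over all $(h,s,a)$ tuples calibrated by the $\log(2HSAn)$ factor inside $\beta_n$.
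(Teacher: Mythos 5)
Your proposal follows essentially the same route as the paper's proof: the same optimism-gap/estimation-gap decomposition, the same value-difference (Bellman telescoping) lemma, constant-probability optimism via Gaussian anti-concentration with the key observation that the bound $\nu_{nh}(s,a)\le\overline{\nu}$ and the choice of $\beta_n$ let the injected noise dominate the prior's pull (this is exactly the paper's inequality $HS(\sqrt{e}+\tfrac{\nu}{l+\nu}H)^2\le \beta/(l+\nu)$), and the same concentration-plus-pigeonhole aggregation yielding $\tilde{O}(H^3S^{3/2}\sqrt{AN})$. The sketch is consistent with the paper's argument and identifies the genuinely novel step correctly.
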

The proof is given in Section \ref{append:proof_of_alg_known} in the supplemental material. When the prior for $\sigma^{2}_h(s,a)$ is too small (e.g. $\nu \to 0$), the prior dominates and the observed data becomes meaningless. Conversely, if $\beta$ is too small(e.g. $\nu \to \infty$), reducing the algorithm to an unperturbed version that ignores the prior.

\section{Simulation}

In this section, we validate our theoretical results through simulations with a sequential recommendation engine. We empirically compare the performance of our proposed algorithms against prior-independent methods and bandit meta-learning algorithms, focusing on both meta-regret and Bayes regret. The results demonstrate that our meta-learning approach significantly enhances performance.

\textbf{Model.}
An agent sequentially recommends up to $P (\leq \overline{P})$ products from $Z = \{1, 2, \ldots, \overline{P}\}$ to $K$ customers. For customer $k$, let the set of observed products be $\tilde{Z}^{(k)} \subseteq Z$. For each porduct $n \in \tilde{Z}^{(k)}$, $x_n \in \{-1, +1\}$ denotes $\{$dislike, like$\}$; for $n \notin \tilde{Z}^{(k)}$, $x_n = 0$. The probability that customer $k$ likes a new product $a \notin \tilde{Z}$ follows a logistic model:
\begin{equation}\label{equ:sim}
    \mathbb{P}(a|x) = 1 / (1 + \exp(-[\beta_a^{(k)} + \sum_n \gamma_{an}^{(k)}x_n])).
\end{equation}
The agent aims to maximize total likes per customer. It does not know $p(a|x)$ and must learn parameters $\beta^{(k)}$, $\gamma^{(k)}$ through interaction across customers. Each customer forms $N$ episode of horizon H = P with a cold start($\tilde{Z}^{(k)} = \emptyset$). In simulations, $\beta_a^{(k)} = 0$ for all $a$, and $\gamma_{an}^{(k)} \sim N(0,c^2)$, independently.

The state space size is $|S| = |\{-1, 0, +1\}|^H = 3^{P}$, so generalization is essential. We use basis functions $\phi_i(x, a) = \mathbb{1}\{a = i\}$ and $\phi_{ij}(x, a) = x_j \mathbb{1}\{a = i\}$ for $\forall 1 \leq i,j, a \leq \overline{P}$. At period $h$ we form $\Phi_h = (( \phi_i)_i, (\phi_j)_j)$; the function class dimension is $M = \overline{P}^2 +  \overline{P}$, exponentially smaller than $|S|$, though generally misspecified.

\textbf{Experimental Setting.}~We compare: (i) RLSVI without priors (prior-free approach) and (ii)\underline{M}eta \underline{T}hompson \underline{S}ampling in \underline{B}an\underline{d}it algorithm, i.e MTSBD(Appendix \ref{sec:MTSBD}). (iii) our $\text{MTSRL}^{+}$ algorithm. Two practical misspecifications are considered:
\begin{enumerate}[noitemsep,topsep=0pt,parsep=5pt]
    \item Feature misspecification: the true Q-function may lie outside span($\Phi_{h}$).
    \item Prior misspecification: we assume a Gaussian prior on  $\gamma$ rather than directly on $\theta_{h}$, so the implied prior on $\theta_{h}$ need not be Gaussian.
\end{enumerate}
These two forms of misspecification simulate real-world scenarios and further demonstrate the robustness of our algorithm. We use the true $\gamma$ to compute the corresponding true $\theta_{h}^{(k)}$ and its Gaussian-assumed prior as the meta oracle.

\textbf{Parameter settings.} $K=100$, $N=200$, $\overline{P}=10$, $H=P=5$, $c=2$, and algorithm hyperparameters: $\lambda=0.2$, $\lambda_e=2$ , $w = 1$ and $\beta_{n}=10^{-3}n$, $N_1 = 5$. Each MDP is solved exactly to compute regret. Results are averaged over 10 random instances, each with 10 simulation runs. 

\textbf{Results.}~The following figure \ref{fig:2} presents the results for both subfigures, where the function class dimension is set to  \( M = 10 \) and the x-axis represents the number of customers \( K \) in each case. The left panel shows the cumulative meta-regret for four algorithms: prior-free meta-learning, MTSBD, MTSRL\(^+\), and meta oracle. The right panel presents the corresponding Bayes regret for these algorithms. 

\begin{figure}[t] 
    \centering 
    \includegraphics[width=1\textwidth]{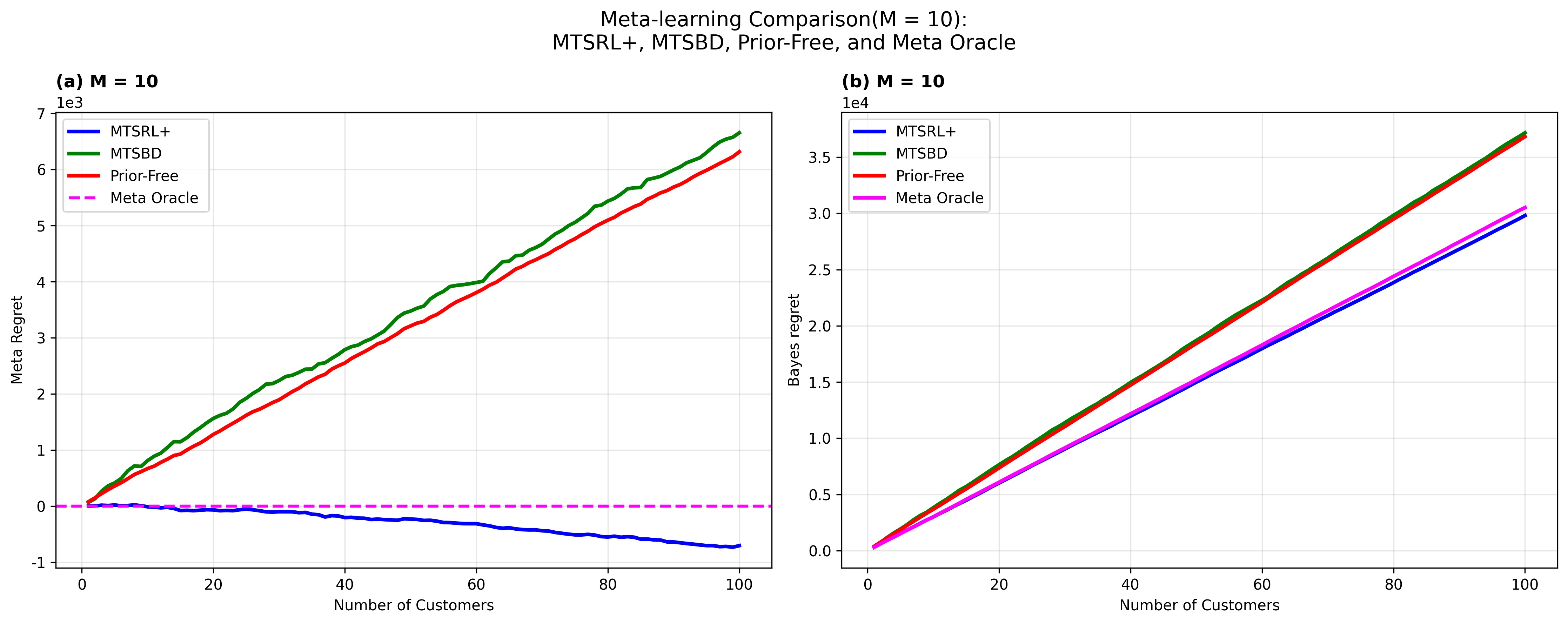} 
    \caption{Comparison between Algorithms}
    \label{fig:2} 
\end{figure}

As expected \emph{(left panel)}, the prior-independent method shows meta-regret growing roughly linearly with \( K \), which aligns with treating customers independently. In contrast, $\text{MTSRL}^{+}$ quickly drives the meta-regret to near zero after the initial exploration phase, effectively learning the prior—and in our runs, even slightly outperforming the meta-oracle! We attribute this to: (i) computational error, arising because the true prior \( \theta_{h} \) is not prescribed directly, but is estimated indirectly via OLS regression based on the computed \( Q_{h}(s,a) \) values (from \( Q_{h}(s,a) = \Phi_{h}(s,a)\theta_{h} \)), which introduces error; and (ii) the widening step in \( \text{MTSRL}^{+} \), which accelerates meta-learning.

Bandit meta-learning (MTSBD) initially outperforms the prior-independent approach by quickly learning a strong myopic policy. However, it is eventually overtaken as the prior-independent method accumulates data to learn richer multi-period policies.

For Bayes regret \emph{(right panel)}, the results more clearly show that the performance of \( \text{MTSRL}^{+} \) and the meta-oracle are comparable, while the performance of the bandit meta-learning algorithm is similar to that of the prior-independent algorithm. At \( K = 200 \), prior-independent Thompson Sampling exhibits 32\% higher Bayes regret than \( \text{MTSRL}^{+} \). These results highlight the advantage of learning shared structure in experiment-rich recommendation environments.

\section{Conclusion}
We proposed \emph{MTSRL} and \emph{MTSRL$^+$}, Thompson-style algorithms for meta-RL with Gaussian priors over $Q^*$-parameters. Using OLS regression, cross-task averaging, and covariance widening, they extend posterior sampling beyond single-task RL and bandit meta-learning.  

Our analysis introduced a \emph{prior-alignment} technique that couples learned and oracle posteriors, giving the first \emph{meta-regret guarantees} for Thompson sampling in finite-horizon RL. The bounds recover prior-independent rates when tasks are few, and improve in experiment-rich regimes. Simulations confirm robustness and gains under misspecification.  

This work establishes \emph{prior-aligned meta-RL} as a principled way to exploit shared structure across tasks. Alignment and widening techniques may benefit Bayesian RL more broadly. Future work includes nonlinear function classes, adaptive horizons, and large-scale applications.


\newpage
\bibliographystyle{informs2014}
\bibliography{bib/ref}
\clearpage

\ECSwitch



\begin{APPENDICES}


\begin{center}
\large{Online Appendix for "Prior-Aligned Meta-RL"}
\end{center}

\section{Mathematical Explanation and Intuitive Understanding of 'TSRL'}\label{sec:exT}

For simplicity, we define some notation. For empirical estimates. We define $ l_n(h, s, a) = \sum_{i=1}^{n-1} \mathbb{I}\{(s_{ih}, a_{ih}) = (s, a)\} $ to be the number of times action $ a $ has been sampled in state  $s$ , period $ h $. For every tuple $(h, s, a)$ with $ l_n(h, s, a) > 0 $, we define the empirical mean reward and empirical transition probabilities up to period $ h $ by

\begin{equation*}
\hat{R}_{h, s, a} = \frac{1}{l_n(h, s, a)} \sum_{i=1}^{n-1} \mathbb{1}\{(s_{ih}, a_{ih}) = (s, a)\} r_{ih}    
\end{equation*}

\begin{equation*}
\hat{P}_{h, s, a}(s') = \frac{1}{l_n(h, s, a)} \sum_{i=1}^{n-1} \mathbb{1}\{(s_{ih}, a_{ih}, s_{i,h+1}) = (s, a, s')\} \quad \forall s' \in \mathcal{S}.
\end{equation*}

If $(h, s, a)$ was never sampled before episode $ n $, we define $\hat{R}_{h, s, a} = 0$ and $\hat{P}_{h, s, a} = 0 \in \mathbb{R}^S$. And $\hat{M}^{(k)}=(\mathcal{S},\mathcal{A},H,\hat{P}^{(k)},\hat{\mathcal{R}}^{(k)},s_1)$\

\subsection{Posterior estimation Given a Known Prior}\label{sec:PEKP}
    For convince for explanation, we let $H_{nh} = (s_{1h}, a_{1h}, r_{1h}, \dots, s_{n-1,h}, a_{n-1,h}, r_{n-1,h})$, for the data select from timestep h in every epoch before. It's easy for us to use the Bayes rules $\Pr(\theta_h|H_{nh}) \propto \Pr(H_{nh}|\theta_h) \Pr(\theta_h)$
    
    At first, we have a know prior $\theta_{h} \sim \mathcal{N}(\theta^*_h, \Sigma^*_h) \quad \text{for each h} $, so we have:
    \begin{align*}
        \Pr(\theta_h)\propto \exp\left\{ -\frac{1}{2}\left(\theta_h-\theta
        _h^*\right)^\top\Sigma_h^{*-1}\left(\theta_h-\theta_h^*\right)\right\}
    \end{align*}
         
    and  artificially add gaussian noise from $r_h$ to $r_h + z_h$ ,here $\forall h,\, z_h \sim \mathcal{N}(0, \beta_n)$ i.i.d , for 
    when know $\{s_h,a_h,s_{h+1}\}$, we have TD error:$Q_{h}^*(s_h,a_h) = r_h + \max_{a}Q_{h+1}^*(s_{h+1},a) + z_h$. For computational convenience, we aggregate it into matrix form:$A = (\Phi_h(s_{1h}, a_{1h})^\top,\cdots,\Phi_h(s_{n-1,h}, a_{n-1,h})^\top)^\top \in \mathbb{R}^{(n-1) \times M}$,  $b =(b_1,\cdots,b_{n-1})^\top \in \mathbb{R}^{n-1}$, so 
    \begin{align*}
        \Pr(H_{nh}|\theta_h)\propto \exp\left\{ -\frac{1}{2}\left(b-A\theta_h\right)^\top(\beta_{n}I_{n-1})^{-1}\left(b-A\theta_h\right)\right\}
    \end{align*}

    Specifically, we use the update rule for Bayesian linear regression \cite{bishop2006pattern} in value iteration.
    so we have
    \begin{align*}
    &\Pr(\theta_h|H_{nh}) \\
    &\propto \Pr(H_{nh}|\theta_h) \Pr(\theta_h) \\
    &\propto \exp\left\{-\frac{1}{2}\left( \theta_h^\top\Sigma_h^{*-1}\theta_h-2\theta_h^{*\top}\Sigma_h^{*-1}\theta_h + \beta_{n}^{-1}\theta_h^\top A^\top A \theta_h - 2\beta_{n}^{-1} b^\top A \theta_h\right)\right\} \\
    &\propto \exp\left\{-\frac{1}{2}\left( \theta_h^\top(\Sigma_h^{*-1}+\beta_{n}^{-1}A^\top A)\theta_h-2(\theta_h^{*\top}\Sigma_h^{*-1}+\beta_{n}^{-1} b^\top A)\theta_h\right)\right\} \\
    &\propto \exp\left\{-\frac{1}{2}(\theta_h - \theta_{nh}^{TS})^T (\Sigma_{nh}^{TS})^{-1}(\theta_h - \theta_{nh}^{TS})\right\} \\
    &\propto \mathcal{N}(\theta_{nh}^{TS},\Sigma_{nh}^{TS})
    \end{align*}

\subsection{Intuitive Understanding}\label{sec:Ma}
To facilitate a fundamental understanding of our algorithm in subsequent discussions, we first examine the following posterior computation:

Consider Bayes updating of a scalar parameter $\theta \sim N(0, \beta)$ based on noisy observations $Y = (y_1, \ldots, y_n)$ where $y_i \mid \theta \sim N(0, \beta)$. 
The posterior distribution has the closed form
\begin{align*}
    \theta \mid Y \sim N \left( \frac{1}{n+1} \sum_{i=1}^n y_i, \frac{\beta}{n+1} \right).
\end{align*}

To better align with our example, we modify the prior assumption:

Consider Bayes updating of a scalar parameter $\theta \sim N(\theta^*, \sigma^{*2})$ based on noisy observations $Y = (y_1, \ldots, y_n)$ where $y_i \mid \theta \sim N(0, \beta)$. 
\begin{align*}
	\theta \mid Y \sim N \left( \frac{\sigma^{*-2}\beta}{n+\sigma^{*-2}\beta}\theta^* + \frac{1}{n + \sigma^{*-2}\beta} \sum_{i=1}^n y_i, \frac{\beta}{n+\sigma^{*-2}\beta} \right).
\end{align*}

For more any $(s,a)$, we let $\theta_h \sim N(\theta_h^*, \Sigma_h^*)$. When the basis functions $\Phi_{h} = I$, it's easy to find that $Q_h(s,a) = \theta_{h}$. To facilitate proof we let $\Sigma_h^*= \text{diag}(\sigma_h^{*2}(s,a))_{s,a}$; $Y = (y_1, \ldots, y_n)$ where $y = r(s,a) + \max_{a'} Q_{h+1}(s',a')$. We define $l_n(h, s, a) = \sum_{i= 1}^{n-1}\mathbb{1}\{(s_{ih}, a_{ih}) = (s, a)\}$,
\begin{align*}
    \tilde{Q}_h(s, a) \mid \tilde{Q}_{h+1} &\sim N ( \frac{\sigma_h^{*-2}(s,a)\beta}{l_n(h, s, a) + \sigma_h^{*-2}(s,a)\beta }\theta_h^* + \frac{l_n(h, s, a)}{l_n(h, s, a) + \sigma_h^{*-2}(s,a)\beta}\\
    & (\hat{R}_{h,s,a} + \sum_{s' \in S} \hat{P}_{h,s,a}(s') \max_{a' \in A} \tilde{Q}_{h+1}(s', a')), \frac{\beta}{l_n(h, s, a) + \sigma_h^{*-2}(s,a)\beta} )\\
    &\sim \frac{\sigma_h^{*-2}(s,a)\beta}{l_n(h, s, a) + \sigma_h^{*-2}(s,a)\beta }\theta_h^* + \frac{l_n(h, s, a)}{l_n(h, s, a) + \sigma_h^{*-2}(s,a)\beta}\\
    & (\hat{R}_{h,s,a} + \sum_{s' \in S} \hat{P}_{h,s,a}(s') \max_{a' \in A} \tilde{Q}_{h+1}(s', a')) +w_h(s,a)
\end{align*}
where $w_h(s,a) \sim N(0,\frac{\beta}{l_n(h, s, a) + \sigma_h^{*-2}(s,a)\beta})$. This provides a mathematical intuition for our algorithm. Specifically: when we plug  $\Phi_{h} = I$ and $\Sigma_h^*= \text{diag}(\sigma_h^{*2}(s,a))_{s,a}$ into our algorithm's $\Sigma^{TS}_{h}$ and $\theta^{TS}_{h}$, it's easy to find that 
\begin{align*}
    \tilde{\theta}_h(s, a) \mid \tilde{\theta}_{h+1} &\sim N (\theta^{TS}_{h},\Sigma^{TS}_{h})\\
    &\sim N ( \frac{\sigma_h^{*-2}(s,a)\beta}{l_n(h, s, a) + \sigma_h^{*-2}(s,a)\beta }\theta_h^* + \frac{l_n(h, s, a)}{l_n(h, s, a) + \sigma_h^{*-2}(s,a)\beta}\\
    & (\hat{R}_{h,s,a} + \sum_{s' \in S} \hat{P}_{h,s,a}(s') \max_{a' \in A} \tilde{\theta}_{h+1}(s', a')), \frac{\beta}{l_n(h, s, a) + \sigma_h^{*-2}(s,a)\beta} )
\end{align*}

In comparison with conventional MDP estimation:
\begin{align*}
    \tilde{Q}_h(s, a) \mid \tilde{Q}_{h+1} \leftarrow \hat{R}_{h,s,a} + \sum_{s' \in S} \hat{P}_{h,s,a}(s') \max_{a' \in A} \tilde{Q}_{h+1}(s', a').
\end{align*}
In the current form, our $\frac{l_n(h, s, a)}{\sigma_h^{*-2}(s,a)\beta + l_n(h, s, a) }\hat{P}_{h,s,a}(s')$ is no longer a valid probability function, and it is for ease of presentation. 
To deep under stand our design, we can slightly augment the state space by adding one absorbing state for each level $h$ (\cite{agrawal2021improved}); then first $\sigma^{*-2}\beta$ times will transit to the absorbing states, and get the value function $V_h = \theta_{h}^*$. 
		
And the last $l_n(h, s, a)$ times will transit to the normal states without absorbing state, and get the value function  $V_h = r(s,a) + \max_{a'} Q_{h+1}(s',a')$. 

\section{Proof of Theorem \ref{thm:pri}}\label{append:proof_of_alg_known}

Let $\tilde{Q}_{n,h} = \Phi_h \tilde{\theta}_{nh}$ and $\tilde{\mu}_n$ denote the value function and policy generated by RLSVI for episode $n$ and let $\tilde{V}_{n,h}(s) = \max_a \tilde{Q}_{n,h}(s, a)$. We can decompose the per-episode regret
\begin{align*}
    V_{*,1}(s_{1}) - V_{\tilde{\mu}_n,1}(s_{1}) = \tilde{V}_{n,1}(s_{1}) - V_{\tilde{\mu}_n,1}(s_{1}) + V_{*,1}(s_{1}) - \tilde{V}_{n,1}(s_{1}).
\end{align*}

The proof follows from several lemmas.

\textbf{Control of empirical MDP} Through a careful application of Hoeffding's inequality, one can give a high probability bound on the error in applying a Bellman update to the (non-random) optimal value function $V_{h+1}^*$. Through this, and a union bound, Lemma \ref{lem:confidence} bounds the expected number of times the empirically estimated MDP falls outside the confidence set

\begin{align*}
    \mathcal{M}^n = \left\{(H, S, \mathcal{A}, P', R', s_1): \quad \forall (h, s, a) \left|(R'_{h, s, a} - R_{h, s, a}) + \langle P'_{h, s, a} - P_{h, s, a}, V_{h+1}^* \rangle \right| \leq \sqrt{e^k(h, s, a)} \right\}
\end{align*}

where we define
\[
\sqrt{e_n(h, s, a)} = H \sqrt{\frac{\log (2HSAn)}{l_n(h, s, a) + 1}}.
\]
This set is a only a tool in the analysis and cannot be used by the agent since \( V_{h+1}^* \) is unknown.

\begin{lemma}[Validity of confidence sets]
\label{lem:confidence}
\[
\sum_{k=1}^{\infty} \mathbb{P} \left( \hat{M}^n \notin \mathcal{M}^n \right) \leq \frac{\pi^2}{6}.
\]
\end{lemma}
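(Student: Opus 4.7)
\textbf{Proof proposal for Lemma~\ref{lem:confidence}.}

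The plan is to show that, for each fixed episode $n$, the bad event $\{\hat M^n \notin \mathcal{M}^n\}$ has probability $O(1/n^2)$ after a careful union bound, and then invoke Basel's identity $\sum_{n=1}^{\infty} n^{-2} = \pi^2/6$ to obtain the stated summability. The key structural observation that makes Hoeffding-type concentration applicable here is that $V^*_{h+1}$ is a deterministic function of the (unknown) MDP, not of the collected data; hence for each visit $i$ to $(s,a)$ at stage $h$, the summand $r_{ih} + V^*_{h+1}(s_{i,h+1})$ is an i.i.d.\ draw from a distribution supported in $[0,H]$ with mean $R_{h,s,a} + \langle P_{h,s,a}, V^*_{h+1}\rangle$. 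This lets us rewrite the quantity $(\hat R_{h,s,a} - R_{h,s,a}) + \langle \hat P_{h,s,a} - P_{h,s,a}, V^*_{h+1}\rangle$ defining $\mathcal M^n$ as a single empirical-mean deviation of bounded i.i.d.\ variables.

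First I would fix a single triple $(h,s,a)$ and a hypothetical visit count $l \in \{0,1,\ldots,n-1\}$. Conditioning on $l_n(h,s,a) = l$, Hoeffding's inequality gives
\[
\Pr\!\left(\,\big|(\hat R - R) + \langle \hat P - P, V^*_{h+1}\rangle\big| > t \,\Big|\, l_n(h,s,a)=l\right) \;\le\; 2\exp\!\left(-\tfrac{2(l+1) t^2}{H^2}\right).
\]
Plugging in the threshold $t^2 = e_n(h,s,a) = H^2 \log(2HSAn)/(l+1)$ makes the exponent cancel to $-2\log(2HSAn)$, giving the clean tail $2(2HSAn)^{-2}$. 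The case $l=0$ is handled trivially by the $(l+1)$-padding in the denominator, which also avoids division-by-zero issues. I would then union bound over the $HSA$ choices of $(h,s,a)$ and over the at most $n$ possible count values $l$, yielding a bound of order $(HSA \cdot n)/(2HSAn)^2$, which is $O(1/n^2)$ after absorbing constants and the $HSA$ factor.

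Finally, summing this per-episode bound over $n \ge 1$ and applying $\sum_n 1/n^2 = \pi^2/6$ yields the claim. The only delicate step is the second one: because $l_n(h,s,a)$ is a random variable whose law depends on the past randomized policy, one cannot apply Hoeffding directly with a random $l$. The cleanest resolution is the union-over-possible-counts argument sketched above; an equivalent alternative is to introduce a martingale with increments indexed by the ordered visits to $(h,s,a)$ and invoke Azuma's inequality, which gives the same tail with the same dependence on $l_n(h,s,a)+1$. I expect this interplay between the randomness of the visit counts and the tightness required to preserve an $O(1/n^2)$ decay to be the main technical obstacle; the rest is routine concentration and union-bound bookkeeping, together with the observation that $V^*_{h+1}$ is fixed so that no self-referential concentration (of the kind that normally forces covering-number or self-normalized arguments) is needed here.
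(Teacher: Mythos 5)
Your overall strategy is the same one the paper intends (and inherits from \citet{russo2019worst}): exploit that $V^*_{h+1}$ is deterministic so that each visit to $(h,s,a)$ yields an i.i.d.\ $[0,H]$-bounded sample of $R_{h,s,a}+\langle P_{h,s,a},V^*_{h+1}\rangle$, apply Hoeffding, union bound over triples and over the possible values of the random count, and sum $1/n^2$ to get $\pi^2/6$. You also correctly flag the random visit count as the delicate point and propose the right resolution (union over counts via a pre-drawn i.i.d.\ ``stack'' of transitions, or equivalently an Azuma-type argument). However, the bookkeeping as written does not close, in two places. First, Hoeffding for $l$ samples gives exponent $-2lt^2/H^2$, not $-2(l+1)t^2/H^2$; with the threshold $t^2 = H^2\log(2HSAn)/(l+1)$ the exponent is $-2\tfrac{l}{l+1}\log(2HSAn)$, which for small $l$ (e.g.\ $l=1$) yields only a tail of order $(2HSAn)^{-1}$, not the claimed $(2HSAn)^{-2}$. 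Second, and more importantly, even granting the per-event tail $2(2HSAn)^{-2}$, the union over $HSA$ triples and $n$ count values gives
\[
HSA\cdot n \cdot \frac{2}{(2HSAn)^2} \;=\; \frac{1}{2HSA\,n},
\]
which is $O(1/n)$, not the $O(1/n^2)$ you assert. The harmonic series diverges, so the final Basel step fails and the lemma is not proved as stated.

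The missing ingredient is that the union bound over the $n$ possible count values must itself be paid for inside the concentration step: each individual event needs probability $O\!\left(\tfrac{1}{HSA\,n^{3}}\right)$, which requires a confidence width of order $H\sqrt{\log(2HSAn^{3})/l}$ (i.e.\ a slightly larger constant/log factor than the $e_n$ defined here), or alternatively an anytime self-normalized concentration inequality that controls the deviation uniformly over the random count without a union bound. With either fix the per-episode bound becomes $\le 1/n^2$ and the $\pi^2/6$ conclusion follows; without it, your argument only yields a logarithmically growing cumulative failure probability.
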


\textbf{From value function error to on policy Bellman error.} For some fixed policy \(\pi\), the next simple lemma expresses the gap between the value functions under two MDPs in terms of the differences between their Bellman operators. We'll apply this lem:core lemma several times.

\begin{lemma}\label{lem:value_diff}
Consider any policy \(\mu\) and two MDPs \(\hat{M} = (H, S, \mathcal{A}, \hat{P}, \hat{R}, s_1)\) and \(\tilde{M} = (H, S, \mathcal{A}, \tilde{P}, \tilde{R}, s_1)\). Let \(\hat{V}_{\mu,h}\) and \(\tilde{V}_{\mu,h}\) denote the respective value functions of \(\pi\) under \(\hat{M}\) and \(\tilde{M}\). Then
\[
\tilde{V}_{\mu,1}(s_1) - \hat{V}_{\mu,1}(s_1) = \mathbb{E}_{\pi, \tilde{M}} \left[ \sum_{h=1}^{H} \left( \tilde{R}_{h, s_h, \mu(s_h)} - \hat{R}_{h, s_h, \mu(s_h)} \right) + \langle \tilde{P}_{h, s_h, \mu(s_h)} - \hat{P}_{h, s_h, \mu(s_h)}  , \hat{V}_{h+1}^{\mu} \rangle\right],
\]
where \(\hat{V}_{H+1}^{\mu} \equiv 0 \in \mathbb{R}^S\) and the expectation is over the sampled state trajectory \(s_1, \ldots, s_H\) drawn from following \(\pi\) in the MDP \(\overline{M}\).
\end{lemma}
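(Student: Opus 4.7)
The plan is to derive the identity by a one-step decomposition of the per-stage difference $\tilde{V}_{\mu,h}(s) - \hat{V}_{\mu,h}(s)$ using the policy-evaluation Bellman equations, and then to telescope across $h = 1, \ldots, H$ (equivalently, backward induction on $H-h$). The central algebraic trick is to introduce a carefully chosen cross term so that the residual future error is integrated against $\tilde{P}$ rather than $\hat{P}$; this choice is precisely what forces the outer expectation in the final identity to be taken over trajectories of $\tilde{M}$, as stated in the lemma.

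Concretely, I would begin by writing the policy-evaluation Bellman equations $\tilde{V}_{\mu,h}(s) = \tilde{R}_{h,s,\mu(s)} + \langle \tilde{P}_{h,s,\mu(s)}, \tilde{V}_{\mu,h+1}\rangle$ and $\hat{V}_{\mu,h}(s) = \hat{R}_{h,s,\mu(s)} + \langle \hat{P}_{h,s,\mu(s)}, \hat{V}_{\mu,h+1}\rangle$, subtract them, and then add and subtract $\langle \tilde{P}_{h,s,\mu(s)}, \hat{V}_{\mu,h+1}\rangle$. After grouping, this yields the one-step identity $\tilde{V}_{\mu,h}(s) - \hat{V}_{\mu,h}(s) = \Delta_h(s) + \mathbb{E}_{s'\sim \tilde{P}_{h,s,\mu(s)}}[\tilde{V}_{\mu,h+1}(s') - \hat{V}_{\mu,h+1}(s')]$, where $\Delta_h(s) := (\tilde{R}_{h,s,\mu(s)} - \hat{R}_{h,s,\mu(s)}) + \langle \tilde{P}_{h,s,\mu(s)} - \hat{P}_{h,s,\mu(s)}, \hat{V}_{\mu,h+1}\rangle$ is exactly the per-stage summand in the lemma. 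Note that $\Delta_h$ is evaluated against $\hat{V}_{\mu,h+1}$ (not $\tilde{V}_{\mu,h+1}$), which matches the statement.

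I would then iterate this identity: apply it at $h=1$ with $s = s_1$, substitute the $h=2$ instance into the expectation over $s_2 \sim \tilde{P}_{1,s_1,\mu(s_1)}$, and continue. Because every conditional expectation uses $\tilde{P}$, the chained expectations compose into a single trajectory expectation under $\tilde{M}$, and the boundary condition $\tilde{V}_{\mu,H+1} \equiv \hat{V}_{\mu,H+1} \equiv 0$ terminates the recursion at $h = H$. The telescoping therefore collapses to $\tilde{V}_{\mu,1}(s_1) - \hat{V}_{\mu,1}(s_1) = \mathbb{E}_{\mu,\tilde{M}}\!\left[\sum_{h=1}^{H}\Delta_h(s_h)\right]$, which is the claim. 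To make this fully rigorous without appealing to informal ``unrolling,'' I would instead carry out a backward induction on $H-h$ with inductive hypothesis $\tilde{V}_{\mu,h}(s) - \hat{V}_{\mu,h}(s) = \mathbb{E}_{\mu,\tilde{M}}[\sum_{i=h}^{H}\Delta_i(s_i)\mid s_h = s]$; the one-step identity is exactly the inductive step, and $h=H+1$ is the vacuous base case.

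There is no real analytic obstacle here; the lemma is essentially the standard simulation/value-difference identity, and the only subtlety is notational. The one place to be careful is the asymmetry between $\tilde{P}$ (which governs the outer trajectory expectation) and $\hat{V}_{\mu,h+1}$ (against which the transition-error inner product is evaluated): both are forced by the specific choice of cross term, and swapping either of them would not correspond to the identity we want. Making that asymmetry explicit in the proof—and verifying it at the inductive step—is the main thing to get right.
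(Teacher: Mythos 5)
Your proof is correct and is exactly the standard value-difference (simulation) argument that the paper relies on: the paper states this lemma without proof, treating it as the well-known identity from the RLSVI literature, and your one-step decomposition via the cross term $\langle \tilde{P}_{h,s,\mu(s)}, \hat{V}_{\mu,h+1}\rangle$ followed by backward induction is the canonical derivation. The asymmetry you flag — trajectory expectation under $\tilde{P}$ but inner products against $\hat{V}_{\mu,h+1}$ — is indeed the only subtle point, and you handle it correctly.
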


\textbf{Sufficient optimism through randomization.} In contrast to approaches like UCB, which maintain optimism for all value functions, our algorithm guarantees that the value function is optimistically estimated with probability at least a fixed constant. Recall $M$ is the unknown true MDP with optimal policy $\mu^{*}$ and $\tilde{M}^n$ lis RLSVI's noise-perturbed MDP under which $\mu^n$ is an optimal policy.

\begin{lemma}\label{lem:needcore}
Let $\pi^*$ be an optimal policy for the true MDP $M$. Then 
\[
\mathbb{P}\left(\tilde{V}_{n,1}(s_{1}) \geq V_{*,0}(s_{1}) \mid \mathcal{H}_{n-1}\right) \geq \Phi(-1).
\]
\end{lemma}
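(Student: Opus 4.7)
The plan is to reduce the claim to a one-dimensional Gaussian anti-concentration bound by conditioning on $\mathcal{H}_{n-1}$ together with the higher-stage posterior samples. The first step is a greedy-dominance reduction: since the executed policy is greedy with respect to $\tilde{Q}_{n,1}$,
\[
\tilde{V}_{n,1}(s_{1}) \;=\; \max_{a}\tilde{Q}_{n,1}(s_1,a) \;\geq\; \tilde{Q}_{n,1}(s_1,\mu^*(s_1)) \;=\; \Phi_1(s_1,\mu^*(s_1))\,\tilde{\theta}_{n,1},
\]
so it is enough to prove $\mathbb{P}\bigl(\Phi_1(s_1,\mu^*(s_1))\,\tilde{\theta}_{n,1} \geq V_{*,1}(s_1)\bigm|\mathcal{H}_{n-1}\bigr)\geq \Phi(-1)$.

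\textbf{Exploiting Gaussianity.} Next, I condition additionally on the later-stage samples $\tilde{\theta}_{n,H},\tilde{\theta}_{n,H-1},\ldots,\tilde{\theta}_{n,2}$, which is legitimate because the posterior parameters $(\theta^{TS}_{n,1},\Sigma^{TS}_{n,1})$ are deterministic functions of these samples and $\mathcal{H}_{n-1}$ through the recursive construction of the $b_{ih}$. Under this conditioning, the TSRL posterior derived in Section~\ref{sec:PEKP} is exactly $\mathcal{N}(\theta^{TS}_{n,1},\Sigma^{TS}_{n,1})$, so the scalar
\[
X \;\defeq\; \Phi_1(s_1,\mu^*(s_1))\,\tilde{\theta}_{n,1}
\]
is Gaussian with mean $m = \Phi_1(s_1,\mu^*(s_1))\theta^{TS}_{n,1}$ and variance $\sigma^2 = \Phi_1(s_1,\mu^*(s_1))\Sigma^{TS}_{n,1}\Phi_1(s_1,\mu^*(s_1))^{\top}$, while $V_{*,1}(s_1)$ is deterministic given $\mathcal{H}_{n-1}$.

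\textbf{Mean dominance up to one standard deviation.} Writing $V_{*,1}(s_1)=\Phi_1(s_1,\mu^*(s_1))\theta^{*}_{1}$ and plugging in the explicit ridge-regression form of $\theta^{TS}_{n,1}$, I decompose $m - V_{*,1}(s_1)$ into three contributions: (i) a shrinkage term that pulls the empirical backup toward the prior mean $\theta^{*}_{1}$, which is benign because the prior mean coincides with the target; (ii) an empirical-MDP Bellman error of the form $(\hat{R}-R)+\langle\hat{P}-P,V_{*,2}\rangle$, controlled on the high-probability event $\hat{M}^n \in \mathcal{M}^n$ guaranteed by Lemma~\ref{lem:confidence}; and (iii) a propagated error $\mathbb{E}_{\hat{P}}[\max_{a}\tilde{Q}_{n,2}(\cdot,a)-V_{*,2}]$, which I bound by unrolling the same decomposition inductively down the horizon. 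The tuning $\beta_n = 4\max(1,\overline{\nu})SH^3\log(2HSAn)$ is chosen precisely so that the posterior variance $\sigma^2$ dominates the cumulative magnitude of (ii) and (iii), delivering the key inequality $m\geq V_{*,1}(s_1)-\sigma$. The Gaussian tail identity then yields $\mathbb{P}(X\geq V_{*,1}(s_1))=\Phi\!\left((m-V_{*,1}(s_1))/\sigma\right)\geq \Phi(-1)$, and integrating out the conditioning on $\{\tilde{\theta}_{n,h}\}_{h\geq 2}$ preserves the lower bound.

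\textbf{Main obstacle.} The hard part will be controlling the propagated error (iii): errors at stages $h'>1$ feed back through the $\max$-operator into the empirical backup at $h=1$, and this compounding must fit inside a single posterior standard deviation at level $1$. This is the reason $\beta_n$ must scale like $H^3$ rather than $H$, and handling the non-smoothness of the $\max$ in the backward induction, rather than a mere Bellman evaluation, is where the analysis is most delicate. Once the $H^3$-scaled variance absorbs the propagated error, the Gaussian-tail step is routine.
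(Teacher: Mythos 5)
Your opening reduction and your instinct to end with a one-dimensional Gaussian anti-concentration are both consistent with the paper, but the middle of your argument has a genuine gap that the paper's proof is specifically structured to avoid. You condition on the later-stage samples $\tilde{\theta}_{n,2},\ldots,\tilde{\theta}_{n,H}$ and then need the conditional mean $m$ of $\Phi_1(s_1,\mu^*(s_1))\tilde{\theta}_{n,1}$ to satisfy $m \geq V_{*,1}(s_1)-\sigma$ pointwise. But $m$ contains the propagated term $\mathbb{E}_{\hat P}[\max_a \tilde{Q}_{n,2}(\cdot,a)-V_{*,2}]$, and the later-stage samples are pessimistic (by several of their own standard deviations) with constant probability; on that event your key inequality simply fails, so it cannot be established deterministically under your conditioning. "Unrolling the same decomposition inductively down the horizon" requires all $H$ stages to be simultaneously optimistic, which degrades the bound to roughly $\Phi(-1)^{H}$ rather than $\Phi(-1)$ — no choice of $\beta_n$ scaling fixes this, because the obstruction is the constant-probability failure of optimism at each later stage, not the magnitude of the variance.

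The paper's route (Lemma~\ref{lem:core} combined with Lemma~\ref{lem:value_diff}) avoids the induction entirely. It fixes the policy $\mu^*$ and compares the Bellman \emph{evaluation} of $\mu^*$ under the noise-perturbed empirical MDP to its evaluation under the true MDP. Because the policy is fixed, no $\max$ appears in the backup, and the value-difference lemma expresses $\tilde{V}_{\mu^*,1}(s_1)-V_{\mu^*,1}(s_1)$ as a single \emph{linear} combination $\sum_{h,s} d(h,s)\,w(h,s,\mu^*_h(s))$ of all the injected Gaussian noises, minus deterministic shrinkage and empirical-Bellman-error terms controlled on the event $\hat M^n\in\mathcal{M}^n$. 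This aggregate is one Gaussian random variable; Cauchy--Schwarz plus the choice $\beta_n = 4\max(1,\overline{\nu})SH^3\log(2HSAn)$ shows its mean is within one standard deviation of zero, giving $\Phi(-1)$ in a single anti-concentration step across the whole horizon. The greedy-dominance step is then applied at the level of full perturbed value functions, $\tilde V_{n,1}(s_1)=\tilde V_{\mu^n,1}(s_1)\geq \tilde V_{\mu^*,1}(s_1)$, since $\mu^n$ is optimal in the perturbed MDP. To repair your proof you would need to replace the stage-by-stage conditioning with this fixed-policy, joint-Gaussian linearization.
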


This result is more easily established through the following lemma, which avoids the need to carefully condition on the history $\mathcal{H}_{n-1}$ at each step. We conclude with the proof of Lemma \ref{lem:core} after.

\begin{lemma}\label{lem:core}
Fix any policy $\mu = (\mu_1, \ldots, \mu_H)$ . Consider the MDP $M = (H, \mathcal{S}, \mathcal{A}, P, R, s_1)$, if lemma \ref{lem:confidence} remains valid. Then in $n$ episode,
\[
\mathbb{P} \left( \tilde{V}_{\mu,1}(s_{1}) \geq  V_{\mu,1}(s_{1}) \right) \geq \Phi(-1).
\]
\end{lemma}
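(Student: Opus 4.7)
The plan is to reduce the value gap $\tilde V_{\mu,1}(s_1)-V_{\mu,1}(s_1)$ to a signed sum of per-step errors along the trajectory induced by $\mu$ under the perturbed MDP, and then to show that the accumulated Gaussian noise dominates the empirical-MDP bias with probability at least $\Phi(-1)$. Under the setup of Theorem~\ref{thm:pri} ($\Phi_h=I$, $\Sigma_h^*=\mathrm{diag}(\sigma_h^{*2}(s,a))$), the intuitive derivation in Section~\ref{sec:Ma} shows that $\tilde Q_{n,h}(s,a)$ satisfies a perturbed Bellman recursion with empirical transitions $\hat P$, empirical reward $\hat R$, an extra absorbing ``prior'' mass $\alpha_h(s,a)\theta_h^*$, and additive Gaussian noise $w_h(s,a)\sim\mathcal{N}\!\bigl(0,\beta_n/(l_n(h,s,a)+\sigma_h^{*-2}(s,a)\beta_n)\bigr)$, independent across $(h,s,a)$. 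This defines an MDP $\tilde M^n$, and I apply Lemma~\ref{lem:value_diff} with $\tilde M=\tilde M^n$, $\hat M=M$, to obtain
\[\tilde V_{\mu,1}(s_1)-V_{\mu,1}(s_1)=\mathbb{E}_{\mu,\tilde M^n}\!\Biggl[\sum_{h=1}^H\bigl(w_h(s_h,\mu(s_h))+\xi_h(s_h,\mu(s_h))\bigr)\Biggr],\]
where $\xi_h(s,a):=(\hat R_{h,s,a}-R_{h,s,a})+\langle \hat P_{h,s,a}-P_{h,s,a},V^\mu_{h+1}\rangle$ is the pure empirical-estimation error on $V^\mu$.

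Next I condition on the realized trajectory $\tau=(s_1,\mu(s_1),\dots,s_H,\mu(s_H))$ under $\tilde M^n$. Because the injected noises are jointly Gaussian and independent across $(h,s,a)$, $W(\tau):=\sum_{h}w_h(s_h,\mu(s_h))$ is, conditional on $\tau$, a centered Gaussian with variance $\sigma^2(\tau)=\sum_h \beta_n/(l_n(h,s_h,\mu(s_h))+\sigma_h^{*-2}(s_h,\mu(s_h))\beta_n)$. On the good event of Lemma~\ref{lem:confidence}, the estimation piece is pointwise bounded by $|\xi_h(s,a)|\le \sqrt{e_n(h,s,a)}$ once the $V^\mu$ vs.\ $V^*$ discrepancy is absorbed (either by the trivial bound $\|V^\mu-V^*\|_\infty\le H$, or, for the downstream use case of Lemma~\ref{lem:needcore}, by simply setting $\mu=\mu^*$).

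The key calibration step is to verify that
\[\sigma^2(\tau)\ \ge\ \Bigl(\textstyle\sum_{h=1}^H\sqrt{e_n(h,s_h,\mu(s_h))}\Bigr)^2,\]
which, given $\sqrt{e_n(h,\cdot)}=H\sqrt{\log(2HSAn)/(l_n(h,\cdot)+1)}$ and the per-step noise variance above, reduces via Cauchy--Schwarz to the condition $\beta_n\ge 4\max(1,\bar\nu)SH^3\log(2HSAn)$---precisely the tuning in Theorem~\ref{thm:pri}. Granted this, conditional on $\tau$ the random variable $W(\tau)+\xi(\tau)$ stochastically dominates a Gaussian of mean $-|\xi(\tau)|$ and standard deviation $\ge|\xi(\tau)|$, so
\[\Pr\bigl(W(\tau)+\xi(\tau)\ge 0\ \big|\ \tau\bigr)\ \ge\ \Pr\bigl(\mathcal{N}(0,1)\ge -1\bigr)\ =\ \Phi(-1).\]
Taking expectation over $\tau$ and folding in the failure probability of Lemma~\ref{lem:confidence} (which is summable and can be absorbed into the $\tilde O$ regret accounting rather than degrading the constant $\Phi(-1)$) yields the claim.

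The main obstacle is the conditional independence of $W(\tau)$ from $\tau$ itself: the trajectory is generated by acting greedily against $\tilde Q$, so action selections depend on the $w_h$'s globally, not only on the sampled trajectory, and a naive conditioning can destroy the Gaussianity of $W(\tau)$. The resolution is the standard RLSVI decoupling argument (Russo, 2019; Osband--Van~Roy), which exploits the fact that $\tilde M^n$ is a well-defined MDP whose greedy policy and value function are deterministic functions of its reward/transition kernel, allowing one to rewrite $W(\tau)$ as a martingale-difference sum under the law of $\tilde M^n$ for which the per-term Gaussian variance is preserved. Adapting this decoupling to the prior-shrinkage form of TSRL, while tracking the extra $\alpha_h(s,a)$ factors that arise from the known prior, is the delicate part of the argument.
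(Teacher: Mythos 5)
Your opening reduction matches the paper's: apply Lemma~\ref{lem:value_diff} with $\tilde M=\tilde M^n$ and $\hat M=M$ to write $\tilde V_{\mu,1}(s_1)-V_{\mu,1}(s_1)$ as an expectation, over the trajectory of the \emph{fixed} policy $\mu$ in the perturbed MDP, of per-step Gaussian noise plus bias terms (you should also carry the prior-shrinkage bias $\tfrac{\nu_h}{l_n+\nu_h}\bigl(\theta_h^*-\hat R-\langle\hat P,V_{\mu,h+1}\rangle\bigr)$, bounded by $\tfrac{\nu_h}{l_n+\nu_h}H$, explicitly rather than gesturing at it). But your next step — conditioning on the realized trajectory $\tau$, proving $\Pr\bigl(W(\tau)+\xi(\tau)\ge 0\mid\tau\bigr)\ge\Phi(-1)$, and then "taking expectation over $\tau$" — has a genuine gap. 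The quantity you must control is $\tilde V_{\mu,1}(s_1)-V_{\mu,1}(s_1)=\mathbb{E}_\tau\bigl[W(\tau)+\xi(\tau)\bigr]$, i.e.\ the \emph{average over trajectories} of the per-trajectory sums, viewed as a single random variable in the noise $w$. The probability that this average is nonnegative is not the average of the conditional probabilities: the events $\{W(\tau)+\xi(\tau)\ge 0\}$ for different $\tau$ live in the same probability space over $w$, and their individual probabilities being $\ge\Phi(-1)$ does not imply anything about $\Pr_w\bigl(\sum_\tau p(\tau)(W(\tau)+\xi(\tau))\ge 0\bigr)$. Indeed, averaging shrinks the standard deviation (an $\ell_2$-type combination of the per-$(h,s)$ noise variances) much faster than the mean offset (an $\ell_1$-type combination of the bias terms), so the mean-to-deviation ratio of the averaged quantity can be far worse than $-1$ even if every conditional ratio is exactly $-1$.

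The paper's proof handles this by aggregating \emph{before} invoking Gaussianity: since the transitions of $\tilde M^n$ and the fixed policy $\mu$ do not depend on the injected noise, the occupancy measure $d(h,s)$ is deterministic given the history, so $\sum_{h,s}d(h,s)\,w(h,s,\mu_h(s))$ is a single centered Gaussian with variance $\sum_{h,s}d(h,s)^2\beta/(l_n+\nu_h)$, while the deterministic offset $\sum_{h,s}d(h,s)\bigl(\sqrt{e_n}+\tfrac{\nu_h}{l_n+\nu_h}H\bigr)$ is bounded via Cauchy--Schwarz over all $HS$ state--period pairs by $\sqrt{HS}\bigl(\sum_{h,s}d(h,s)^2(\cdots)^2\bigr)^{1/2}$. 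The pointwise inequality $HS\bigl(\sqrt{e_n}+\tfrac{\nu_h}{l_n+\nu_h}H\bigr)^2\le\beta/(l_n+\nu_h)$ then puts the mean within one standard deviation of zero. Note that the factor $S$ in $\beta_n=4\max(1,\bar\nu)SH^3\log(2HSAn)$ comes precisely from this Cauchy--Schwarz over $HS$ pairs; a per-trajectory calibration with only $H$ summands would never produce it, which is a signal that your conditional route and the stated tuning of $\beta_n$ do not actually fit together. (Your closing worry about greedy action selection is also misplaced here: Lemma~\ref{lem:core} is stated for a fixed policy $\mu$, and the greedy-policy issue only enters in Lemma~\ref{lem:needcore}, where the result is applied with $\mu=\mu^*$.)
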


\begin{proof}{Proof of lemma \ref{lem:core}:}
To start, we let $s = (s_1, \ldots, s_H)$ denote a random sequence of states drawn by simulating the policy $\mu$ in the MDP $\bar{M}$ from the deterministic initial state $s_1$. Set $a_h = \mu(s_h)$, and $w(h, s, a) \sim N(0,\frac{\beta}{l_n(h, s, a) + \nu_h(s,a)})$ for $h = 1, \ldots, H$. Then by lemma \ref{lem:value_diff}, we have
\begin{align*}
    &\tilde{V}_{\mu,1}(s_{1}) -  V_{\mu,1}(s_{1}) = \mathbb{E} [ \sum_{h=1}^H  \frac{\nu_h(s,a)}{l_n(h, s, a) + \nu_h(s,a) }\theta_h^*(s,a) + \frac{l_n(h, s, a)}{l_n(h, s, a) + \nu_h(s,a)} (\hat{R}_{h,s,a} 
    + \langle \hat{P}_{h,s,a},   V_{\mu,h+1} \rangle)\\
    &+ w(h, s, a)- R_{h, s, a} - \langle P_{h, s, a}, V_{\mu,h+1} \rangle] \\
    &= \mathbb{E} [ \sum_{h=1}^H  (\frac{\nu_h(s,a)}{l_n(h, s, a) + \nu_h(s,a) }\theta_h^*(s,a) + \frac{l_n(h, s, a)}{l_n(h, s, a) + \nu_h(s,a)} (\hat{R}_{h,s,a} 
    + \langle \hat{P}_{h,s,a},   V_{\mu,h+1} \rangle) - \hat{R}_{h, s, a} - \langle \hat{P}_{h, s, a}, V_{\mu,h+1} \rangle ) \\
    &+(\hat{R}_{h, s, a} + \langle \hat{P}_{h, s, a}, V_{\mu,h+1}\rangle- R_{h, s, a} - \langle P_{h, s, a}, V_{\mu,h+1} \rangle) + w(h, s, a)] \\
    & \geq \mathbb{E} \left[  \sum_{h=1}^H w(h, s, a)\right] - \mathbb{E}\left[ \sum_{h=1}^H \frac{\nu_h(s,a)}{l_n(h, s, a) + \nu_h(s,a) }|\theta_h^*(s,a)-\hat{R}_{h,s,a} 
    - \langle \hat{P}_{h,s,a},   V_{\mu,h+1} \rangle|\right]\\
    & - \mathbb{E}\left[|\hat{R}_{h, s, a} + \langle \hat{P}_{h, s, a}, V_{\mu,h+1}\rangle- R_{h, s, a} - \langle P_{h, s, a}, V_{\mu,h+1} \rangle| \right]\\
    & \geq \mathbb{E} \left[  \sum_{h=1}^H \left( w(h, s, a)-  \frac{\nu_h(s,a)}{l_n(h, s, a) + \nu_h(s,a) } H- \sqrt{e(h, s, a)} \right) \right]
\end{align*}

where the expectation is taken over the sequence $s = (s_1, \ldots, s_H)$. Define $d(h, s) =  \mathbb{P}(s_h = s)$ for every $h \leq H$ and $s \in \mathcal{S}$. Then the above equation can be written as
\begin{align*}
&\tilde{V}_{\mu,1}(s_{1}) -  V_{\mu,1}(s_{1}) \geq \sum_{s \in \mathcal{S}, h \leq H} d(h, s) \left( w(h, s, \mu_h(s)) -\frac{\nu_h(s,a)}{l_n(h, s, a) + \nu_h(s,a) } H- \sqrt{e(h, s, \mu_h(s))} \right)\\
&\geq \left( \sum_{s \in \mathcal{S}, h \leq H} d(h, s) w(h, s, \mu_h(s)) \right) - \sqrt{H S} \sqrt{\sum_{s \in \mathcal{S}, h \leq H} d(h, s)^2 (\sqrt{e(h, s, \mu_h(s))}+\frac{\nu_h(s,a)}{l_n(h, s, a) + \nu_h(s,a) } H)^2}\\
&:= X(w)
\end{align*}

where the second inequality applies Cauchy-Schwarz. Now, since

\[
d(h,s)W(h,s,\mu_h(s)) \sim N\left(0, d(h,s)^2\frac{\beta}{l_n(h, s, a) + \nu(s,a)}\right),
\]

we have
\begin{align*}
    X(W)  \sim N( & -\sqrt{HS \sum_{s \in \mathcal{S}, h \leq H} d(h,s)^2 (\sqrt{e(h, s, \mu_h(s))}+\frac{\nu_h(s,a)}{l_n(h, s, a) + \nu_h(s,a) } H)^2}, \\
    & HS \sum_{s \in \mathcal{S}, h \leq H} d(h,s)^2 \frac{\beta}{l_n(h, s, a) + \nu_h(s,a)} ).
\end{align*}

Then, we try to show that $\forall h,s,a$
\begin{equation}\label{equ:in}
    HS(\sqrt{e(h, s, a)}+\frac{\nu_h(s,a)}{l_n(h, s, a) + \nu_h(s,a) } H)^2 \le \frac{\beta}{l_n(h, s, a) + \nu_h(s,a)}
\end{equation}
Given the above inequality, it follows that:$\mathbb{P}(X(W) \geq 0) \ge \Phi(-1)$. Therefore, the validity of our lemma is established:$\mathbb{P} \left( \tilde{V}_{\mu,1}(s_{1}) \geq  V_{\mu,1}(s_{1}) \right) \geq \Phi(-1)$.

For equation \ref{equ:in} LHS, by a simple algebraic manipulation, we obtain:
\begin{align*}
    & (l_n(h, s, a) + \nu_h(s,a))HS(\sqrt{e(h, s, a)}+\frac{\nu_h(s,a)}{l_n(h, s, a) + \nu_h(s,a) } H)^2 \\
    & = (l_n(h, s, a) + \nu_h(s,a))(HSe(h, s, a) + 2H^2S\frac{\nu_h(s,a)}{l_n(h, s, a) + \nu_h(s,a) }\sqrt{e(h, s, a)} + H^3S\frac{\nu_h(s,a)^2}{(l_n(h, s, a) + \nu_h(s,a))^2 } )\\
    & = \frac{l_n(h, s, a) + \nu_h(s,a)}{l_n(h, s, a) +1 }H^3S \log{(2HSAn)} + 2H^3S\nu_h(s,a)\sqrt{\frac{\log{(2HSAn)}}{l_n(h, s, a) +1 }} + H^3S\frac{\nu_h(s,a)^2}{l_n(h, s, a) + \nu_h(s,a)} \\
    & \le 4 \max(1,\overline{\nu})H^3S \log{(2HSAn)} \\
    & \le \beta
\end{align*}
The second-to-last inequality is readily obtained from $\frac{l_n(h, s, a) + \nu_h(s,a)}{l_n(h, s, a) +1 } \le \max(\nu_h(s,a),1)$ and $\frac{\nu_h(s,a)}{l_n(h, s, a) + \nu_h(s,a)} \le 1$, and the last inequality is enforced by the lower bound on beta specified in the theorem. Hence, the inequality \ref{equ:in} has been proved.

\end{proof}

\begin{proof}{Proof of Lemma \ref{lem:needcore}}

Consider some history $\mathcal{H}_{n-1}$ with $\hat{M}^n \in \mathcal{M}^n$. Recall $\mu^*$ is the optimal policy in MDP $\mathcal{M}=(\mathcal{S},\mathcal{A},H,P,R,s_1)$. Applying Lemma \ref{lem:core} conditioned on $\mathcal{H}_{n-1}$ shows that with probability at least $\Phi(-1)$, $\tilde{V}_{\mu^*,1}(s_{1}) \geq  V_{\mu^*,1}(s_{1})$. When this occurs, we always have $\tilde{V}_{\mu^n,1}(s_{1}) \geq  V_{*,1}(s_{1})$, since by definition $\mu^n$ is optimal under our algorithm.
\end{proof}

\textbf{Reduction to bounding online prediction error.}  For the purposes of analysis, we let $\overline{w}$ denote an imagined second sample drawn from the same distribution as $\overline{w}(h,s,a)|\mathcal{H}_{n-1} \sim N(0,Var(w)(h,s,a)) $ under our algorithm. More formally, let $\overline{M}^n $ whose value function $\overline{V}_h(s, a) $ is estimated by our algorithm under $\overline{w}$. Conditioned on the history, $\overline{M}^n$ has the same marginal distribution as $\tilde{M}^n$, but it is statistically independent of the policy $\mu^n$ selected by RLSVI.

\begin{lemma}\label{lem:fin}
For an absolute constant $c = \Phi(-1)^{-1} < 6.31$, we have
\begin{align*}
    \text{Regret}(T,M) &\leq (c+1)\mathbb{E}\left[\sum_{n=1}^{N}| \tilde{V}_{n,1}(s_1)- V_{\mu^n,1}(s_1)|\right] + c\mathbb{E}\left[\sum_{n=1}^{N}|\overline{V}_{\mu^n,1}(s_1) - V_{\mu^n,1}(s_1)|\right]\\
    &+ H\sum_{n=1}^{N}\mathbb{P}(\hat{M}^n \notin \mathcal{M}^n).
\end{align*}

\end{lemma}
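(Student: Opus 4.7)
The plan is to bound the per-episode regret by decomposing it into a ``pessimism gap'' $V_{*,1}(s_1)-\tilde{V}_{n,1}(s_1)$ plus an on-policy value-function error $\tilde{V}_{n,1}(s_1)-V_{\mu^n,1}(s_1)$, to handle the failure event $\{\hat{M}^n\notin\mathcal{M}^n\}$ by the trivial $|V|\le H$ bound, and to convert the pessimism gap on the good event via the anti-concentration from Lemma~\ref{lem:needcore} together with a Russo-type stochastic-optimism inequality involving the imagined copy $\overline{V}$.

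First I would write $V_{*,1}(s_1)-V_{\mu^n,1}(s_1)=[V_{*,1}(s_1)-\tilde{V}_{n,1}(s_1)]+[\tilde{V}_{n,1}(s_1)-V_{\mu^n,1}(s_1)]$. The second bracket is immediately dominated by $|\tilde{V}_{n,1}(s_1)-V_{\mu^n,1}(s_1)|$, contributing the ``$+1$'' to the eventual $(c+1)$ coefficient. For the first bracket, splitting on the event $\{\hat{M}^n\in\mathcal{M}^n\}$ (which is $\mathcal{H}_{n-1}$-measurable), the complement contributes at most $H$ per episode because rewards lie in $[0,1]$ and the horizon is $H$, which yields the $H\sum_n\mathbb{P}(\hat{M}^n\notin\mathcal{M}^n)$ term.

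Second, on the good event Lemma~\ref{lem:needcore} gives $\mathbb{P}(\tilde{V}_{n,1}(s_1)\ge V_{*,1}(s_1)\mid \mathcal{H}_{n-1})\ge \Phi(-1)=1/c$. Introducing the imagined copy $\overline{V}$ built from an independent noise draw $\overline{w}$, the optimal values $\overline{V}_{n,1}(s_1)$ and $\tilde{V}_{n,1}(s_1)$ are i.i.d.\ given $\mathcal{H}_{n-1}$, so the same anti-concentration holds for $\overline{V}_{n,1}(s_1)$. A standard stochastic-optimism inequality (if $X\perp Z$ given $\mathcal{F}$ and $\mathbb{P}(X\ge y\mid\mathcal{F})\ge 1/c$, then $y-\mathbb{E}[Z\mid\mathcal{F}]\le c\,\mathbb{E}[(X-Z)^+\mid\mathcal{F}]$) then yields
\[
V_{*,1}(s_1)-\mathbb{E}[\tilde{V}_{n,1}(s_1)\mid \mathcal{H}_{n-1}] \le c\,\mathbb{E}[(\overline{V}_{n,1}(s_1)-\tilde{V}_{n,1}(s_1))^+\mid\mathcal{H}_{n-1}].
\]
I would then decompose $(\overline{V}_{n,1}(s_1)-\tilde{V}_{n,1}(s_1))^+$ by inserting the true-MDP value $V_{\mu^n,1}(s_1)$ via triangle inequality. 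Using that $\mu^n$ is $\tilde{w}$-measurable, hence independent of $\overline{w}$, together with the $(\tilde{w},\overline{w})$-exchange symmetry (so that the cross-policy imagined optimality gap $\overline{V}_{\overline{\mu}^n,1}-\overline{V}_{\mu^n,1}$ matches an on-policy Bellman-error quantity already tracked, via Lemma~\ref{lem:value_diff}), I target
\[
\mathbb{E}[(\overline{V}_{n,1}(s_1)-\tilde{V}_{n,1}(s_1))^+\mid\mathcal{H}_{n-1}] \le \mathbb{E}[|\overline{V}_{\mu^n,1}(s_1)-V_{\mu^n,1}(s_1)|\mid\mathcal{H}_{n-1}]+\mathbb{E}[|\tilde{V}_{n,1}(s_1)-V_{\mu^n,1}(s_1)|\mid\mathcal{H}_{n-1}].
\]
Combining with the on-policy term from step one and multiplying the imagined-copy piece by $c$ produces the stated $(c+1)$ and $c$ coefficients after summing over $n$.

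The hard part is the third step: moving from the optimal imagined value $\overline{V}_{n,1}=\overline{V}_{\overline{\mu}^n,1}$ to the policy-evaluated imagined value $\overline{V}_{\mu^n,1}$ without picking up a residual that inflates the bound. This hinges on the iid coupling of $\overline{w}$ and $\tilde{w}$: under the swap, $\mu^n\leftrightarrow\overline{\mu}^n$ and $\tilde{V}\leftrightarrow\overline{V}$, so the offending optimization gap has the same distribution as an on-policy Bellman-error quantity already tracked and does not introduce a Regret$_n$ term on the right-hand side that would render the recursion circular. Getting this symmetry and Bellman-error accounting (via Lemma~\ref{lem:value_diff}) to cleanly yield \emph{exactly} the $(c+1)$ and $c$ coefficients, rather than looser absorption constants, is the main technical obstacle.
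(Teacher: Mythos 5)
Your overall strategy is the right one and is, in substance, the argument the paper itself relies on: the paper states Lemma~\ref{lem:fin} without proof (the surrounding text defers to the analysis of worst-case RLSVI bounds in \cite{russo2019worst}), so your proposal supplies the missing argument rather than diverging from one. The decomposition into a pessimism gap plus an on-policy error, the $H\cdot\mathbb{P}(\hat{M}^n\notin\mathcal{M}^n)$ handling of the failure event (legitimate because that event is $\mathcal{H}_{n-1}$-measurable), and the stochastic-optimism inequality applied with $X=\overline{V}_{n,1}(s_1)$, $Z=\tilde{V}_{n,1}(s_1)$, $y=V_{*,1}(s_1)$ via Lemma~\ref{lem:needcore} are all correct and produce the coefficient $c=\Phi(-1)^{-1}$.

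The one step that does not close as literally written is the last one. Inserting $V_{\mu^n,1}(s_1)$ directly into $(\overline{V}_{n,1}-\tilde{V}_{n,1})^+$ leaves you with $\overline{V}_{n,1}-V_{\mu^n,1}=\overline{V}_{\overline{\mu}^n,1}-V_{\mu^n,1}$, and the residual optimization gap $\overline{V}_{\overline{\mu}^n,1}-\overline{V}_{\mu^n,1}$ has swap image $\tilde{V}_{\mu^n,1}-\tilde{V}_{\overline{\mu}^n,1}$, which is \emph{not} one of the two tracked quantities; chasing it reintroduces cross-policy terms and threatens circularity. The clean order of operations is: first use $\tilde{V}_{n,1}\ge\tilde{V}_{\overline{\mu}^n,1}$ (since $\mu^n$ is optimal for $\tilde{M}^n$) to get $(\overline{V}_{n,1}-\tilde{V}_{n,1})^+\le(\overline{V}_{\overline{\mu}^n,1}-\tilde{V}_{\overline{\mu}^n,1})^+$, then apply the triangle inequality through $V_{\overline{\mu}^n,1}$, and only then invoke the exchangeability of $(\tilde{w},\overline{w})$ given $\mathcal{H}_{n-1}$, under which $\mathbb{E}\bigl[|\overline{V}_{\overline{\mu}^n,1}-V_{\overline{\mu}^n,1}|\bigr]=\mathbb{E}\bigl[|\tilde{V}_{n,1}-V_{\mu^n,1}|\bigr]$ and $\mathbb{E}\bigl[|\tilde{V}_{\overline{\mu}^n,1}-V_{\overline{\mu}^n,1}|\bigr]=\mathbb{E}\bigl[|\overline{V}_{\mu^n,1}-V_{\mu^n,1}|\bigr]$. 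This yields exactly the $(c+1)$ and $c$ coefficients with no slack; with that reordering your proof is complete.
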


\textbf{Online prediction error bounds.} We complete the proof with concentration arguments. Set $\epsilon_R^n(h, s, a) = \hat{R}_{h, s, a}^n - R_{h, s, a} \in \mathbb{R}$ and $\epsilon_P^n(h, s, a) = \hat{P}_{h, s, a}^n - P_{h, s, a} \in \mathbb{R}^S$ to be the error in estimating the mean reward and transition vector corresponding to $(h, s, a)$. The next result follows by bounding each term in Lemma~6. We focus our analysis on bounding $\mathbb{E}\left[\sum_{n=1}^{N}| \tilde{V}_{n,1}(s_1)- V_{\mu^n,1}(s_1)|\right]$. The other term can be bounded in an identical manner, so we omit this analysis.

\begin{lemma}\label{lem:vdif}
Let $c = \Phi(-1)^{-1} < 6.31$. Then for any $N \in \mathbb{N}$,
\begin{align*}
    &\mathbb{E} \left[ \sum_{n=1}^{N} | \tilde{V}_{n,1}(s_1)- V_{\mu^n,1}(s_1)| \right] \leq \sqrt{\mathbb{E} \left[ \sum_{n=1}^{N} \sum_{h=1}^{H-1} ||\epsilon_P^n(h, s_{nh}, a_{nh})||_1^2 \right]} \sqrt{\mathbb{E} \left[ \sum_{n=1}^{N} \sum_{h=1}^{H-1} ||\tilde{V}_{n,h+1}||_{\infty}^2 \right]}\\
    &+ \mathbb{E} \left[ \sum_{n=1}^{N} \sum_{h=1}^{H} |\epsilon_R^n(h, s_{nh}, a_{nh})| \right] + \mathbb{E} \left[ \sum_{n=1}^{N} \sum_{h=1}^{H} |w^n(h, s_{nh}, a_{nh})| \right] + \mathbb{E}\left[ \sum_{h=1}^H \frac{\overline{\nu}}{l_n(h, s_{h}, a_{h}) + \overline{\nu} }H \right].
\end{align*}

\end{lemma}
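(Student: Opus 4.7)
The plan is to reduce the per-episode value-function gap to a sum of four interpretable error terms and then apply Cauchy--Schwarz. The starting point is Lemma~\ref{lem:value_diff}, which I would invoke under the algorithm's greedy policy $\mu^n$ with the true MDP $M$ playing the role of ``$\tilde{M}$'' and the perturbed MDP $\tilde{M}^n$ playing the role of ``$\hat{M}$''. This orientation is deliberate: it yields an expectation whose trajectory is generated by $\mu^n$ in $M$ (matching the observed on-policy state--action pairs $(s_{nh},a_{nh})$), while the value function appearing inside the transition inner product is the algorithm's own $\tilde{V}_{n,h+1}$ (matching the $\|\tilde{V}_{n,h+1}\|_\infty$ factor in the lemma statement).

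Next, I would substitute the perturbed Bellman recursion for $\tilde{V}_{n,h}$ derived in Appendix~\ref{sec:Ma}. Subtracting the true Bellman evaluation $R_{h,s,a} + \langle P_{h,s,a}, V^{\mu^n}_{h+1}\rangle$ and regrouping, the per-step operator gap decomposes into (i) a \emph{prior-shrinkage} piece $\tfrac{\nu_h(s,a)}{l_n(h,s,a)+\nu_h(s,a)}\bigl(\theta_h^*(s,a)-\hat R_{h,s,a}-\langle \hat P_{h,s,a},\tilde V_{n,h+1}\rangle\bigr)$, bounded in absolute value by $\tfrac{\overline{\nu}}{l_n(h,s,a)+\overline{\nu}}H$ via the envelope $\nu_{nh}(s,a)\le\overline{\nu}$ together with uniform boundedness of rewards, value functions, and $\theta_h^*$ by $H$; (ii) the empirical reward error $\epsilon_R^n(h,s,a)$; (iii) the transition error $\langle\epsilon_P^n(h,s,a),\tilde V_{n,h+1}\rangle$; and (iv) the injected Gaussian noise $w^n(h,s,a)$. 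H\"older's inequality controls the transition piece by $\|\epsilon_P^n(h,s,a)\|_1\,\|\tilde V_{n,h+1}\|_\infty$, and this piece runs over $h\le H-1$ because $\tilde V_{n,H+1}\equiv 0$ kills the terminal transition contribution.

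Finally, I would take the triangle inequality across the four pieces, sum over $n=1,\dots,N$ and $h=1,\dots,H$, and identify each block with the corresponding term in the lemma. A single application of Cauchy--Schwarz,
\begin{equation*}
\mathbb{E}\!\sum_{n,h}\|\epsilon_P^n\|_1\,\|\tilde V_{n,h+1}\|_\infty \;\le\; \sqrt{\mathbb{E}\!\sum_{n,h}\|\epsilon_P^n\|_1^2}\,\sqrt{\mathbb{E}\!\sum_{n,h}\|\tilde V_{n,h+1}\|_\infty^2},
\end{equation*}
produces the product-of-square-roots factor; the reward, noise, and shrinkage sums are already in the stated form.

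The main obstacle is the orientation book-keeping in the initial application of Lemma~\ref{lem:value_diff}: the direction in which the lemma is invoked simultaneously determines the measure under which trajectories are drawn and which MDP's value function appears inside the transition inner product, and only one orientation yields a right-hand side that matches the lemma exactly. A secondary but essential technical point is that the envelope $\nu\le\overline{\nu}$ must transfer through the absolute value without collapsing the $l_n(h,s,a)$-dependence in the denominator; this uses monotonicity of $x\mapsto x/(l+x)$ and is the only step where the boundedness hypothesis on $\overline{\nu}$ is actually exploited.
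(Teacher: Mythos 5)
Your proposal is correct and follows essentially the same route as the paper: invoke Lemma~\ref{lem:value_diff} with $\tilde{M}=M$ and $\hat{M}=\tilde{M}^n$ so the expectation runs over the on-policy trajectory while $\tilde{V}_{n,h+1}$ appears in the transition inner product, decompose the per-step Bellman gap into the noise, prior-shrinkage, reward-error, and transition-error pieces exactly as in the proof of Lemma~\ref{lem:core}, bound the shrinkage term by $\tfrac{\overline{\nu}}{l_n+\overline{\nu}}H$, apply H\"older to the transition piece, and finish with Cauchy--Schwarz after summing over $n$ and $h$. No meaningful differences from the paper's argument.
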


The remaining lemmas complete the proof. At each stage, RLSVI adds Gaussian noise with standard deviation no larger than $\tilde{O}(H^{3/2}\sqrt{S})$. Ignoring extremely low probability events, we expect,
\[
\|\tilde{V}_{n,h+1}\|_{\infty} \leq \tilde{O}(H^{5/2}\sqrt{S}) \text{ and hence } \sum_{h=1}^{H-1} \|\tilde{V}_{n,h+1}\|_{\infty}^2 \leq \tilde{O}(H^6S).
\]
The proof of this Lemma makes this precise by applying appropriate maximal inequalities.

\begin{proof}{Proof of lemma \ref{lem:vdif}}
We bound each term in the bound in Lemma \ref{lem:vdif}. By applying Lemma \ref{lem:value_diff} with a choice of $\tilde{M}=M$ and $\hat{M}=\tilde{M}^n$, the largest term is bounded, for any $k\in\mathbb{N}$, and reference to the proof of Lemma \ref{lem:core}, we have
\begin{align*}
\left|\tilde{V}_{n,1}(s_1)- V_{\mu^n,1}(s_1)\right|
&\leq \mathbb{E} \left[  \sum_{h=1}^H |w^n(h, s_{nh}, a_{nh})|\right] \\
& + \mathbb{E}\left[ \sum_{h=1}^H \frac{\nu_h(s_{nh},a_{nh})}{l_n(h, s_{h}, a_{h}) + \nu_h(s_{nh},a_{nh}) }|\theta_h^*(s_{nh},a_{nh})-\hat{R}_{h,s_{nh},a_{nh}} + \langle \hat{P}_{h,s_{nh},a_{nh}},   \tilde{V}_{n,h+1} \rangle|\right]\\
& + \mathbb{E}\left[\sum_{h=1}^H|\hat{R}_{h, s_{nh}, a_{nh}} + \langle \hat{P}_{h, s_{nh}, a_{nh}}, \tilde{V}_{\mu,h+1}\rangle- R_{h, s_{nh}, a_{nh}} - \langle P_{h, s_{nh}, a_{nh}}, \tilde{V}_{n,h+1} \rangle| \right]\\
&\leq \mathbb{E} \left[  \sum_{h=1}^H |w^n(h, s_{nh}, a_{nh})|\right] + \mathbb{E}\left[ \sum_{h=1}^H \frac{\nu_h(s_{nh},a_{nh})}{l_n(h, s_{h}, a_{h}) + \nu_h(s_{nh},a_{nh}) }H \right] \\
& + \mathbb{E}\left[\sum_{h=1}^{H-1}\left\|e_P^n(h,s_{nh},a_{nh})\right\|_1\left\|V_{\mu^n,h+1}\right\|_{\infty}\right]+\mathbb{E}\left[\sum_{h=1}^H |e_R^n(h,s_{nh},a_{nh})|\right]
\end{align*}
\end{proof}

\begin{lemma}\label{lem:need0}
\[
\mathbb{E} \left[ \sum_{n=1}^{N} \sum_{h=1}^{H-1} \|\tilde{V}_{n,h+1}\|_{\infty}^2 \right] = \tilde{O} \left( H^3 \sqrt{SN} \right)
\]
\end{lemma}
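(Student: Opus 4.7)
}
The plan is to separate $\tilde V_{n,h+1}$ into a deterministic component controlled by the Bellman confidence set and a Gaussian fluctuation controlled by a state-space maximal inequality, and then to aggregate across $(n,h)$ using the pigeonhole bound on visit counts. Conditionally on $\mathcal{H}_{n-1}$, the posterior satisfies $\tilde\theta_{n,h+1}\sim\mathcal{N}(\theta^{TS}_{n,h+1},\Sigma^{TS}_{n,h+1})$, so I would write $\tilde Q_{n,h+1}(s,a)=\Phi_{h+1}(s,a)\theta^{TS}_{n,h+1}+\Phi_{h+1}(s,a)\xi_{n,h+1}$ with $\xi_{n,h+1}\sim\mathcal{N}(0,\Sigma^{TS}_{n,h+1})$, and decompose
\[
\|\tilde V_{n,h+1}\|_\infty^{2}\;\le\; 2\,\|\Phi_{h+1}\theta^{TS}_{n,h+1}\|_\infty^{2}+2\,\max_{s,a}\bigl(\Phi_{h+1}(s,a)\xi_{n,h+1}\bigr)^{2}.
\]

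For the deterministic piece, on the event $\hat M^n\in\mathcal M^n$ from Lemma~\ref{lem:confidence} the posterior mean tracks a Bellman-consistent value function, so $\|\Phi_{h+1}\theta^{TS}_{n,h+1}\|_\infty= O(H)$ using that rewards lie in $[0,1]$; the complementary event contributes only $O(1)$ to the double sum because of the $\pi^{2}/6$ bound in Lemma~\ref{lem:confidence}. For the Gaussian piece I would apply the standard subgaussian maximal inequality $\mathbb{E}[\max_{s,a}(\Phi_{h+1}(s,a)\xi_{n,h+1})^{2}\mid\mathcal{H}_{n-1}]\;\lesssim\;\log(SA)\cdot\max_{s,a}\Phi_{h+1}(s,a)\Sigma^{TS}_{n,h+1}\Phi_{h+1}(s,a)^{\top}$, and control the per-coordinate posterior variance by $\beta_n/(l_n(h+1,s,a)+\overline{\nu}^{-1})$ using the closed-form update from Section~\ref{sec:PEKP}. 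Summing over $n\in[N]$ then uses the pigeonhole identity $\sum_{n=1}^{N}1/(l_n(h,s,a)+1)=O(\log N)$ for each $(h,s,a)$, combined with the prescribed $\beta_n=\tilde O(SH^{3})$ from Theorem~\ref{thm:pri}, so that after one final application of Cauchy--Schwarz across $(h,s,a)$ the total collapses to the advertised order.

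The step I expect to be the main obstacle is the Bellman-recursive coupling between $\Sigma^{TS}_{n,h+1}$ and the sampled values $\tilde\theta_{n,j}$ at deeper stages $j>h+1$: the fluctuation $\xi_{n,h+1}$ is not unconditionally independent of the sampled downstream value function, so the na\"ive Gaussian maximal inequality has to be applied after a careful backward induction in $h=H,H-1,\dots,1$, conditioning at each stage on the filtration generated by the already-sampled deeper value functions to freeze the posterior covariance before the inequality is invoked. A secondary subtlety is the passage from $\max_{s,a}$ to $\|\tilde V_{n,h+1}\|_\infty=\max_{s}\max_{a}$, which I would resolve by noting that the outer maximum over states is already dominated by the joint concentration over the $SA$ state--action pairs and thus only inflates the logarithmic factor. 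Once these two issues are handled, the bound $\mathbb{E}[\sum_{n,h}\|\tilde V_{n,h+1}\|_\infty^{2}]=\tilde O(H^{3}\sqrt{SN})$ follows from routine aggregation, matching the informal $H^{5/2}\sqrt{S}$-per-stage sketch given in the lead-in to the lemma.
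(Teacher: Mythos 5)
There is a genuine gap, on two counts. First, your treatment of the ``deterministic'' piece is incorrect: on the event $\hat M^n\in\mathcal M^n$ you cannot conclude $\|\Phi_{h+1}\theta^{TS}_{n,h+1}\|_\infty=O(H)$, because the regression targets $b_{i,h+1}=r_{i,h+1}+\max_\alpha(\Phi_{h+2}\tilde\theta_{i,h+2})(s_{i,h+2},\alpha)$ already contain the \emph{sampled} downstream value function, whose injected noise has standard deviation $\sqrt{\beta_n}=\tilde O(H^{3/2}\sqrt S)\gg H$. The posterior mean therefore inherits noise of this magnitude from every deeper stage; this is precisely why the correct per-episode bound is $\|\tilde V_{n,h}\|_\infty\le H+\sum_{h'\ge h}\max_{s,a}|w_n(h',s,a)|=\tilde O(H^{5/2}\sqrt S)$, not $O(H)$ plus a single layer of fluctuation. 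Second, and more fundamentally, your aggregation step fails: the pigeonhole identity $\sum_{n}1/(l_n(h,s_{nh},a_{nh})+1)=O(\log N)$ is an \emph{on-policy} statement about visited pairs, whereas $\|\tilde V_{n,h+1}\|_\infty$ takes a supremum over \emph{all} $(s,a)$, including pairs never visited, for which $l_n(h+1,s,a)=0$ and the posterior variance remains at $\sigma^{*2}_{h+1}(s,a)=\nu_{n,h+1}(s,a)\,\beta_n=\Theta(\beta_n)$ for every $n$. Hence $\sum_{n}\mathbb{E}[\max_{s,a}(\Phi_{h+1}(s,a)\xi_{n,h+1})^2]=\Theta(N\beta_N\log(SA))$, which is linear in $N$, and no Cauchy--Schwarz rearrangement recovers a $\sqrt N$ rate. (Indeed the left-hand side must grow linearly in $N$ since each episode contributes $\Omega(1)$; the displayed $\tilde O(H^3\sqrt{SN})$ is only consistent with the \emph{square root} of this expectation, which is how the quantity enters Lemma~\ref{lem:vdif} --- the paper's own lead-in computes $\sum_{h}\|\tilde V_{n,h+1}\|_\infty^2\le\tilde O(H^6S)$ per episode, i.e.\ $\tilde O(H^6SN)$ in total, whose square root is $H^3\sqrt{SN}$.)

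For comparison, the paper does not prove this lemma in-house: it defers to Lemma~8 of \cite{russo2019worst}, whose argument is the backward Bellman unrolling $\|\tilde V_{n,h}\|_\infty\le 1+\|\tilde V_{n,h+1}\|_\infty+\max_{s,a}|w_n(h,s,a)|$ followed by a Gaussian maximal inequality over the $SA$ perturbations injected at each stage; visit counts and the pigeonhole principle play no role for this term (they are reserved for Lemmas~\ref{lem:need1}--\ref{lem:need3}). Your backward-induction remark about freezing the posterior covariance stage by stage points in the right direction, but the proposal as written must abandon both the $O(H)$ claim for the posterior mean and the pigeonhole/$\sqrt N$ collapse.
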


The next few lemmas are essentially a consequence of analysis in \cite{osband2013more,osband2016generalization}, and many subsequent papers. We give proof sketches in the appendix. The main idea is to apply known concentration

inequalities to bound $\|\epsilon_P^n(h, s, a)\|_1^2$, $|\epsilon_R^n(h, s_{nh}, a_{nh})|$ or $|w^n(h, s_{nh}, a_{nh})|$ in terms of either $1/l_n(h, s_h, a_h)$ or $1/\sqrt{l_n(h, s_h, a_h)}$. The pigeonhole principle gives $\sum_{n=1}^N \sum_{h=1}^{H} 1/l_n(h, s_h, a_h) = O(\log(SA{NH}))$, $\sum_{n=1}^N \sum_{h=1}^{H} \overline{\nu}/(l_n(h, s_h, a_h)+\overline{\nu}) = O(\overline{\nu}\log(SA{NH}))$ and $\sum_{n=1}^N \sum_{h=1}^{H} (1/\sqrt{l_n(h, s_h, a_h)}) = O(\sqrt{SA{NH}})$.

\begin{lemma}\label{lem:need1}
\[
\mathbb{E} \left[ \sum_{n=1}^N \sum_{h=1}^{H} \|\epsilon_P^n(h, s, a)\|_1^2 \right] = \tilde{O}(S^2AH)
\]
\end{lemma}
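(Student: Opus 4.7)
The plan is to combine a pointwise concentration bound for empirical transition estimates with a pigeonhole counting argument over state-action visits, handling the coupling between the estimator and the random trajectory by passing to a uniform high-probability event.

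First, for any fixed tuple $(n,h,s,a)$ with $l_n(h,s,a) \ge 1$, the empirical estimate $\hat{P}^n_{h,s,a}$ is the average of $l_n(h,s,a)$ i.i.d.\ draws from the categorical distribution $P_{h,s,a}$ on $S$ outcomes. Weissman's $L_1$ deviation inequality yields, with probability at least $1-\delta$,
\[
\|\epsilon_P^n(h,s,a)\|_1^2 \;\le\; C\cdot\frac{S + \log(1/\delta)}{l_n(h,s,a)}.
\]
When $l_n(h,s,a)=0$, the trivial bound $\|\epsilon_P^n(h,s,a)\|_1 \le 2$ gives a constant. Setting $\delta = 1/(N^2HSA)$ and union-bounding over all at most $NHSA$ tuples defines a good event $\mathcal{G}$ on which the above inequality holds uniformly; on the complement, which has probability $O(1/N)$, the crude bound $\|\epsilon_P^n\|_1^2 \le 4$ contributes only $O(1)$ in expectation.

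On $\mathcal{G}$, the uniform concentration decouples the empirical error from the algorithm-chosen trajectory $(s_{nh},a_{nh})$, so
\[
\sum_{n=1}^{N}\sum_{h=1}^{H}\|\epsilon_P^n(h,s_{nh},a_{nh})\|_1^2 \;\le\; \tilde{O}(S)\sum_{n=1}^{N}\sum_{h=1}^{H}\frac{1}{l_n(h,s_{nh},a_{nh})\vee 1}.
\]
For each fixed $h$, the pigeonhole principle gives
\[
\sum_{n=1}^{N}\frac{1}{l_n(h,s_{nh},a_{nh})\vee 1} \;=\; \sum_{(s,a)}\sum_{m=0}^{N_h(s,a)-1}\frac{1}{m\vee 1} \;\le\; SA(1+\log N),
\]
since each revisit of $(s,a)$ at stage $h$ increments $l_n(h,s,a)$ by exactly one. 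Summing over $h$ and combining with the $\tilde{O}(S)$ prefactor yields the claimed $\tilde{O}(S^2AH)$ bound.

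The main obstacle is the coupling between the random trajectory $(s_{nh},a_{nh})$ and the empirical error $\epsilon_P^n$, since both depend on the history $\mathcal{H}_{n-1}$ and the algorithm's randomization. The uniform event $\mathcal{G}$ resolves this cleanly at the cost of only a $\log(NHSA)$ factor, absorbed by $\tilde{O}$. A minor technical point is applying Weissman's inequality with the (random) count $l_n(h,s,a)$ via a standard stopping-time argument, which follows the treatment in \citet{osband2016generalization,russo2019worst}.
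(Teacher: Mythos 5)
Your proposal is correct and follows essentially the same route the paper takes (the paper only sketches this lemma and defers to Lemma 9 of \citet{russo2019worst}): an $L_1$ concentration bound giving $\|\epsilon_P^n\|_1^2 \lesssim S\log(\cdot)/l_n(h,s,a)$ on a uniform high-probability event, followed by the pigeonhole bound $\sum_{n,h} 1/(l_n(h,s_{nh},a_{nh})\vee 1) \le SAH(1+\log N)$. The only nitpick is that on the bad event the crude bound contributes $O(H)$ rather than $O(1)$ in expectation, which is still absorbed by $\tilde{O}(S^2AH)$.
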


\begin{lemma}\label{lem:need2}
\[
\mathbb{E} \left[ \sum_{n=1}^N \sum_{h=1}^{H} |\epsilon_R^n(h, s_{nh}, a_{nh})| \right] = \tilde{O}\left( \sqrt{SANH} \right)
\]
\end{lemma}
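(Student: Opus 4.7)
The plan is to apply the standard Hoeffding-plus-pigeonhole template that the paragraph preceding the lemma explicitly flags. In brief, a Hoeffding bound controls $|\epsilon_R^n(h,s,a)|$ pointwise in terms of $l_n(h,s,a)^{-1/2}$, and the counting identity already invoked for Lemma~\ref{lem:need1} finishes the job once the $1/\sqrt{\ell}$ shape is reduced to a sum of visit counts. No new ideas are required beyond those used by the authors in Lemmas~\ref{lem:need0} and~\ref{lem:need1}; this lemma is simply the scalar, reward-only analogue of Lemma~\ref{lem:need1}, where Hoeffding replaces the $\ell_1$ deviation bound for empirical distributions.

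\textbf{Step 1 (pointwise concentration).} Condition on $l_n(h,s,a)=\ell\ge 1$. Then $\hat{R}_{h,s,a}^n$ is the mean of $\ell$ i.i.d.\ rewards taking values in $[0,1]$, so Hoeffding's inequality gives $\Pr\!\big(|\epsilon_R^n(h,s,a)|>\sqrt{\log(2/\delta)/(2\ell)}\big)\le\delta$. Taking $\delta=(2HSANn)^{-2}$ and union-bounding over every $(n,h,s,a)$ yields that, with probability at least $1-O(1/N)$, one has uniformly
\[
|\epsilon_R^n(h,s,a)|\;\le\;c\,\sqrt{\frac{\log(2HSANn)}{l_n(h,s,a)+1}}.
\]
On the complement event the trivial bound $|\epsilon_R^n|\le 1$ contributes only an $O(H)$ term to the expected sum, which is absorbed into $\tilde O$. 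The edge case $l_n(h,s,a)=0$ is harmless, since by definition $\hat R_{h,s,a}=0$ and the resulting $|\epsilon_R^n|\le 1$ is matched by the $+1$ in the denominator.

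\textbf{Step 2 (pigeonhole).} Pulling the polylog factor outside and grouping the sum over $n$ by the visited $(s,a)$ pair at each stage $h$ gives
\[
\sum_{n=1}^{N}\frac{1}{\sqrt{l_n(h,s_{nh},a_{nh})+1}}\;=\;\sum_{s,a}\sum_{j=0}^{l_N(h,s,a)-1}\frac{1}{\sqrt{j+1}}\;\le\;2\sum_{s,a}\sqrt{l_N(h,s,a)}\;\le\;2\sqrt{SA\cdot N},
\]
where the last inequality is Cauchy-Schwarz together with $\sum_{s,a}l_N(h,s,a)=N$. Summing over $h$ and combining with Step~1 delivers the claimed $\tilde O(\sqrt{SANH})$ bound, matching the authors' notation.

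\textbf{Main obstacle.} There is no substantive obstacle; the only care needed is bookkeeping. Specifically, the union bound must be uniform in $n$ so that the logarithmic factor $\log(2HSANn)$ is valid across all episodes, and the $\ell=0$ edge case must be treated so that the $1/\sqrt{l+1}$ form holds pointwise without a separate case. These are the same mechanical considerations that appear in the proofs of Lemmas~\ref{lem:need0} and~\ref{lem:need1}, and they do not change the order of the final bound.
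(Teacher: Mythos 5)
Your proof follows essentially the same route as the paper, which for this lemma only gives a sketch (Hoeffding to get a $1/\sqrt{l_n(h,s,a)+1}$ pointwise bound, then the pigeonhole/Cauchy--Schwarz argument) and defers the details to Lemma~10 of \cite{russo2019worst}; your Steps 1 and 2 are exactly that argument made explicit, including the handling of the $l_n=0$ case and the failure event. One bookkeeping point: your Step~2 display yields $2\sqrt{SAN}$ \emph{per stage} $h$, so summing over $h$ gives $2H\sqrt{SAN}=2\sqrt{SANH^{2}}$ rather than $\sqrt{SANH}$; this $\sqrt{H}$ looseness is present verbatim in the paper's own sketch (and is harmless for the downstream regret bound, where this term is dominated), so it is not a gap you introduced, but if the stated rate is to be taken literally the sum over $(h,s,a)$ must be accounted for as in Russo's original lemma rather than stage-by-stage.
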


\begin{lemma}\label{lem:need3}
\[
\mathbb{E} \left[ \sum_{n=1}^N \sum_{h=1}^{H} |w^n(h, s_{nh}, a_{nh})| \right] = \tilde{O}\left( H^{3/2}S\sqrt{ANH} \right)
\]
\end{lemma}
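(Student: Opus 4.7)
}
My plan is to mirror the structure of the companion Lemma \ref{lem:need2}, but to replace Hoeffding concentration for the empirical reward error $\epsilon_R^n$ with a direct Gaussian moment bound for the injected posterior noise $w^n$. Recall from Section \ref{sec:Ma} that, conditionally on $\mathcal{H}_{n-1}$, the perturbation satisfies $w^n(h,s,a)\sim \mathcal{N}\!\bigl(0,\beta_n/(l_n(h,s,a)+\nu_h(s,a))\bigr)$. Thus the half-normal moment formula immediately yields the pointwise bound
\[
\mathbb{E}\bigl[|w^n(h,s,a)|\,\big|\,\mathcal{H}_{n-1}\bigr] \;\le\; \sqrt{\tfrac{2}{\pi}}\,\sqrt{\tfrac{\beta_n}{l_n(h,s,a)+\nu_h(s,a)}} \;\le\; C\,\sqrt{\beta_n}\big/\sqrt{l_n(h,s,a)+1}.
\]

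Second, I would address the measurability issue that $(s_{nh},a_{nh})$ is itself a function of $w^n$, since the greedy action $a_{nh}\in\arg\max_a (\Phi_h\tilde{\theta}_{nh})(s_{nh},a)$ depends on the very noise we are averaging. Here I would invoke the standard trick used in the RLSVI analysis of \citet{osband2016generalization,russo2019worst}: bound $|w^n(h,s_{nh},a_{nh})|\le \max_{(s,a)\in\mathcal{S}\times\mathcal{A}}|w^n(h,s,a)|$ and apply the Gaussian maximal inequality, which costs only a $\sqrt{\log(SA)}$ factor absorbed by $\tilde{O}$; alternatively, I would use the second-moment domination $|w^n(h,s_{nh},a_{nh})|^2\le \sum_{(s,a)}|w^n(h,s,a)|^2$ followed by Jensen. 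Either route produces the same bound after taking expectation over $\mathcal{H}_{n-1}$.

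Third, I would pull out the monotone factor $\sqrt{\beta_N}$ (which dominates $\sqrt{\beta_n}$ for all $n\le N$) and reduce to controlling
\[
\mathbb{E}\sum_{n=1}^N\sum_{h=1}^H |w^n(h,s_{nh},a_{nh})| \;\le\; C\sqrt{\beta_N}\,\cdot\,\mathbb{E}\sum_{n=1}^N\sum_{h=1}^H \frac{1}{\sqrt{l_n(h,s_{nh},a_{nh})+1}}.
\]
The expected inverse-square-root sum is exactly the standard visitation pigeonhole identified by the paper in the paragraph preceding Lemma \ref{lem:need1}: Cauchy--Schwarz across $SAH$ state--action--stage triples with total visits $NH$ gives $\tilde{O}(\sqrt{SANH})$. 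Finally, substituting the prescribed noise level $\beta_N=4\max(1,\overline{\nu})SH^3\log(2HSAN)=\tilde{O}(SH^3)$ gives $\sqrt{\beta_N}=\tilde{O}(H^{3/2}\sqrt{S})$, and multiplying yields $\tilde{O}\!\bigl(H^{3/2}\sqrt{S}\cdot\sqrt{SANH}\bigr)=\tilde{O}\!\bigl(H^{3/2}S\sqrt{ANH}\bigr)$, the claimed bound.

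The only non-routine step is the second one: rigorously decoupling $|w^n(h,s_{nh},a_{nh})|$ from the selection of $(s_{nh},a_{nh})$ without inflating the bound. The remaining ingredients (Gaussian moment computation, pigeonhole, and plugging in $\beta_N$) are mechanical, and the overall scaffold is a direct analog of the proof of Lemma \ref{lem:need2}.
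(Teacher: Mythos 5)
Your proposal is correct and follows essentially the same route the paper takes (and attributes to Lemma~11 of \citet{russo2019worst}): bound the conditional Gaussian moment of $w^n$ by $\tilde{O}(\sqrt{\beta_n}/\sqrt{l_n(h,s,a)+1})$, handle the dependence of the visited pair on the noise by a maximal bound over $(s,a)$, apply the pigeonhole bound $\sum_{n,h} 1/\sqrt{l_n(h,s_{nh},a_{nh})} = \tilde{O}(\sqrt{SANH})$, and substitute $\beta_N=\tilde{O}(SH^3)$. No gaps worth noting.
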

The detail proof of Lemma \ref{lem:need0}, \ref{lem:need1},\ref{lem:need2} and \ref{lem:need3} can be found in lemma 8, 9, 10 and 11 of paper \cite{russo2019worst}. And then we plug lemma \ref{lem:vdif}, \ref{lem:need0}, \ref{lem:need1},\ref{lem:need2} and \ref{lem:need3} in \ref{lem:fin}, than we get the regret bound.

\section{Explanation of Algorithm \ref{alg:TSRL+}'s exploration periods}\label{append:proe}
We first state the following lemma.

\begin{lemma}\label{lem:tau}
For any MDP epoch $k \in [K]$, the length of the random exploration periods $\mathcal{N}_k $ is upper bounded by $\mathcal{N}_e = \frac{\lambda_e}{\lambda_0}$.
\end{lemma}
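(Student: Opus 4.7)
The plan is to apply Weyl's additivity inequality for the smallest eigenvalue together with the standing lower bound on the per-step feature contribution, which makes the argument essentially deterministic and independent of the exploration policy.

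First, for each stage $h\in[H]$, I would note that
$$V^{(k)}_{nh} \;=\; \sum_{i=1}^{n-1} \Phi_h^\top(s^{(k)}_{ih}, a^{(k)}_{ih})\,\Phi_h(s^{(k)}_{ih}, a^{(k)}_{ih})$$
is a sum of $n-1$ positive semidefinite matrices. By the standing assumption $\min_{h,s,a}\lambda_{\min}\bigl(\Phi_h^\top(s,a)\Phi_h(s,a)\bigr)\ge\lambda_0$, every summand has minimum eigenvalue at least $\lambda_0$, regardless of which trajectory realization $(s^{(k)}_{ih},a^{(k)}_{ih})$ produces it. This uniform lower bound is the key input and completely sidesteps any distributional analysis of the state visitation.

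Next, I would invoke Weyl's inequality for PSD matrices: $\lambda_{\min}(A+B)\ge\lambda_{\min}(A)+\lambda_{\min}(B)$. Iterating over the $n-1$ summands yields
$$\lambda_{\min}\!\bigl(V^{(k)}_{nh}\bigr) \;\ge\; (n-1)\,\lambda_0$$
deterministically along every sample path. Consequently, as soon as $n-1\ge\lambda_e/\lambda_0$, the termination threshold $\lambda_{\min}(V^{(k)}_{nh})\ge\lambda_e$ is attained at stage $h$. Since this growth rate is uniform in $h$, the inner while-loop condition in Algorithm~\ref{alg:TSRL+} is satisfied at every $h$ simultaneously. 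Taking $\mathcal{N}_k=\max_h\mathcal{N}_h^{(k)}$, this gives $\mathcal{N}_k\le\lambda_e/\lambda_0=\mathcal{N}_e$, as claimed.

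No probabilistic or Bellman-propagation machinery is required for this particular lemma: the feature-coverage assumption bakes in a per-step increment in $\lambda_{\min}$, so neither the exploration randomness nor the order of visited $(s,a)$ pairs enters the bound. The only conceptual subtlety is that Weyl's inequality can be loose when features are nearly collinear, but since the target statement is only an $\tilde O(1)$ upper bound on the initialization length, this looseness is harmless. The main obstacle, if any, would be ensuring that the loop indexing in Algorithm~\ref{alg:TSRL+} is compatible with the above ``$(n-1)\lambda_0$'' counting; this is resolved by a one-step shift, which does not alter the order of the bound.
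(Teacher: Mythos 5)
Your proposal is correct and follows essentially the same route as the paper: the paper also observes that each summand satisfies $\Phi_h^\top(s,a)\Phi_h(s,a)\succeq\lambda_0\mathbf{I}$ by the coverage assumption, concludes $V^{(k)}_{nh}\succeq(n-1)\lambda_0\mathbf{I}$ deterministically (your Weyl step is just the eigenvalue form of this Loewner-order argument), and reads off the bound $\mathcal{N}_k\le\lambda_e/\lambda_0$. The off-by-one you flag is present in the paper as well and is immaterial to the $\tilde O(1)$ conclusion.
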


In other words, we incur at most logarithmic regret due to the initial random exploration in Algorithm 1.



\begin{proof}{Proof of lemma \ref{lem:tau}}
Recall that $V^{(k)}_{nh} = \sum_{i=1}^{n-1}\Phi_{h}^\top(s^{(k)}_{ih},a^{(k)}_{ih})\Phi_{h}(s^{(k)}_{ih},a^{(k)}_{ih})$ is the Fisher information matrix of MDP epoch $j$ after $n$ episode. 




\begin{lemma}\label{lem:lamb}
For all $n \leq \mathcal{N}_k $, the minimum eigenvalue of $V_{i,t}$ is lower bounded as
\begin{align*}
    \lambda_{min}(V^{(k)}_{nh}) \ge \lambda_{0}(n-1), \forall j,h.
\end{align*}
\end{lemma}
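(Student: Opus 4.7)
The plan is to prove this lemma by a straightforward application of Weyl's inequality (sub-additivity of the minimum eigenvalue over positive semi-definite matrices), leveraging the standing feature assumption stated just above Lemma~\ref{lem:tau}, namely $\min_{h,s,a} \lambda_{\min}(\Phi_h^\top(s,a)\Phi_h(s,a)) \ge \lambda_0$. The proof is short and purely linear-algebraic; no stochastic argument is required because the claimed bound holds deterministically for every realized exploration trajectory, regardless of which $(s,a)$ pairs RLSVI visits.

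First, I would write out $V^{(k)}_{nh}$ as the sum
\begin{equation*}
V^{(k)}_{nh} \;=\; \sum_{i=1}^{n-1} \Phi_h^\top(s^{(k)}_{ih}, a^{(k)}_{ih})\,\Phi_h(s^{(k)}_{ih}, a^{(k)}_{ih}),
\end{equation*}
and observe that each summand is a symmetric PSD matrix whose minimum eigenvalue is, by the standing assumption applied pointwise at the realized pair $(s^{(k)}_{ih}, a^{(k)}_{ih})$, at least $\lambda_0$. Second, I would invoke Weyl's inequality $\lambda_{\min}(A+B) \ge \lambda_{\min}(A) + \lambda_{\min}(B)$ for Hermitian matrices, applied inductively across the $(n-1)$ summands, to conclude
\begin{equation*}
\lambda_{\min}\bigl(V^{(k)}_{nh}\bigr) \;\ge\; \sum_{i=1}^{n-1} \lambda_{\min}\!\Bigl(\Phi_h^\top(s^{(k)}_{ih}, a^{(k)}_{ih})\,\Phi_h(s^{(k)}_{ih}, a^{(k)}_{ih})\Bigr) \;\ge\; (n-1)\,\lambda_0,
\end{equation*}
which is exactly the stated inequality, holding uniformly for any $h$, any sample-path realization during exploration, and any $n \le \mathcal{N}_k$.

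Finally, I would indicate how this feeds back into the proof of Lemma~\ref{lem:tau}: by the definition of the random stopping time $\mathcal{N}^{(k)}_h$, an episode immediately before termination of the exploration loop at stage $h$ satisfies $\lambda_{\min}\bigl(V^{(k)}_{\mathcal{N}^{(k)}_h - 1,\, h}\bigr) < \lambda_e$, and combining this with the lower bound $\lambda_0\bigl(\mathcal{N}^{(k)}_h - 2\bigr) \le \lambda_{\min}\bigl(V^{(k)}_{\mathcal{N}^{(k)}_h - 1,\, h}\bigr)$ yields the stated deterministic upper bound $\mathcal{N}_k \le \lambda_e/\lambda_0$ (up to a harmless additive constant). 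The only subtle point to verify is that the standing lower-bound assumption on the feature Gram matrix is genuinely uniform over $(h,s,a)$, so that it can be applied pointwise regardless of which actions the exploratory TSRL draws; no obstacle beyond this bookkeeping is anticipated, as the argument requires neither concentration inequalities nor any handle on the exploration distribution.
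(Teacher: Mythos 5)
Your proof is correct and follows essentially the same route as the paper: the paper likewise invokes the standing assumption $\min_{h,s,a}\lambda_{\min}(\Phi_h^\top(s,a)\Phi_h(s,a))\ge\lambda_0$ to get $\Phi_h^\top(s,a)\Phi_h(s,a)\succeq\lambda_0\mathbf{I}$ for each summand and sums these Loewner-order bounds to conclude $V^{(k)}_{nh}\succeq\lambda_0(n-1)\mathbf{I}$, which is the same argument as your inductive application of Weyl's inequality. Your concluding remark on how the bound feeds into Lemma~\ref{lem:tau} also matches the paper's usage.
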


Because we have $\min_{h,s,a} \lambda_{min}(\Phi_{h}^\top(s,a)\Phi_{h}(s,a))\ge\lambda_{0}$ from the assumption, it's obvious that $\Phi_{h}^\top(s,a)\Phi_{h}(s,a)\succeq \lambda_{0}\mathbf{I}$, so we have $V^{(k)}_{nh} \succeq \lambda_{0}(n-1)\mathbf{I}$.
It means $\lambda_{min}(V^{(k)}_{nh}) \ge \lambda_{0}(n-1)$.

Then using \ref{lem:lamb}, we know that after at most $\frac{\lambda_e}{\lambda_0}$ episode, we have $\lambda_{\min}(V_{nh}^{(k)}) \geq \lambda_e, \forall j,h$.
\end{proof}

\section{Proof of Theorem \ref{thm:MTSRL}}\label{append:proMSTRL}
We begin by defining some helpful notation. First, let
\begin{align*}
\text{REV} \left( \{\theta_h^{(k)}\}, \{\hat{\theta}_h^{(k)}\}, \{\Sigma_h^{(k)}\},N\right) = \sum_{n=1}^{N}\mathbb{E} \left[ \sum_{h=1}^{H} r_{nh}^{(k)} \right],
\end{align*}

be the expected total reward obtained by running $\text{TSRL}^{+}(\{\hat{\theta}_h^{(k)}\},  \{\Sigma_h^{(k)}\}, \lambda_{e}=0,N)$ — the Thompson sampling algorithm in Algorithm \ref{alg:TSRL+} with the (possibly incorrect) prior $\left( \{\hat{\theta}_h^{(k)}\} , \{\Sigma_h^{(k)}\} \right)$ and exploration parameter $\lambda_e = 0$ — in a MDP epoch with true parameter $\{\theta_h^{(k)}\}$. Second, let
\begin{align*}
\text{REV}_* (\{\theta_h^{(k)}\},N) = \sum_{n=1}^{N}\mathbb{E} \left[ \sum_{h=1}^{H} V_{*,1}(s_{n1}^{(k)}) \right],
\end{align*}

be the expected value over $n$ time steps obtained by the oracle — in a MDP epoch with true parameter $\{\theta_h^{(k)}\}$. And at last, We define $\beta$ as the constant perturbation variance parameter, selected as in \ref{sec:PEKP}, for episode $n$ of MDP $k$ with subscripts omitted for brevity.

\subsection{``Prior Alignment'' Proof Strategy}

In each non-exploration MDP epoch $k > K_0$, the meta oracle starts with the true prior $(\{\theta_h^{*}\}, \{\Sigma^{*}\})$ while our algorithm MTSRL starts with the estimated prior $(\{\hat{\theta}_h^{(k)}\}, \{\Sigma^{*}\})$. The following lemma bounds the error of the estimated prior mean with high probability:

\begin{lemma}\label{lem:theta}
For any fixed $j \geq 2$ and $\delta \in [0, 2/e]$, if $\lambda_{max}(\Sigma_h^{*}) \le \overline{\lambda}$, then with probability at least $1 - \delta$,
\begin{align*}
\left\| \hat{\theta}_h^{(k)}- \theta_h^{*} \right\| \leq 8\sqrt{\frac{M(\beta/\lambda_e + 5\bar{\lambda})\log_e(2M/\delta)}{k}}.
\end{align*}
\end{lemma}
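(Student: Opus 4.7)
}
The plan is to decompose the total error into a prior-variability component and a per-task estimation component, show each is sub-Gaussian with independence across tasks, and close with a coordinate-wise concentration plus union bound over the $M$ coordinates. Concretely, write
\begin{align*}
\hat{\theta}_h^{(k)} - \theta_h^{*}
&= \frac{1}{k-1}\sum_{j=1}^{k-1}\bigl(\dot{\theta}_h^{(j)} - \theta_h^{*}\bigr)\\
&= \underbrace{\frac{1}{k-1}\sum_{j=1}^{k-1}\bigl(\theta_h^{(j)} - \theta_h^{*}\bigr)}_{=:\,A_h^{(k)}} \;+\; \underbrace{\frac{1}{k-1}\sum_{j=1}^{k-1}\bigl(\dot{\theta}_h^{(j)} - \theta_h^{(j)}\bigr)}_{=:\,B_h^{(k)}}.
\end{align*}
The tasks are independent draws from the shared prior, so the two sums are sums of $k-1$ independent zero-mean terms and the coordinates of $A_h^{(k)}$ and $B_h^{(k)}$ can be controlled separately.

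\paragraph{Step 1: Control $A_h^{(k)}$ (prior variability).} Since $\theta_h^{(j)}-\theta_h^{*}\sim\mathcal{N}(0,\Sigma_h^{*})$ independently across $j$, each coordinate of $A_h^{(k)}$ is Gaussian with variance at most $\lambda_{\max}(\Sigma_h^{*})/(k-1)\le \bar\lambda/(k-1)$. A standard Gaussian tail bound combined with a union bound over the $M$ coordinates yields
\begin{align*}
\|A_h^{(k)}\|_2 \le C_1\sqrt{\frac{M\,\bar\lambda\,\log(M/\delta)}{k-1}}
\end{align*}
with probability at least $1-\delta/2$.

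\paragraph{Step 2: Control $B_h^{(k)}$ (per-task OLS error).}
By the exit condition of the initialization phase of $\mathrm{TSRL}^{+}$ we have $\lambda_{\min}(V_{Nh}^{(j)})\ge \lambda_e$ almost surely. Writing $\dot{b}_{ih}^{(j)}=\Phi_h(s_{ih}^{(j)},a_{ih}^{(j)})\theta_h^{(j)}+\varepsilon_{ih}^{(j)}$, the normal-equation identity gives
\begin{align*}
\dot{\theta}_h^{(j)}-\theta_h^{(j)} \;=\; V_{Nh}^{(j)^{-1}}\sum_{i=1}^{N}\Phi_h(s_{ih}^{(j)},a_{ih}^{(j)})^{\top}\varepsilon_{ih}^{(j)}.
\end{align*}
Conditional on the task's trajectories, the residuals $\varepsilon_{ih}^{(j)}$ are $\sqrt{\beta}$-sub-Gaussian (the same noise scale used in the posterior construction in Section \ref{sec:PEKP}, now propagated backward in $h$ via the plug-in targets $\dot{\theta}_{h+1}^{(j)}$). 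A standard self-normalized sub-Gaussian bound for least squares yields a conditional covariance dominated by $\beta V_{Nh}^{(j)^{-1}}\preceq(\beta/\lambda_e)I$, and hence each coordinate of $\dot{\theta}_h^{(j)}-\theta_h^{(j)}$ is sub-Gaussian with parameter of order $\sqrt{\beta/\lambda_e}$. Averaging over the $k-1$ independent tasks and using a coordinate-wise Hoeffding bound plus a union bound over $M$ coordinates gives
\begin{align*}
\|B_h^{(k)}\|_2 \le C_2\sqrt{\frac{M\,(\beta/\lambda_e)\,\log(M/\delta)}{k-1}}
\end{align*}
with probability at least $1-\delta/2$.

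\paragraph{Step 3: Combine.}
Union-bounding the two events and using $\sqrt{x}+\sqrt{y}\le \sqrt{2(x+y)}$, together with the fact that $k/(k-1)\le 2$ and the slack absorbed by replacing $\bar\lambda$ with $5\bar\lambda$ and $\log(M/\delta)$ with $\log(2M/\delta)$, produces the claimed bound with constant $8$. The main obstacle is the second step: justifying that the backward-plug-in targets $\dot{b}_{ih}^{(j)}$ have $\sqrt{\beta}$-sub-Gaussian residuals despite using the estimated $\dot{\theta}_{h+1}^{(j)}$ in the $\max$. I would handle this by a backward induction on $h$, conditioning on the later-stage estimates and applying a Lipschitz argument to the $\max$ together with the bounded-reward assumption so that the induced deviation at stage $h$ inherits only a constant-factor increase in the sub-Gaussian parameter; this is where the inflation from $\bar\lambda$ to $5\bar\lambda$ in the final bound originates.
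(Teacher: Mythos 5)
Your proposal is correct and follows essentially the same route as the paper: the identical decomposition of $\hat{\theta}_h^{(k)}-\theta_h^{*}$ into a prior-variability average and a per-task OLS-error average, each controlled by sub-Gaussian concentration over the $k-1$ independent tasks and combined by triangle inequality (the paper uses a $4^{M}$ covering-net argument plus a norm-sub-Gaussian vector Hoeffding inequality where you use coordinate-wise union bounds, which is a cosmetic difference). One small correction: the inflation from $\bar\lambda$ to $5\bar\lambda$ originates in the paper from the covering-number/dimension penalty in bounding $\|\theta_h^{(j)}-\theta_h^{*}\|$, not from the backward Bellman plug-in step, which the paper itself dispatches with only a ``simple iteration'' remark — so your backward-induction/Lipschitz sketch for that step is at least as careful as the original.
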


We will first proof this lemma.
\begin{proof}{Proof of lemma \ref{lem:theta}}
Lemma~\ref{lem:theta} establishes that after observing $j$ MDP epochs of length $n$, our estimator $\hat{\theta}_h^{(k)}$ of the unknown prior mean $\theta^*_h$ is close to it with high probability. 
To prove Lemma~\ref{lem:theta}, we first demonstrate that, by the end of each MDP epoch, the estimated parameter vector $\dot{\theta}_h^{(k)}$ is likely close to the true parameter vector $\theta_h^{(k)}$ (Lemma~\ref{lem:j1}). 
This result implies that the empirical average $\frac{1}{k-1}\sum_{i=1}^{k-1}\dot{\theta}_h^{(i)}$ is also close to the average of the true parameters $\frac{1}{k-1}\sum_{i=1}^{k-1}\theta_h^{(i)}$ (Lemma~\ref{lem:j3}). 
We then show that this latter average serves as a good approximation of the true prior mean $\theta^*_h$ (Lemma~\ref{lem:j4}). 
Combining these results through a triangle inequality completes the proof of Lemma~\ref{lem:theta}.

We first state two useful lemmas from the literature regarding the concentration of OLS estimates and the matrix Hoeffding bound.

\begin{lemma}\label{lem:j1}
For any MDP epoch $k \in [K]$ and $\delta \in [0,2/e]$, conditional on $F_{hk} = \sigma(\dot{\theta}_h^{(1)}, \ldots, \dot{\theta}_h^{(k-1)})$, we have
\begin{align*}
\Pr \left( \| \dot{\theta}_h^{(k)} - \theta_h^{(k)} \| \geq 2 \sqrt{\frac{M\beta\log_e(2/\delta)}{\lambda_e}} \biggm| F_{hk} \right) \leq \delta,
\end{align*}
\end{lemma}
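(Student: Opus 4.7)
The plan is to treat Lemma~\ref{lem:j1} as a standard sub-Gaussian concentration bound for an OLS estimator, with the non-trivial wrinkle that the regression targets $\dot b_{ih}^{(k)}$ involve the random quantity $\dot\theta_{h+1}^{(k)}$ produced at the next stage of the \emph{same} epoch. I would absorb the Bellman backup residual into the noise budget encoded by $\beta$ via a backward-induction argument. The conditioning on $F_{hk}$ is essentially a bookkeeping device for the later cross-epoch averaging step used to prove Lemma~\ref{lem:theta}; the only randomness at stake here is the epoch-$k$ trajectory.

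First, from the normal equations $V_{Nh}^{(k)}\dot\theta_h^{(k)} = \sum_i \Phi_h^\top(s_{ih}^{(k)},a_{ih}^{(k)})\,\dot b_{ih}^{(k)}$ and the linear parameterization $Q_{*,h}^{(k)}(s,a)=\Phi_h(s,a)\theta_h^{(k)}$, I obtain the OLS error decomposition
\[
\dot\theta_h^{(k)}-\theta_h^{(k)} \;=\; V_{Nh}^{(k)\,-1}\sum_{i=1}^{N}\Phi_h^\top(s_{ih}^{(k)},a_{ih}^{(k)})\,\eta_{ih}^{(k)}, \qquad \eta_{ih}^{(k)} \,:=\, \dot b_{ih}^{(k)} - \Phi_h(s_{ih}^{(k)},a_{ih}^{(k)})\,\theta_h^{(k)}.
\]
I would then split $\eta_{ih}^{(k)}$ along the trajectory filtration into three pieces: (i) the reward noise $r_{ih}^{(k)}-\mathbb{E}[r\mid s_{ih}^{(k)},a_{ih}^{(k)}] \in [-1,1]$, (ii) the transition noise $V_{*,h+1}^{(k)}(s_{i,h+1}^{(k)})-\mathbb{E}[V_{*,h+1}^{(k)}(s')\mid s_{ih}^{(k)},a_{ih}^{(k)}]$ bounded by $H$, and (iii) a Bellman-backup residual $\max_\alpha(\Phi_{h+1}\dot\theta_{h+1}^{(k)})(s_{i,h+1}^{(k)},\alpha)-V_{*,h+1}^{(k)}(s_{i,h+1}^{(k)})$. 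The first two are martingale differences in the natural trajectory filtration; the third is handled by backward induction from $h=H$, where $\dot\theta_{H+1}^{(k)}=0$ yields a zero residual as the base case.

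Second, granted that $\eta_{ih}^{(k)}$ is bounded and therefore sub-Gaussian with an aggregate variance proxy subsumed by $\beta$, I would invoke a self-normalized vector concentration bound (e.g., Abbasi-Yadkori--P\'al--Szepesv\'ari) to get
\[
\Bigl\|\sum_{i=1}^N \Phi_h^\top(s_{ih}^{(k)},a_{ih}^{(k)})\,\eta_{ih}^{(k)}\Bigr\|_{V_{Nh}^{(k)\,-1}}^{2} \;\le\; 2\beta\,M\,\log_e(2/\delta)
\]
with conditional probability at least $1-\delta$. Converting this to the Euclidean norm via $\lambda_{\min}(V_{Nh}^{(k)})\ge\lambda_e$, which is guaranteed by the RLSVI warm-up in Algorithm~\ref{alg:TSRL+} (Lemma~\ref{lem:tau}), produces exactly the stated bound $2\sqrt{M\beta\log_e(2/\delta)/\lambda_e}$.

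The main obstacle will be the coupling between $\dot\theta_{h+1}^{(k)}$ and the stage-$h$ noise sequence: because $\dot\theta_{h+1}^{(k)}$ is constructed from the same epoch-$k$ data used for the stage-$h$ regression, $\eta_{ih}^{(k)}$ is not immediately a clean martingale difference in the natural filtration. I would resolve this either by a strict backward induction that inflates the sub-Gaussian parameter at each stage by an $O(1)$ factor that $\beta$ already accommodates, or by a union bound over a finite $\varepsilon$-net of plausible values of $\dot\theta_{h+1}^{(k)}$ combined with Lipschitz continuity of the $\max_\alpha \Phi_{h+1}\theta$ operator. Either route reduces the problem to the standard bounded-noise OLS concentration and yields the claimed bound.
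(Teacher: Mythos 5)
Your plan follows essentially the same architecture as the paper's (very terse) proof: the paper handles $h=H$ by citing a standard OLS/bandit concentration result (Theorem~4.1 of \cite{zhu2018learning}), where $\dot b_{iH}^{(k)}=r_{iH}^{(k)}$ so there is no Bellman backup, and then asserts that the case $h<H$ follows ``by simple iteration.'' Your base case, your use of $\lambda_{\min}(V_{Nh}^{(k)})\ge\lambda_e$ from the warm-up, and your backward induction are exactly that argument made explicit, and you correctly identify the one issue the paper glosses over: the stage-$h$ residual $\eta_{ih}^{(k)}$ contains the term $\max_\alpha(\Phi_{h+1}\dot\theta_{h+1}^{(k)})(s_{i,h+1}^{(k)},\alpha)-V_{*,h+1}^{(k)}(s_{i,h+1}^{(k)})$, which depends on the entire epoch-$k$ trajectory and is therefore neither mean-zero nor adapted in the natural filtration.

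That said, the induction step is the crux, and neither your proposal nor the paper closes it. Your claim that the backup residual only ``inflates the sub-Gaussian parameter by an $O(1)$ factor that $\beta$ already accommodates'' is not justified: the residual is a \emph{bias}, not additional martingale noise, and a direct bound on its contribution to the OLS error, $\bigl\|V_{Nh}^{(k)\,-1}\sum_{i}\Phi_h^\top(s_{ih}^{(k)},a_{ih}^{(k)})\,c_i\bigr\|$ with $|c_i|\le\Phi_{\max}\|\dot\theta_{h+1}^{(k)}-\theta_{h+1}^{(k)}\|$, yields a factor of order $\sqrt{N/\lambda_e}\,\Phi_{\max}$ multiplying the stage-$(h{+}1)$ error, not a constant. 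Compounded over $H$ stages this would far exceed the stated uniform-in-$h$ bound $2\sqrt{M\beta\log_e(2/\delta)/\lambda_e}$, so the induction as sketched does not deliver the lemma without either a much larger $\beta$, a restriction to the initialization window where $N$ is replaced by $\mathcal{N}_k=\tilde O(1)$, or some additional argument. Your $\varepsilon$-net alternative faces the same quantitative problem. Since the lemma's constant has no slack for any per-stage inflation, this is a genuine gap in the proposal --- though it is the same gap the paper hides behind ``simple iteration.''
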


\textbf{Proof of Lemma \ref{lem:j1}:} When $h=H$, this result follows from Theorem 4.1 in bandit scenario of \cite{zhu2018learning}, where we note that $K/2 + \log_e(2/\delta) \leq K\log_e(2/\delta)$ for $\delta < 2/e$. By the situation $h<H$, this result can be obtained by simple iteration.

\begin{lemma}[\cite{jin2019short}]\label{lem:j2}
Let random vectors $X_1, \ldots, X_K \in \mathbb{R}^M$, satisfy that for all $k \in [K]$ and $u \in \mathbb{R}$,
\begin{align*}
\mathbb{E}[X_k \mid \sigma(X_1, \ldots, X_{k-1})] = 0, \quad \Pr\left( \|X_k\| \geq u \mid \sigma(X_1, \ldots, X_{k-1}) \right) \leq 2 \exp\left(-\frac{u^2}{2\sigma_k^2}\right),
\end{align*}

then for any $\delta > 0$,
\begin{align*}
\Pr\left( \left\| \sum_{k \in [K]} X_k \right\| \leq 4 \sqrt{\sum_{k \in [K]} \sigma_k^2 \log_e (2M/\delta)} \right) \geq 1 - \delta.
\end{align*}
\end{lemma}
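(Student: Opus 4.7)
My plan is to treat Lemma~\ref{lem:j2} as a Hilbert-space Hoeffding inequality for vector-valued martingale difference sequences in $\mathbb{R}^M$ whose increment norms are conditionally sub-Gaussian, and to reduce it to a combination of a scalar exponential-martingale argument, a projection onto arbitrary directions, and a union bound that absorbs the dimension factor into the $\log(2M/\delta)$ term.

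First I would verify the martingale structure and transfer the sub-Gaussianity to linear functionals. Let $\mathcal{F}_k = \sigma(X_1,\ldots,X_k)$ and $S_k = \sum_{i \le k} X_i$; the hypothesis $\mathbb{E}[X_{k+1}\mid\mathcal{F}_k]=0$ makes $(S_k)$ a vector martingale. For any fixed unit vector $v$, the scalar projection $v^\top X_k$ is a scalar martingale difference, and Cauchy--Schwarz gives $|v^\top X_k| \le \|X_k\|$, so the assumed tail $\Pr(\|X_k\|\ge u\mid\mathcal{F}_{k-1})\le 2\exp(-u^2/(2\sigma_k^2))$ implies that $v^\top X_k$ is conditionally sub-Gaussian with parameter $\sigma_k$. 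In particular $\mathbb{E}[\exp(\lambda v^\top X_k)\mid\mathcal{F}_{k-1}] \le \exp(c\lambda^2\sigma_k^2)$ for a universal constant $c$.

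Second, I would build the scalar exponential supermartingale $M_k(\lambda)=\exp\!\bigl(\lambda v^\top S_k - c\lambda^2\sum_{i\le k}\sigma_i^2\bigr)$, which is a supermartingale by the step above, and apply the optional-stopping/Markov argument to obtain the conditional Azuma--Hoeffding bound
\begin{align*}
\Pr\bigl(|v^\top S_K|\ge t\bigr) \;\le\; 2\exp\!\bigl(-t^2/(C\textstyle\sum_k\sigma_k^2)\bigr)
\end{align*}
for a suitable constant $C$. Taking $v=e_j$ for $j\in[M]$ and applying a union bound yields
\begin{align*}
\Pr\bigl(\max_{j\in[M]}|S_K^{(j)}|\ge t\bigr) \;\le\; 2M\exp\!\bigl(-t^2/(C\textstyle\sum_k\sigma_k^2)\bigr),
\end{align*}
and solving $t=\sqrt{C\sum_k\sigma_k^2\log(2M/\delta)}$ reproduces the $\log(2M/\delta)$ factor and the sought constant $4$ after absorbing $\sqrt{C}$.

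The main obstacle is passing from the coordinate-wise ($\ell_\infty$) control above to the $\ell_2$ norm $\|S_K\|$ without paying the naive $\sqrt{M}$ loss of the inequality $\|S_K\|_2\le\sqrt{M}\|S_K\|_\infty$. The sharper route, used in \cite{jin2019short}, replaces the coordinate reduction by an exponential supermartingale built directly on $\|S_k\|$ (not on a fixed projection): one controls the jump $\|S_{k+1}\|-\|S_k\|$ using $|\,\|S_{k+1}\|-\|S_k\|\,|\le\|X_{k+1}\|$ together with the conditional sub-Gaussian tail on $\|X_{k+1}\|$, which gives a Hilbert-space exponential martingale bound that is dimension-free in its quadratic-variation term and only picks up the $\log(2M/\delta)$ factor through a covering/centering step on the sphere. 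This non-smoothness of $\|\cdot\|$ (and the fact that $\|S_k\|$ is generally a submartingale rather than a martingale) is exactly where the technical work sits; for our application we can appeal to Lemma~1 of \cite{jin2019short} as a black box and use the stated bound directly in the proof of Lemma~\ref{lem:theta}.
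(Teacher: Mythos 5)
The paper offers no proof of this lemma at all --- it is imported verbatim from \cite{jin2019short} as the Hoeffding-type bound for vector martingale differences with conditionally sub-Gaussian norm --- and your proposal ultimately does the same, appealing to that reference as a black box, so the two approaches coincide. Your intermediate sketch is a reasonable gloss on how the cited result is proved, and you correctly flag that the naive coordinate-wise union bound only controls $\|S_K\|_\infty$ and would cost an extra $\sqrt{M}$ if used to bound $\|S_K\|_2$, so the appeal to the reference is the operative step, exactly as in the paper.
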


We now show that the average of our estimated parameters from each epoch is close to the average of the true parameters from each epoch with high probability.

\begin{lemma}\label{lem:j3}
For any $k \geq 2$, the following holds with probability at least $1 - \delta$:
\begin{align*}
\left\| \frac{1}{k-1} \sum_{i=1}^{k-1} \left( \dot{\theta}_h^{(i)} - \theta_h^{(i)} \right) \right\| \leq 4  \sqrt{\frac{2\beta M \log_e (2M/\delta)}{\lambda_e (k-1)}}.
\end{align*}
\end{lemma}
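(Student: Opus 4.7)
The plan is to apply the matrix Hoeffding-type martingale bound (Lemma~\ref{lem:j2}) to the telescoping sum $\sum_{i=1}^{k-1} X_i$, where $X_i \defeq \dot{\theta}_h^{(i)} - \theta_h^{(i)}$. The needed ingredients are (a) a martingale-difference structure for $\{X_i\}$ relative to the natural filtration, and (b) a conditional sub-Gaussian tail bound with an explicit variance proxy. Lemma~\ref{lem:j1} will supply the latter almost directly.

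First I would fix the filtration $\mathcal{F}_{h,i} = \sigma(\dot{\theta}_h^{(1)},\ldots,\dot{\theta}_h^{(i)})$ that already appears in Lemma~\ref{lem:j1}, and verify two properties of $X_i$ under it. The task parameters $\theta_h^{(i)}$ are drawn i.i.d.\ from $\mathcal{N}(\theta_h^{*},\Sigma_h^{*})$ and the trajectory used to form $\dot{\theta}_h^{(i)}$ is internal to task $i$; thus $X_i$ is independent across $i$. Combined with the unbiased-OLS-plus-Bellman-backup argument (propagated from stage $H$ downward), this yields $\mathbb{E}[X_i\mid\mathcal{F}_{h,i-1}] = 0$, giving the martingale-difference condition. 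Next, I would re-express Lemma~\ref{lem:j1} in sub-Gaussian form: setting $\delta = 2\exp(-\lambda_e u^2/(4M\beta))$ in that lemma yields, for all $u \ge 0$,
\begin{equation*}
\Pr\!\left(\|X_i\| \ge u \;\Big|\; \mathcal{F}_{h,i-1}\right) \;\le\; 2\exp\!\left(-\frac{\lambda_e u^2}{4M\beta}\right),
\end{equation*}
which matches the hypothesis of Lemma~\ref{lem:j2} with variance proxy $\sigma_i^2 = 2M\beta/\lambda_e$.

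The remainder is a direct invocation. Applying Lemma~\ref{lem:j2} to $\{X_i\}_{i=1}^{k-1}$ gives, with probability at least $1-\delta$,
\begin{equation*}
\left\|\sum_{i=1}^{k-1} X_i\right\| \;\le\; 4\sqrt{(k-1)\cdot \frac{2M\beta}{\lambda_e}\cdot \log_e(2M/\delta)},
\end{equation*}
after which dividing by $k-1$ yields the stated bound. The main obstacle, and the only step requiring more than routine bookkeeping, is the mean-zero check in the presence of the Bellman max operator: one must justify that the recursive plug-in $\dot{b}_{ih}^{(j)}$ does not corrupt the unbiasedness of the OLS step, which I would handle by a backward induction on $h$ starting from $h=H$ (where $\dot{b}_{iH}^{(j)} = r_{iH}^{(j)}$ is unbiased for $\Phi_H\theta_H^{(j)}$ under the linear-$Q^{*}$ assumption) and then pushing the argument through stage-by-stage, absorbing any residual bias into the high-probability error already quantified by Lemma~\ref{lem:j1}.
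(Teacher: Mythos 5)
Your proposal matches the paper's proof essentially step for step: both use the unbiasedness of the OLS estimator to verify the martingale-difference condition, convert Lemma~\ref{lem:j1} into a conditional sub-Gaussian tail with variance proxy $\sigma_i^2 = 2M\beta/\lambda_e$, and invoke the matrix Hoeffding bound of Lemma~\ref{lem:j2} before dividing by $k-1$. Your added caution about the Bellman max operator potentially corrupting unbiasedness is a reasonable flag (the paper simply asserts unbiasedness), but it does not change the argument.
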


\textbf{Proof of Lemma \ref{lem:j3}.} By Lemma \ref{lem:j1}, we have for any $u \in \mathbb{R}$,
\begin{align*}
\Pr(\|\dot{\theta}_h^{(k)} - \theta_h^{(k)}\| \geq u \mid F_{hk}) \leq 2 \exp(-\lambda_e u^2 / 4M\beta).
\end{align*}

Furthermore, since the OLS estimator is unbiased, $\mathbb{E}[\dot{\theta}_h^{(k)} \mid F_{hk}] = \theta_h^{(k)}$. Thus, we can apply the matrix Hoeffding inequality (Lemma \ref{lem:j2}) to obtain
\begin{align*}
\Pr\left( \left\| \frac{1}{k-1} \sum_{i=1}^{k-1} (\dot{\theta}_h^{(i)} - \theta_h^{(i)}) \right\| \leq 4\sqrt{\frac{2\beta M \log_e (2M/\delta)}{\lambda_e (k-1)}} \right) \geq 1-\delta.
\end{align*}

Noting that $k \leq 2(k-1)$ for all $k \in \{2, \ldots, K\}$ concludes the proof. $\square$

\begin{lemma}\label{lem:j4}
For any $k \geq 2$, the following holds with probability at least $1-\delta$:
\begin{align*}
\left\| \frac{1}{k-1} \sum_{i=1}^{k-1} \theta_h^{(i)} - \theta_h^{*} \right\| \leq 4\sqrt{\frac{10 \bar{\lambda} M \log_e (2M/\delta)}{k-1}}.
\end{align*}
\end{lemma}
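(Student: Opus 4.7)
Write $X_i := \theta_h^{(i)} - \theta_h^{*}$. Under the shared Gaussian prior, $X_1,\dots,X_{k-1}$ are i.i.d.\ $\mathcal{N}(0,\Sigma_h^{*})$ with $\lambda_{\max}(\Sigma_h^{*})\le\bar{\lambda}$, so the quantity to control is $\bigl\|\sum_{i=1}^{k-1}X_i\bigr\|/(k-1)$. Unlike Lemma~\ref{lem:j3}, the randomness here does not come from the OLS regression noise but from the Gaussian prior itself, yet the natural tool is the same: the vector Hoeffding--type inequality of Lemma~\ref{lem:j2} that was already invoked in the proof of Lemma~\ref{lem:j3}.

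\textbf{Plan.} I would proceed in three steps. (i) Check the hypotheses of Lemma~\ref{lem:j2}: the $X_i$ are independent and mean-zero, so $\mathbb{E}[X_i\mid\sigma(X_1,\dots,X_{i-1})]=0$ is immediate. (ii) Establish a \emph{uniform} sub-Gaussian tail for $\|X_i\|$. Using $X_i\stackrel{d}{=}(\Sigma_h^{*})^{1/2}Z$ with $Z\sim\mathcal{N}(0,I_M)$ and the operator-norm bound on $\Sigma_h^{*}$, one has $\|X_i\|\le\sqrt{\bar{\lambda}}\,\|Z\|$ in distribution. Combining this with the Gaussian Lipschitz concentration $\Pr(\|Z\|\ge\sqrt{M}+s)\le\exp(-s^2/2)$ (and handling the trivial regime $u\le\sqrt{M\bar{\lambda}}$ by the bound $2\exp(-u^2/(2\sigma^2))\ge 1$), I would verify that
\[
\Pr\bigl(\|X_i\|\ge u\bigr)\;\le\;2\exp\!\bigl(-u^{2}/(10\,M\bar{\lambda})\bigr)\qquad\text{for all }u>0,
\]
so the variance proxy $\sigma_i^{2}=5M\bar{\lambda}$ is admissible. (iii) Apply Lemma~\ref{lem:j2} with $\sum_{i=1}^{k-1}\sigma_i^{2}=5M\bar{\lambda}(k-1)$ to obtain
\[
\Bigl\|\sum_{i=1}^{k-1}X_i\Bigr\|\;\le\;4\sqrt{5M\bar{\lambda}(k-1)\log(2M/\delta)}
\]
with probability at least $1-\delta$; dividing by $k-1$ yields exactly the stated bound, with the explicit constant $10$ inside the square root absorbing the slack from step (ii).

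\textbf{Main obstacle.} The only delicate point is step (ii): the natural large-deviation scale for $\|X_i\|$ is $\sigma^{2}\asymp\bar{\lambda}$, but that scale fails the hypothesis of Lemma~\ref{lem:j2} in the small-$u$ regime because $\mathbb{E}\|X_i\|\asymp\sqrt{M\bar{\lambda}}$ makes $\Pr(\|X_i\|\ge u)$ close to $1$ there. Inflating the variance proxy by a factor of $M$ is precisely what is needed to absorb the near-mean regime, and it is also what produces the explicit factor $M$ (rather than just $\log M$) under the square root in the conclusion. As an alternative/cross-check, one could observe that $\tfrac{1}{k-1}\sum_i X_i\sim\mathcal{N}\!\bigl(0,\Sigma_h^{*}/(k-1)\bigr)$ directly and apply the Laurent--Massart chi-square tail, which gives a bound of the same form with slightly sharper constants; I would prefer routing through Lemma~\ref{lem:j2} here for notational consistency with Lemma~\ref{lem:j3}.
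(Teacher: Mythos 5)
Your proposal is correct and follows essentially the same route as the paper: both arguments reduce the claim to a uniform sub-Gaussian tail $\Pr\bigl(\|\theta_h^{(i)}-\theta_h^{*}\|\ge u\bigr)\le 2\exp\bigl(-u^{2}/(10\bar{\lambda}M)\bigr)$ and then invoke Lemma~\ref{lem:j2} with variance proxy $\sigma_i^{2}=5\bar{\lambda}M$, dividing by $k-1$ at the end. The only difference is in how that intermediate tail is obtained — the paper uses a $(1/2)$-cover of the unit ball and a union bound over its $4^{M}$ elements, whereas you use the representation $(\Sigma_h^{*})^{1/2}Z$ together with Lipschitz concentration of $\|Z\|$; both correctly absorb the near-mean regime $u\le\sqrt{5\bar{\lambda}M}$ into the constant $10$, so the two derivations are interchangeable.
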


\textbf{Proof of Lemma \ref{lem:j4}.} We first show a concentration inequality for the quantity $\|\theta_h^{(k)} - \theta_h^{*}\|$ similar to that of Lemma \ref{lem:j1}. Note that for any unit vector $s \in \mathbb{R}^{M}$, $s^\top (\theta_h^{(k)} - \theta_h^{*})$ is a zero-mean normal random variable with variance at most $\bar{\lambda}$. Therefore, for any $u \in \mathbb{R}$,
\begin{equation}\label{equ:22}
\Pr \left( |s^\top (\theta_h^{(k)} - \theta_h^{*})| \geq u \right) \leq 2 \exp \left( -\frac{u^2}{2\bar{\lambda}} \right).
\end{equation}

Consider $W$, a $(1/2)$-cover of the unit ball in $\mathbb{R}^{M}$. We know that $|W| \leq 4^{M}$. Let $s(\theta_h^{(k)}) = (\theta_h^{(k)} - \theta_h^{*}) / \|\theta_h^{(k)} - \theta_h^{*}\|$, then there exists $w_{s(\theta_h^{(k)})} \in W$, such that $\|w_{s(\theta_h^{(k)})} - s(\theta_h^{(k)})\| \leq 1/2$ by definition of $W$. Hence,
\begin{align*}
\|\theta_h^{(k)} - \theta_h^{*}\| &= \langle s(\theta_h^{(k)}), \theta_h^{(k)} - \theta_h^{*} \rangle \\
&= \langle s(\theta_h^{(k)}) - w_{s(\theta_h^{(k)})}, \theta_h^{(k)} - \theta_h^{*} \rangle + \langle w_{s(\theta_h^{(k)})}, \theta_h^{(k)} - \theta_h^{*} \rangle \\
&\leq \frac{\|\theta_h^{(k)} - \theta_h^{*}\|}{2} + \langle w_{s(\theta_h^{(k)})}, \theta_h^{(k)} - \theta_h^{*} \rangle.
\end{align*}

Rearranging the terms yields
\begin{align*}
\|\theta_h^{(k)} - \theta_h^{*}\| \leq 2 \langle w_{s(\theta_h^{(k)})}, \theta_h^{(k)} - \theta_h^{*} \rangle.
\end{align*}

Applying an union bound to all possible $w \in W$ with inequality \ref{equ:22}, we have for any $u \in \mathbb{R}$,
\begin{align*}
\Pr(\|\theta_h^{(k)} - \theta_h^{*}\| \geq u) &\leq \Pr(\exists w \in W : \langle w, \theta_h^{(k)} - \theta_h^{*} \rangle \geq u/2) \\
&\leq 2 \cdot 4^{M} \exp\left(-\frac{u^2}{2\bar{\lambda}}\right) \\
&\leq \exp\left(\frac{5M}{2} - \frac{u^2}{2\bar{\lambda}}\right).
\end{align*}

If $u^2 \leq 5\bar{\lambda}M$, we have
\begin{align*}
\Pr(\|\theta_h^{(k)} - \theta_h^{*}\| \geq u) \leq 1 \leq 2 \exp\left(-\frac{u^2}{10\bar{\lambda}M}\right);
\end{align*}

else if $u^2 = 5\bar{\lambda}M + v$ for some $v \geq 0$, we have
\begin{align*}
\Pr(\|\theta_h^{(k)} - \theta_h^{*}\| \geq u) &\leq \exp\left(-\frac{v}{2\bar{\lambda}}\right) \\
&\leq 2 \exp\left(-\frac{u^2}{10\bar{\lambda}M}\right).
\end{align*}

Thus, for any $u \in \mathbb{R}$, we can write  
\begin{equation}\label{ali:23}
    \Pr(\|\theta_h^{(k)} - \theta_h^{*}\| \geq u) \leq 2 \exp\left(-\frac{u^2}{10 \bar{\lambda} M}\right).
\end{equation}

Applying Lemma \ref{lem:j2}, we have  
\begin{align*}
\Pr\left(\left\|\frac{\sum_{i=1}^{k-1} \theta_h^{(i)}}{k-1} - \theta_h^{*}\right\| \leq 4\sqrt{\frac{5 \bar{\lambda} M \log_e (2M/\delta)}{k-1}}\right) \geq 1 - \delta.
\end{align*}

The proof can be concluded by the observation $k \leq 2(k-1)$ for all $k \in \{2,\ldots,K\}$.

We can now combine Lemmas \ref{lem:j3} and \ref{lem:j4} to prove Lemma \ref{lem:theta}.

\textbf{Proof of Lemma \ref{lem:theta}.} We can use the triangle inequality and a union bound over Lemmas 9 and 10 to obtain
\begin{align*}
\left\|\dot{\theta}_h^{(k)}-\theta_h^{*}\right\| &= \left\|\frac{\sum_{i=1}^{k-1}\dot{\theta}_h^{(i)}}{k-1}-\frac{\sum_{i=1}^{k-1}\theta_h^{(i)}}{k-1}+\frac{\sum_{i=1}^{k-1}\theta_h^{(i)}}{k-1}-\theta_h^{*}\right\| \\
&\leq \left\|\frac{1}{k-1}\sum_{i=1}^{k-1}\left(\dot{\theta}_h^{(i)}-\theta_h^{(i)}\right)\right\| + \left\|\frac{1}{k-1}\sum_{i=1}^{k-1}\theta_h^{(i)}-\theta_h^{*}\right\| \\
&\leq 8\sqrt{\frac{(\beta/\lambda_{e}+5\bar{\lambda})M\log_{e}(2MK/\delta)}{k}},
\end{align*}

with probability at least $1-2\delta$, where we have used the fact that $\sqrt{a}+\sqrt{b}\leq\sqrt{2(a+b)}$. Thus, a second union bound yields the result.

\end{proof}

\begin{lemma}\label{lem:prieq}
Conditioned on $\{\theta_h^*\}$, the posteriors of the meta oracle and our algorithm MTSRL algorithm satisfy
\begin{align*}
\theta_{\mathcal{N}_k+1,h}^{\text{TS}(k)} - \theta_{\mathcal{N}_k+1,h}^{\text{MT}(k)} &= \left( \frac{1}{\beta_{\mathcal{N}_k +1}} \sum_{i=1}^{\mathcal{N}_k }\Phi_{h}^\top(s_{ih},a_{ih})\Phi_{h}(s_{ih},a_{ih}) +  \Sigma_h^{*-1} \right)^{-1} \\
&\left( \Sigma_h^{*-1} \left( \theta_h^* - \hat{\theta}_h^{(k)} \right) + \frac{1}{\beta_{\mathcal{N}_k }} \sum_{i=1}^{\mathcal{N}_k} \Phi_{h}^\top(s_{ih},a_{ih}) \left( z_{ih}^{\text{TS}(k)} - z_{ih}^{\text{MT}(k)} \right) \right), \\
\Sigma_{\mathcal{N}_k+1,h}^{\text{TS}(k)} &=\Sigma_{\mathcal{N}_k+1,h}^{\text{MT}(k)}.
\end{align*}
\end{lemma}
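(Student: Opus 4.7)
The argument is a direct conjugate Gaussian calculation, with the only real work being a coupling that makes the two runs share a common Fisher information matrix. I would begin by coupling the meta-oracle's run of $\text{TSRL}^+(\{\theta_h^*\},\{\Sigma_h^*\},\lambda_e,N)$ on task $\mathcal{M}^{(k)}$ with MTSRL's run of $\text{TSRL}^+(\{\hat{\theta}_h^{(k)}\},\{\Sigma_h^*\},\lambda_e,N)$ on the same task. Both invoke the prior-independent subroutine $\text{TSRL}(\{0\},\{\tfrac{1}{\lambda}I\},1)$ during the initialization phase of Algorithm~\ref{alg:TSRL+}, and this subroutine depends on neither $\theta_h^*$ nor $\hat{\theta}_h^{(k)}$. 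By taking common random numbers for the environment's transitions, the emitted rewards, and the Thompson-sampling draws that drive action selection during exploration, I can force the two coupled runs to produce identical trajectories $\{(s_{ih}^{(k)},a_{ih}^{(k)},r_{ih}^{(k)})\}_{i\le\mathcal{N}_k,\,h\in[H]}$ and identical stopping times $\mathcal{N}_k$. In particular, the Fisher information matrix $\sum_{i=1}^{\mathcal{N}_k}\Phi_h^\top(s_{ih}^{(k)},a_{ih}^{(k)})\Phi_h(s_{ih}^{(k)},a_{ih}^{(k)})$ is common to both runs.

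Next I would apply the closed-form Bayesian-linear-regression posterior derived in Appendix~\ref{sec:PEKP} at episode $n=\mathcal{N}_k+1$. Writing $A \defeq \tfrac{1}{\beta_{\mathcal{N}_k+1}}\sum_{i=1}^{\mathcal{N}_k}\Phi_h^\top(s_{ih},a_{ih})\Phi_h(s_{ih},a_{ih})$ and, for $\star\in\{TS,MT\}$, $B^{\star(k)}\defeq\tfrac{1}{\beta_{\mathcal{N}_k+1}}\sum_{i=1}^{\mathcal{N}_k}\Phi_h^\top(s_{ih},a_{ih})\,b_{ih}^{\star(k)}$, the two posterior means take the conjugate form
\begin{align*}
\theta_{\mathcal{N}_k+1,h}^{TS(k)} &= \bigl(A+\Sigma_h^{*-1}\bigr)^{-1}\bigl(B^{TS(k)}+\Sigma_h^{*-1}\theta_h^*\bigr), \\
\theta_{\mathcal{N}_k+1,h}^{MT(k)} &= \bigl(A+\Sigma_h^{*-1}\bigr)^{-1}\bigl(B^{MT(k)}+\Sigma_h^{*-1}\hat{\theta}_h^{(k)}\bigr),
\end{align*}
while the posterior covariances are both equal to $(A+\Sigma_h^{*-1})^{-1}$; this immediately delivers $\Sigma_{\mathcal{N}_k+1,h}^{TS(k)}=\Sigma_{\mathcal{N}_k+1,h}^{MT(k)}$.

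Subtracting the two mean formulas factors out the common precision matrix $(A+\Sigma_h^{*-1})^{-1}$ and leaves two residuals inside: a prior-mean mismatch $\Sigma_h^{*-1}(\theta_h^*-\hat{\theta}_h^{(k)})$, and a data-term residual $\tfrac{1}{\beta}\sum_{i=1}^{\mathcal{N}_k}\Phi_h^\top(b_{ih}^{TS(k)}-b_{ih}^{MT(k)})$. Under the coupling, the reward $r_{ih}^{(k)}$ and the bootstrap component $\max_\alpha(\Phi_{h+1}\tilde{\theta}_{i,h+1})(s_{i,h+1}^{(k)},\alpha)$ entering $b_{ih}$ coincide between the two runs (since the Thompson draws during exploration are themselves coupled), so the only surviving discrepancy is the independent Gaussian perturbation $z_{ih}$ injected to make the update a proper posterior sample. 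This yields $b_{ih}^{TS(k)}-b_{ih}^{MT(k)}=z_{ih}^{TS(k)}-z_{ih}^{MT(k)}$ and recovers the stated identity.

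The main obstacle is bookkeeping rather than analysis: one must cleanly partition the randomness into the coupled portion that keeps $A$, $\mathcal{N}_k$, and the trajectory-dependent part of every $b_{ih}$ shared across the two runs, and the independent portion -- the fresh Gaussian injections $z_{ih}^{TS(k)}$ and $z_{ih}^{MT(k)}$ -- that produces the residual noise term in the lemma. Once this partition is fixed, the proof reduces to conjugate algebra; the identity is then what subsequent prior-alignment arguments feed through, using Lemma~\ref{lem:theta} to bound $\|\theta_h^*-\hat{\theta}_h^{(k)}\|$ and the Gaussianity of the $z_{ih}$'s to control the residual noise in the meta-regret analysis.
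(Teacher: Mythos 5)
Your proposal is correct and follows essentially the same route as the paper: write both posteriors via the conjugate Bayesian linear-regression update over the shared exploration trajectory (so the Fisher information and hence the posterior covariances coincide), observe that the targets satisfy $b_{ih}^{\text{TS}(k)}-b_{ih}^{\text{MT}(k)}=z_{ih}^{\text{TS}(k)}-z_{ih}^{\text{MT}(k)}$ because the rewards and bootstrap terms are common to both runs, and subtract. The only difference is presentational -- you make the common-random-numbers coupling of the exploration phase explicit, whereas the paper asserts it implicitly and notes the $b$-difference reduction conditional on alignment at stage $h+1$.
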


Now consider any non-exploration MDP epoch $k \geq K_0 + 1$. 
Suppose that, upon completing all exploration steps at time $\mathcal{N}_k + 1$, 
the posteriors of the meta-oracle and our MTSRL algorithm are identical, i.e.,
$(\theta_{\mathcal{N}_k+1,h}^{\text{MT}(k)}, \Sigma_{\mathcal{N}_k+1,h}^{\text{MT}(k)}) 
= (\theta_{\mathcal{N}_k+1,h}^{\text{TS}(k)}, \Sigma_{\mathcal{N}_k+1,h}^{\text{TS}(k)})$. 
In this case, both policies would achieve identical expected rewards over the remaining time periods 
$\mathcal{N}_k + 1, \ldots, N$. 
Lemma~\ref{lem:prieq} guarantees that 
$\Sigma_{\mathcal{N}_k+1,h}^{\text{TS}(k)} = \Sigma_{\mathcal{N}_k+1,h}^{\text{MT}(k)}$ always holds; 
thus, the only condition left to verify is when 
$\theta_{h}^{\text{TS}(k)} = \theta_{h}^{\text{MT}(k)}$.

Since the two algorithms begin with different priors but encounter the same covariates $\{\Phi_{h}(s_{ih},a_{ih})\}_{i=1}^{\mathcal{N}_k}$ , their posteriors can only align at time $\mathcal{N}_k + 1$ due to the stochasticity in the observations $z_{ih}^{(k)}$. For convenience, denote the noise terms from $i \in \{1, \cdots, \mathcal{N}_k\}$ of the meta oracle and the MTSRL algorithm respectively as
\begin{align}
\chi_{h}^{\text{TS}(k)} &= (z_{1h}^{\text{TS}(k)}, \cdots, z_{\mathcal{N}_k,h}^{\text{TS}(k)})^{\top},  \label{ali:chi1}\\
\chi_{h}^{\text{MT}(k)} &= (z_{1h}^{\text{MT}(k)}, \cdots, z_{\mathcal{N}_k,h}^{\text{MT}(k)})^{\top}. \label{ali:chi2}
\end{align}

Furthermore, let $\Phi_h^{(k)} = (\Phi_{h}^\top(s_{1h}^{(k)},a_{1h}^{(k)}), \ldots, (\Phi_{h}^\top(s_{\mathcal{N}_kh}^{(k)},a_{\mathcal{N}_kh}^{(k)})) \in \mathbb{R}^{M \times \mathcal{N}_k}$. Lemma \ref{lem:prieq} indicates that if
\begin{equation}\label{equ:err}
    \chi_{h}^{\text{MT}(k)}  - \chi_{h}^{\text{TS}(k)} =  \beta_{\mathcal{N}_k } (\Phi_h^{(k)\top} \Phi_h^{(k)})^{-1} \Phi_h^{(k)\top} \Sigma_h^{*-1} \left( \theta^{*}_h - \hat{\theta}_{h}^{(k)} \right), 
\end{equation}

Recall that for any $n \in \{\mathcal{N}_k + 1, \cdots, N\}$, the meta oracle maintains and samples from its posterior $\{\theta_{nh}^{\text{TS}(k)}\}, \{\Sigma_{nh}^{\text{TS}(k)}\}$ (see Algorithm 1), while our MTSRL algorithm maintains and samples parameters from its posterior $\{\theta_{h}^{\text{MT}(k)}\}, \{\Sigma_{h}^{\text{MT}(k)}\}$ . 
The proof follows from the standard update rules for Bayesian linear regression and is given below.

\textbf{Proof of Lemma \ref{lem:prieq}.} Using the posterior update rule for Bayesian linear regression (Bishop 2006), the posterior of the oracle at $n = \mathcal{N}_k + 1$ is
\begin{align*}
\theta_{\mathcal{N}_k +1,h}^{\text{TS}(k)} &= \left( \frac{1}{\beta_{\mathcal{N}_k +1}} \sum_{i=1}^{\mathcal{N}_k }\Phi_{h}^\top(s_{ih},a_{ih})\Phi_{h}(s_{ih},a_{ih}) + \Sigma_h^{*-1} \right)^{-1} (\frac{1}{\beta_{\mathcal{N}_k +1}}\sum_{i=1}^{\mathcal{N}_k }\Phi_{h}^\top(s_{ih},a_{ih})b^{\text{TS}(k)}_{ih} + \Sigma_h^{*-1}\theta^*_h),
\end{align*}

\begin{align*}
\Sigma_{\mathcal{N}_k +1,h}^{\text{TS}(k)} &= \left( \frac{1}{\beta_{\mathcal{N}_k +1}} \sum_{i=1}^{\mathcal{N}_k }\Phi_{h}^\top(s_{ih},a_{ih})\Phi_{h}(s_{ih},a_{ih}) +  \Sigma_h^{*-1} \right)^{-1}.
\end{align*}

Similarly, the posterior of the MTSRL algorithm at $n = \mathcal{N}_k + 1$ is
\begin{align*}
\theta_{\mathcal{N}_k +1,h}^{\text{MT}(k)} &= \left( \frac{1}{\beta_{\mathcal{N}_k +1}} \sum_{i=1}^{\mathcal{N}_k }\Phi_{h}^\top(s_{ih},a_{ih})\Phi_{h}(s_{ih},a_{ih}) + \Sigma_h^{*-1} \right)^{-1} (\frac{1}{\beta_{\mathcal{N}_k +1}}\sum_{i=1}^{\mathcal{N}_k }\Phi_{h}^\top(s_{ih},a_{ih})b^{\text{MT}(k)}_{ih} + \Sigma_h^{*-1}\hat{\theta}^{(k)}_h),
\end{align*}

\begin{align*}
\Sigma_{\mathcal{N}_k +1,h}^{\text{MT}(k)} &= \left( \frac{1}{\beta_{\mathcal{N}_k +1}} \sum_{i=1}^{\mathcal{N}_k }\Phi_{h}^\top(s_{ih},a_{ih})\Phi_{h}(s_{ih},a_{ih}) +  \Sigma_h^{*-1} \right)^{-1}.
\end{align*}

And we know from Appendix \ref{sec:PEKP} that $b^{\text{TS}(k)}_{ih} - b^{\text{MT}(k)}_{ih} = z^{\text{TS}(k)}_{ih} - z^{\text{MT}(k)}_{ih}$, when $\theta_{\mathcal{N}_k +1,h+1}^{\text{TS}(k)} = \theta_{\mathcal{N}_k +1,h+1}^{\text{MT}(k)}$.
The result follows directly. 

We also note that the prior-independent Thompson sampling algorithm employed in the exploration epochs satisfies a meta regret guarantee:

\begin{lemma}\label{lem:inde}
The meta regret of the prior-independent Thompson sampling algorithm(RLSVI) in a single MDP epoch is $\widetilde{O}\left( H^{3}S^{3/2}\sqrt{AN} \right)$.
\end{lemma}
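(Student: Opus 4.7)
The plan is to reduce Lemma~\ref{lem:inde} to a direct application of Theorem~\ref{thm:pri}. RLSVI, as deployed in the exploration epochs of Algorithms~\ref{alg:MTSRL} and~\ref{alg:MTSRLplus}, is precisely $\text{TSRL}^+$ instantiated with the conservative prior $(\{0\},\{\tfrac{1}{\lambda}\mathbf{I}\})$ and exploration parameter $\lambda_e$. Its behavior in a single MDP epoch is therefore already captured by the single-task Thompson-sampling analysis of Section~\ref{sec:TSRL}, so the proof reduces to invoking an existing regret bound and matching the quantity of interest (meta-regret) to that bound.

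First I would verify the hypothesis of Theorem~\ref{thm:pri} under this conservative prior. Since $\Sigma_h^{*} = \tfrac{1}{\lambda}\mathbf{I}$ is diagonal with $\sigma_h^{*2}(s,a) = 1/\lambda$ for every $(s,a)$, and $\beta_n = 4\max(1,\bar{\nu})SH^3\log(2HSAn)$ is the tuning prescribed by Theorem~\ref{thm:pri}, the ratios $\nu_{nh}(s,a) = \sigma_h^{*2}(s,a)/\beta_n$ are uniformly bounded by the absolute constant $\bar{\nu}$. Theorem~\ref{thm:pri} then delivers, on the single task, $\text{Regret}(N;\text{RLSVI}) \le \tilde{O}(H^{3}S^{3/2}\sqrt{AN})$ against the true optimal policy.

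Next I would translate this single-task regret into a meta-regret bound. Because the meta-oracle's expected cumulative reward on the epoch is upper bounded by the expected value of the optimal policy, $\sum_{n=1}^{N}\mathbb{E}[V_{*,1}^{(k)}(s_{n1}^{(k)})]$, the expected reward gap between the meta-oracle and RLSVI is at most the single-task regret of RLSVI against the optimal. Combining this with the previous step yields the claimed $\tilde{O}(H^{3}S^{3/2}\sqrt{AN})$ meta-regret bound for one epoch.

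The only mild obstacle is the logarithmic-length initialization phase embedded in $\text{TSRL}^+$: during the (at most $\mathcal{N}_e = \lambda_e/\lambda_0$) initial random episodes controlled by Lemma~\ref{lem:tau}, RLSVI may incur an additive $O(H\mathcal{N}_e) = \tilde{O}(H)$ regret that must be absorbed into the $\tilde{O}(\cdot)$ in the final expression, which is immediate given $\mathcal{N}_e = \tilde{O}(1)$. No additional technology beyond Theorem~\ref{thm:pri} and Lemma~\ref{lem:tau} is required.
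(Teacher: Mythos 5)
Your proposal is correct and matches the paper's approach: the paper simply notes that the bound is adapted from \cite{russo2019worst}, and your route through Theorem~\ref{thm:pri} (whose proof is exactly that adaptation) together with the observation that the meta-oracle's reward is dominated by $V_{*,1}$ is the same argument made self-contained. The verification of the $\bar{\nu}$ condition for the conservative prior and the absorption of the $\tilde{O}(1)$-length initialization are both sound (and the latter is in fact vacuous here, since $\text{TSRL}^{+}$ with the conservative prior runs the identical RLSVI update in both phases).
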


The proof can be easily adapted from the literature (\cite{russo2019worst}), and is thus omitted. Lemma \ref{lem:inde} ensures that we accrue at most $\widetilde{O}\left(K_{0} H^{3}S^{3/2}\sqrt{AN} \right)$ regret in the $K_{0}$ exploration MDP epochs; from lemma \ref{lem:tau}, we know that $K_{0}$ grows merely $\widetilde{O}(1)$.

\subsection{Details for the proof of Theorem \ref{thm:MTSRL}} 

Consider any non-exploration epoch $k \geq K_0 + 1$. If upon completion of all exploration steps at time $\mathcal{N}_k + 1$, we have that the posteriors of the meta oracle and our MSTRL algorithm coincide — i.e., $(\theta_{\mathcal{N}_k +1,h}^{\text{MT}(k)}, \Sigma_{\mathcal{N}_k +1,h}^{\text{MT}(k)}) = (\theta_{\mathcal{N}_k +1,h}^{\text{TS}(k)}, \Sigma_{\mathcal{N}_k +1,h}^{\text{TS}(k)})$ — then both policies would achieve the same expected revenue over the time periods $\mathcal{N}_k + 1, \cdots, N$, i.e., we would have
\begin{align*}
\text{REV} \left( \{\theta_h^{(k)}\}, \{\theta_{\mathcal{N}_k +1,h}^{\text{MT}(k)}\}, \{\Sigma_{\mathcal{N}_k +1,h}^{\text{MT}(k)}\},N - \mathcal{N}_k \right) = \text{REV} \left( \{\theta_h^{(k)}\}, \{\theta_{\mathcal{N}_k +1,h}^{\text{TS}(k)}\}, \{\Sigma_{\mathcal{N}_k +1,h}^{\text{TS}(k)}\},N - \mathcal{N}_k \right).
\end{align*}

By Lemma \ref{lem:prieq}, we know that $\Sigma_{\mathcal{N}_k +1,h}^{\text{TS}(k)} = \Sigma_{\mathcal{N}_k +1,h}^{\text{MT}(k)}$ always, so all that remains is establishing when $\theta_{\mathcal{N}_k +1,h}^{\text{TS}(k)} = \theta_{\mathcal{N}_k +1,h}^{\text{MT}(k)}$.

Since the two algorithms begin with different priors but encounter the same covariates and take the same decisions in $n \in \{ 1, \cdots, \mathcal{N}_k \}$, their posteriors can only align at time $\mathcal{N}_k + 1$ due to the stochasticity in the error we introduced. As shown in Eq. \ref{equ:err}, alignment occurs with $\theta_{\mathcal{N}_k +1,h}^{\text{TS}(k)} = \theta_{\mathcal{N}_k +1,h}^{\text{MT}(k)}$ if
\begin{align*}
\chi_{h}^{\text{MT}(k)}  - \chi_{h}^{\text{TS}(k)} =  \beta_{\mathcal{N}_k } (\Phi_h^{(k)\top} \Phi_h^{(k)})^{-1} \Phi_h^{(k)\top} \Sigma_h^{*-1} \left( \theta^{*}_h - \hat{\theta}_{h}^{(k)} \right), 
\end{align*}

where we recall $\chi_{h}^{\text{MT}(k)}, \chi_{h}^{\text{TS}(k)}$ were defined in Eqs. \ref{ali:chi1} and \ref{ali:chi2}.

Now, we start by defining the clean event
\begin{equation}\label{equ:E}
    \mathcal{E} = \left\{ \left\| \hat{\theta}_h^{(k)}- \theta_h^{*} \right\| \leq 8\sqrt{\frac{M(\beta/\lambda_e + 5\bar{\lambda})\log_e(2M/\delta)}{k}}, \quad \mathcal{N}_k \leq \mathcal{N}_e \quad \forall k \geq K_0 + 1,h \right\},
\end{equation}

which stipulates that for every epoch $k$ following the initial $K_0$ exploration epochs: 
(i) the estimated prior mean $\hat{\theta}_h^{(k)}$ is close to the true prior mean $\theta_h^{*}$ 
(with high probability, as guaranteed by Lemma~\ref{lem:theta}); and 
(ii) Lemma~\ref{lem:tau} holds, ensuring that each epoch contains only a small number of exploration periods. 
Since $\mathcal{E}$ occurs with high probability, we begin by analyzing the meta-regret conditioned on $\mathcal{E}$.

Let $R_{K,N}(n)\mid \mathcal{E}$ denote the meta-regret of MDP epoch $n$ conditioned on the event $\mathcal{E}$ defined in Eq.~\ref{equ:E}. 
The following lemma provides an upper bound on the meta-regret for any epoch $n \geq K_0$ under this event $\mathcal{E}$.

\begin{lemma}\label{lem:Rj}
The meta regret of an epoch $n \geq K_0 + 1$ satisfies
\begin{align*}
R_{K,N}(n) \mid \mathcal{E} = \tilde{O} \left( H^{4}S^{3/2}A^{1/2}N^{1/2} \sqrt{\frac{1}{n}} + \frac{H^2}{K} \right).
\end{align*}
Here: \begin{equation}\label{equ:5}
    K_{0} = 4c_{1}^{2}H^2M\mathcal{N}_{e}^{2}\log_{e}(2MK^{2}N)\log_{e}(2KN), 
\end{equation}

where $\mathcal{N}_{e}=\frac{\lambda_e}{\lambda_0}= \tilde{O}(1)$ ($\mathcal{N}_{e}$ is a upper bound on all $\mathcal{N}_{k}$'s, see Lemma \ref{lem:tau} in Appendix), and the constant is given by
\begin{align*}
c_{1}=\frac{32\sqrt{\Phi_{max}\beta(\beta\lambda_{e}^{-1}+5\bar{\lambda})}}{\lambda_{e}\underline{\lambda}}.
\end{align*}
\end{lemma}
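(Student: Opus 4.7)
The plan is to decompose the epoch-$n$ meta-regret into an exploration phase (rounds $1,\ldots,\mathcal{N}_n$) and a post-exploration phase (rounds $\mathcal{N}_n{+}1,\ldots,N$), and then to control the second phase by a prior-alignment change-of-measure argument. The exploration phase is handled directly: on $\mathcal{E}$ we have $\mathcal{N}_n\le \mathcal{N}_e=\tilde O(1)$ by Lemma \ref{lem:tau}, and per-round cumulative rewards lie in $[0,H]$, so this phase contributes only $\tilde O(H)$ to the meta-regret and is absorbed into the stated bound.

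For the post-exploration phase, I would couple MTSRL's run with the meta-oracle's. Lemma \ref{lem:prieq} already shows that the posterior covariances at time $\mathcal{N}_n{+}1$ agree, and that the posterior means would align iff the injected Gaussian noise vectors are shifted by the deterministic quantity
\begin{equation*}
\Delta_h \;=\; \beta_{\mathcal{N}_n}\bigl(\Phi_h^{(n)\top}\Phi_h^{(n)}\bigr)^{-1}\Phi_h^{(n)\top}\Sigma_h^{*-1}\bigl(\theta_h^{*}-\hat\theta_h^{(n)}\bigr).
\end{equation*}
Let $\mathbb{P}$ denote the law of MTSRL's trajectory over rounds $\mathcal{N}_n{+}1,\ldots,N$, and $\mathbb{Q}$ the law obtained by shifting the $\mathcal{N}(0,\beta\mathbf I)$ noise injections by $\Delta_h$ at each stage. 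Under $\mathbb{Q}$, posterior alignment holds at time $\mathcal{N}_n{+}1$, and since both algorithms subsequently process identical histories via the same Bayesian linear-regression update, an induction shows the alignment is preserved for all later rounds, so the MTSRL trajectory under $\mathbb{Q}$ is equal in distribution to the meta-oracle's trajectory.

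The core computation is a Gaussian KL between two Gaussian noise sequences differing by a deterministic shift:
\begin{equation*}
\mathrm{KL}(\mathbb{Q}\,\|\,\mathbb{P}) \;=\; \sum_{h=1}^{H}\frac{\|\Delta_h\|^{2}}{2\beta_{\mathcal{N}_n}}.
\end{equation*}
Bounding $\|\Delta_h\|$ using Assumptions \ref{ass:pd}--\ref{ass:bounded} ($\|\Phi_h(s,a)\|\le\Phi_{\max}$, $\lambda_{\min}(\Sigma_h^*)\ge\underline{\lambda}$), together with $\mathcal{N}_n\le \mathcal{N}_e$ controlling the eigenvalues of $\Phi_h^{(n)\top}\Phi_h^{(n)}$, and Lemma \ref{lem:theta} giving $\|\theta_h^{*}-\hat\theta_h^{(n)}\|=\tilde O(\sqrt{M/n})$ on $\mathcal{E}$, yields $\mathrm{KL}=\tilde O(H/n)$ and hence $\mathrm{TV}(\mathbb{P},\mathbb{Q})=\tilde O(\sqrt{H/n})$ by Pinsker. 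The essential gain is that, since rewards under $\mathbb{Q}$ are distributed as the oracle's, the change-of-measure inequality $|\mathbb{E}_{\mathbb{P}}f-\mathbb{E}_{\mathbb{Q}}f|\le \|f\|_{\infty}\cdot\mathrm{TV}$ may be applied with $\|f\|_{\infty}$ set to the oracle's per-task regret $\tilde O(H^{3}S^{3/2}\sqrt{AN})$ from Theorem \ref{thm:oracle}, rather than the trivial $NH$. Multiplying gives the advertised $\tilde O(H^{4}S^{3/2}\sqrt{AN/n})$ term.

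The residual $H^{2}/K$ term then comes from $\mathcal{E}^{c}$: taking $\delta$ polynomially small in $K$ (consistent with Eq. \ref{equ:5}) and using the deterministic per-epoch bound $O(H^{2})$ on cumulative reward-gap across the $N$ rounds (via $H$-step value boundedness and the logarithmic contribution of $\mathcal{N}_n$) produces the second summand after union-bounding over stages. The main obstacle I anticipate is the step where the change-of-measure bound uses the oracle regret rather than $NH$ as $\|f\|_{\infty}$; this requires the pathwise alignment to persist for \emph{every} round past $\mathcal{N}_n{+}1$ under $\mathbb{Q}$, which I would establish by induction on the Bayesian posterior updates once the noise shifts $\Delta_h$ are fixed. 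Carefully tracking the $\Sigma_h^{*-1}$, $\beta_{\mathcal{N}_n}$, and $(\Phi_h^{(n)\top}\Phi_h^{(n)})^{-1}$ factors so that the KL bound stays $\tilde O(H/n)$ (and not larger) is the technical bottleneck.
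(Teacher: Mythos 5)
Your decomposition (exploration phase plus a change-of-measure coupling of the post-exploration phase through the noise shift $\Delta_h$ from Lemma \ref{lem:prieq}) matches the paper's strategy, and your KL computation correctly identifies the relevant quantity $\sum_h\|\Delta_h\|^2/(2\beta)=\tilde O(H M\mathcal{N}_k/n)$. However, the step you yourself flag as the bottleneck is where the argument breaks. The inequality $|\mathbb{E}_{\mathbb{P}}f-\mathbb{E}_{\mathbb{Q}}f|\le \|f\|_{\infty}\cdot\mathrm{TV}(\mathbb{P},\mathbb{Q})$ requires the \emph{sup-norm} of $f$ over the sample space, which for the per-task regret is $O(NH)$; it cannot be replaced by the oracle's \emph{expected} regret $\tilde O(H^3S^{3/2}\sqrt{AN})$, which is $\mathbb{E}_{\mathbb{Q}}[f]$ and not a pointwise bound. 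The fact that rewards under $\mathbb{Q}$ are distributed as the oracle's does not license the substitution. With the honest $\|f\|_\infty=O(NH)$ and $\mathrm{TV}=\tilde O(\sqrt{H/n})$ you get $\tilde O(H^{3/2}N/\sqrt n)$, which is off by roughly a $\sqrt N$ factor from the target.

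The paper avoids this by working with a \emph{multiplicative} likelihood-ratio bound rather than an additive TV bound: it truncates to the high-probability set $\{\|\chi_h^{\text{MT}(k)}\|\le 4\sqrt{\beta\mathcal{N}_k\log_e(2KN)}\}$, shows that the Radon--Nikodym derivative of the shifted noise law is at most $\exp(\tilde O(\mathcal{N}_k\sqrt{M/k}))\le 1+\epsilon_k$ on that set, and thereby obtains $\mathbb{E}_{\mathbb{P}}[f]\le(1+\epsilon_k)\,\mathbb{E}_{\mathbb{Q}}[f]+(\text{tail term})$. The meta-regret difference is then $\epsilon_k$ times the oracle's \emph{expected} regret — which is exactly the effect you were hoping to achieve — plus a residual $O(H^2/K)$ coming from the sub-Gaussian tail of $\chi_h^{\text{MT}(k)}$ (not from $\mathcal{E}^c$, which is handled separately in the proof of Theorem \ref{thm:MTSRL} and contributes only $O(1)$ globally). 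To repair your proof you would replace Pinsker/TV with the identity $\mathbb{E}_{\mathbb{P}}[f\bbone_G]=\mathbb{E}_{\mathbb{Q}}[(d\mathbb{P}/d\mathbb{Q})f\bbone_G]\le\sup_G(d\mathbb{P}/d\mathbb{Q})\cdot\mathbb{E}_{\mathbb{Q}}[f]$ and control the complement of $G$ directly. A minor further point: during the exploration rounds the coupled algorithms take identical actions and receive identical rewards, so that phase contributes exactly zero meta-regret rather than $\tilde O(H)$.
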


\textbf{Proof of \ref{lem:Rj}.} As noted earlier, during the exploration espisodes $1 \leq n \leq \mathcal{N}_k $, the meta oracle and our MTSRL algorithm encounter the same covariates; thus, by construction, they achieve the same reward and the resulting meta regret is 0. Then, we can write
\begin{equation}\label{ali:25}
\begin{aligned}
 &R_{K,N}(n) \mid \mathcal{E} = \mathbb{E}_{\{\theta_h^{(k)}\}, \{\hat{\theta}_h^{(k)}\}, \{\chi_{h}^{\text{TS}(k)}\}, \{\chi_{h}^{\text{MT}(k)}\}} [ \text{REV} \left( \{\theta_h^{(k)}\}, \{\theta_{\mathcal{N}_k +1,h}^{\text{TS}(k)}\}, \{\Sigma_{\mathcal{N}_k +1,h}^{\text{TS}(k)}\},N - \mathcal{N}_k \right)- \\
&\text{REV} \left( \{\theta_h^{(k)}\}, \{\theta_{\mathcal{N}_k +1,h}^{\text{MT}(k)}\}, \{\Sigma_{\mathcal{N}_k +1,h}^{\text{MT}(k)}\},N - \mathcal{N}_k \right) \mid \mathcal{E} ] \\
&= \mathbb{E}_{\{\theta_h^{(k)}\}, \{\hat{\theta}_h^{(k)}\}, \{\chi_{h}^{\text{MT}(k)}}\} [ \text{REV}_* \left( \{\theta_h^{(k)}\},N - \mathcal{N}_k \right) - \text{REV} \left( \{\theta_h^{(k)}\}, \{\theta_{\mathcal{N}_k +1,h}^{\text{MT}(k)}\}, \{\Sigma_{\mathcal{N}_k +1,h}^{\text{MT}(k)}\},N - \mathcal{N}_k \right) \mid \mathcal{E}] \\
&- \mathbb{E}_{\{\theta_h^{(k)}\}, \{\hat{\theta}_h^{(k)}\}, \{\chi_{h}^{\text{TS}(k)}\}} [ \text{REV}_* \left( \{\theta_h^{(k)}\},N - \mathcal{N}_k \right) - \text{REV} \left( \{\theta_h^{(k)}\}, \{\theta_{\mathcal{N}_k +1,h}^{\text{TS}(k)}\}, \{\Sigma_{\mathcal{N}_k +1,h}^{\text{TS}(k)}\},N - \mathcal{N}_k \right) \mid \mathcal{E} ].
\end{aligned}
\end{equation}

We will use our prior alignment technique to express the first term in Eq. \ref{ali:25} in terms of the second term in Eq. \ref{ali:25}; in other words, we will use a change of measure suggested by Eq.  \ref{equ:err} to express the true regret of our MTSRL algorithm as a function of the true regret of the meta oracle.

We start by expanding the first term of Eq.  \ref{ali:25} as
\begin{equation}\label{equ:cal}
\begin{aligned}
\mathbb{E}_{\{\chi_{h}^{\text{MT}(k)}\}} &\left[ \text{REV}_* \left( \{\theta_h^{(k)}\},N - \mathcal{N}_k \right) - \text{REV} \left( \{\theta_h^{(k)}\}, \{\theta_{\mathcal{N}_k +1,h}^{\text{MT}(k)}\}, \{\Sigma_{\mathcal{N}_k +1,h}^{\text{MT}(k)}\},N - \mathcal{N}_k \right) \mid \mathcal{E} \right] \\
= \int_{\{\chi_{h}^{\text{MT}(k)}\}} &\frac{\exp \left( -\sum_{h=1}^{H}\|\chi_{h}^{\text{MT}(k)}\|^2 / 2\beta \right)}{(2\pi\beta)^{H\mathcal{N}_k/2}} \left[ \text{REV}_* \left( \{\theta_h^{(k)}\},N - \mathcal{N}_k \right) - \text{REV} \left( \{\theta_h^{(k)}\}, \{\theta_{\mathcal{N}_k +1,h}^{\text{MT}(k)}\}, \{\Sigma_{\mathcal{N}_k +1,h}^{\text{MT}(k)}\},N - \mathcal{N}_k \right) \right] \\
& d\chi_{h}^{\text{MT}(k)} \mid \mathcal{E}.
\end{aligned}
\end{equation}

Given a realization of $\chi_{h}^{\text{MT}(k)}$, we denote $\chi_{h}^{\text{TS}(k)}(\chi_{h}^{\text{MT}(k)})$ (with some abuse of notation) as the corresponding realization of $\chi_{h}^{\text{TS}(k)}$ that satisfies Eq.  \ref{equ:err}. Note that this is a unique one-to-one mapping. We then perform a change of measure to continue:
\begin{align*}
&\int_{\chi_{h}^{\text{MT}(k)}} \frac{\exp\left(-\left\|\chi_{h}^{\text{MT}(k)}\right\|^{2}/2\beta\right)}{\exp\left(-\left\|\chi_{h}^{\text{TS}(k)}(\chi_{h}^{\text{MT}(k)})\right\|^{2}/2\beta\right)}\frac{\exp\left(-\left\|\chi_{h}^{\text{TS}(k)}(\chi_{h}^{\text{MT}(k)})\right\|^{2}/2\beta\right)}{(2\pi\beta)^{\mathcal{N}_{k}/2}} d\chi_{h}^{\text{MT}(k)}  \mid \mathcal{E} \\
&= \int_{\left\|\chi_{h}^{\text{MT}(k)}\right\|\leq 4\sqrt{\beta\mathcal{N}_{k}\log_{e}(2KN)}} \frac{\exp\left(-\left\|\chi_{h}^{\text{MT}(k)}\right\|^{2}/2\beta\right)}{\exp\left(-\left\|\chi_{h}^{\text{TS}(k)}(\chi_{h}^{\text{MT}(k)})\right\|^{2}/2\beta\right)}\frac{\exp\left(-\left\|\chi_{h}^{\text{TS}(k)}(\chi_{h}^{\text{MT}(k)})\right\|^{2}/2\beta\right)}{(2\pi\beta)^{\mathcal{N}_{k}/2}} d\chi_{h}^{\text{MT}(k)}  \mid \mathcal{E} \\
&\quad + \int_{\left\|\chi_{h}^{\text{MT}(k)}\right\|\geq 4\sqrt{\beta\mathcal{N}_{k}\log_{e}(2KN)}} \frac{\exp\left(-\left\|\chi_{h}^{\text{MT}(k)}\right\|^{2}/2\beta\right)}{\exp\left(-\left\|\chi_{h}^{\text{TS}(k)}(\chi_{h}^{\text{MT}(k)})\right\|^{2}/2\beta\right)}\frac{\exp\left(-\left\|\chi_{h}^{\text{TS}(k)}(\chi_{h}^{\text{MT}(k)})\right\|^{2}/2\beta\right)}{(2\pi\beta)^{\mathcal{N}_{k}/2}} d\chi_{h}^{\text{MT}(k)}  \mid \mathcal{E} \\
&\leq \max_{\left\|\chi_{h}^{\text{MT}(k)}\right\|\leq 4\sqrt{\beta\mathcal{N}_{k}\log_{e}(2KN)}} \exp\left(\frac{\left\|\chi_{h}^{\text{TS}(k)}(\chi_{h}^{\text{MT}(k)})\right\|^{2}-\left\|\chi_{h}^{\text{MT}(k)}\right\|^{2}}{2\beta}\right) \\
&\quad \times \int_{\left\|\chi_{h}^{\text{MT}(k)}\right\|\leq 4\sqrt{\beta\mathcal{N}_{k}\log_{e}(2KN)}} \frac{\exp\left(-\left\|\chi_{h}^{\text{TS}(k)}(\chi_{h}^{\text{MT}(k)})\right\|^{2}/2\beta\right)}{(2\pi\beta)^{\mathcal{N}_{k}/2}} d\chi_{h}^{\text{MT}(k)}  \mid \mathcal{E} \\
&\quad + \int_{\left\|\chi_{h}^{\text{MT}(k)}\right\|\geq 4\sqrt{\beta\mathcal{N}_{k}\log_{e}(2KN)}} \frac{\exp\left(-\left\|\chi_{h}^{\text{MT}(k)}\right\|^{2}/2\beta\right)}{\exp\left(-\left\|\chi_{h}^{\text{TS}(k)}(\chi_{h}^{\text{MT}(k)})\right\|^{2}/2\beta\right)}\frac{\exp\left(-\left\|\chi_{h}^{\text{TS}(k)}(\chi_{h}^{\text{MT}(k)})\right\|^{2}/2\beta\right)}{(2\pi\beta)^{\mathcal{N}_{k}/2}} d\chi_{h}^{\text{MT}(k)}  \mid \mathcal{E}\\
&\leq \max_{ \|\chi_{h}^{\text{MT}(k)}\|\leq 4\sqrt{\beta\mathcal{N}_{k}\log_{e}(2KN)}} \exp\left(\frac{\|\chi_{h}^{\text{TS}(k)}(\chi_{h}^{\text{MT}(k)})\|^{2}-\sum_{h=1}^{H}\|\chi_{h}^{\text{MT}(k)}\|^{2}}{2\beta}\right) \\
& \int_{\chi_{h}^{\text{MT}(k)}} \frac{\exp\left(-\|\chi_{h}^{\text{TS}(k)}(\chi_{h}^{\text{MT}(k)})\|^{2}/2\beta\right)}{(2\pi\beta)^{\mathcal{N}_{k}/2}} d\chi_{h}^{\text{MT}(k)}  \mid \mathcal{E} \\
& + \int_{\|\chi_{h}^{\text{MT}(k)}\|\geq 4\sqrt{\beta\mathcal{N}_{k}\log_{e}(2KN)}} \frac{\exp\left(-\sum_{h=1}^{H}\|\chi_{h}^{\text{MT}(k)}\|^{2}/2\beta\right)}{(2\pi\beta)^{\mathcal{N}_{k}/2}} d\chi_{h}^{\text{MT}(k)}  \mid \mathcal{E}
\end{align*}

Then we plug it back to previous equation \ref{equ:cal}, we have
\begin{equation}\label{ali:27}
\begin{aligned}
&\mathbb{E}_{\{\chi_{h}^{\text{MT}(k)}\}} \left[ \text{REV}_* \left( \{\theta_h^{(k)}\},N - \mathcal{N}_k \right) - \text{REV} \left( \{\theta_h^{(k)}\}, \{\theta_{\mathcal{N}_k +1,h}^{\text{MT}(k)}\}, \{\Sigma_{\mathcal{N}_k +1,h}^{\text{MT}(k)}\},N - \mathcal{N}_k \right) \mid \mathcal{E} \right] \\
&\leq \max_{\{ \|\chi_{h}^{\text{MT}(k)}\|\leq 4\sqrt{\beta\mathcal{N}_{k}\log_{e}(2KN)}\}} \exp\left(\sum_{h=1}^{H}\frac{\|\chi_{h}^{\text{TS}(k)}(\chi_{h}^{\text{MT}(k)})\|^{2}-\sum_{h=1}^{H}\|\chi_{h}^{\text{MT}(k)}\|^{2}}{2\beta}\right) \mathbb{E}_{\{\chi_{h}^{\text{TS}(k)}\}}\bigg[\text{REV}_{*}(\{\theta_h^{(k)}\},N-\mathcal{N}_{k}) \\
& -\text{REV}(\{\theta_h^{(k)}\},\{\{\theta_{\mathcal{N}_{k}+1,h}^{\text{TS}(k)}\}\},\{\Sigma_{\mathcal{N}_{k}+1,h}^{\text{TS}(h)}\},N-\mathcal{N}_{k}) \mid \mathcal{E}\bigg] \\
&\quad + \mathbb{E}_{\{\chi_{h}^{\text{MT}(k)}\}}\bigg[\text{REV}_{*}(\{\theta_h^{(k)}\},N-\mathcal{N}_{k})-\text{REV}(\{\theta_h^{(k)}\},\{\theta_{\mathcal{N}_{k}+1,h}^{\text{MT}(k)}\},\{\Sigma_{\mathcal{N}_{k}+1,h}^{\text{MT}(k)}\},N-\mathcal{N}_{k}) \mid \mathcal{E},\\
& \{\|\chi_{h}^{\text{MT}(k)}\|\geq 4\sqrt{\beta\mathcal{N}_{k}\log_{e}(2KN)}\}\bigg] \times \Pr\left(\{\|\chi_{h}^{\text{MT}(k)}\|\geq 4\sqrt{\beta\mathcal{N}_{k}\log_{e}(2KN)}\}\right).
\end{aligned}
\end{equation}

Here follows from the observation that 
$\text{REV}_{*}(\{\theta_h^{(k)}\}, N - \mathcal{N}_{k}) 
\geq \text{REV}(\{\theta_h^{(k)}\}, \theta, \Sigma, N - \mathcal{N}_{k})$ 
for any choice of $\theta$ and $\Sigma$. 
Accordingly, we can decompose the true regret of our MTSRL algorithm into two parts: 
a leading term that scales with the regret of the meta-oracle, 
and an additional component that depends on the tail probability of $\chi_{h}^{\text{MT}(k)}$. 
To establish our bound, we show that 
(i) the coefficient of the first term converges to one as the epoch index $k$ increases, 
which ensures that the meta-regret vanishes for large epochs; and 
(ii) the second term is negligible with high probability, 
as $\chi_{h}^{\text{MT}(k)}$ follows a sub-Gaussian distribution.

We start by characterizing the core coefficient of the first term:
\begin{equation}\label{ali:28}
\begin{aligned}
\max_{\|\chi_{h}^{\text{MT}(k)}\|\leq 4\sqrt{\beta\mathcal{N}_{k}\log_{e}(2KN)}} \exp&\left(\frac{\|\chi_{h}^{\text{TS}(k)}(\chi_{h}^{\text{MT}(k)})\|^{2}-\|\chi_{h}^{\text{MT}(k)}\|^{2}}{2\beta}\right) \\
= \max_{\|\chi_{h}^{\text{MT}(k)}\|\leq 4\sqrt{\beta\mathcal{N}_{k}\log_{e}(2KN)}} \exp & \bigg(\left(\chi_{h}^{\text{MT}(k)}\right)^{\top}(\Phi_h^{(k)\top} \Phi_h^{(k)})^{-1} \Phi_h^{(k)\top} \Sigma_h^{*-1} \left( \theta^{*}_h - \hat{\theta}_{h}^{(k)} \right)+ \\ & \frac{\beta\left\|(\Phi_h^{(k)\top} \Phi_h^{(k)})^{-1} \Phi_h^{(k)\top} \Sigma_h^{*-1} \left( \theta^{*}_h - \hat{\theta}_{h}^{(k)} \right)\right\|^{2}}{2}\bigg) \\
\leq \max_{\|\chi_{h}^{\text{MT}(k)}\|\leq 4\sqrt{\beta\mathcal{N}_{k}\log_{e}(2KN)}} \exp&\bigg(\left\|\chi_{h}^{\text{MT}(k)}\right\|\left\|(\Phi_h^{(k)\top} \Phi_h^{(k)})^{-1} \Phi_h^{(k)\top} \Sigma_h^{*-1} \left( \theta^{*}_h - \hat{\theta}_{h}^{(k)} \right)\right\|\\
& + \frac{\beta\left\|(\Phi_h^{(k)\top} \Phi_h^{(k)})^{-1} \Phi_h^{(k)\top} \Sigma_h^{*-1} \left( \theta^{*}_h - \hat{\theta}_{h}^{(k)} \right)\right\|^{2}}{2}\bigg) \\
= \exp\bigg(4\sqrt{\beta\mathcal{N}_{k}\log_{e}(2KN)} & \left\|(\Phi_h^{(k)\top} \Phi_h^{(k)})^{-1} \Phi_h^{(k)\top} \Sigma_h^{*-1} \left( \theta^{*}_h - \hat{\theta}_{h}^{(k)} \right)\right\| +\frac{\beta\left\|(\Phi_h^{(k)\top} \Phi_h^{(k)})^{-1} \Phi_h^{(k)\top} \Sigma_h^{*-1} \left( \theta^{*}_h - \hat{\theta}_{h}^{(k)} \right)\right\|^{2}}{2}\bigg). 
\end{aligned}
\end{equation}

Note that
\begin{equation}\label{ali:29}
\begin{aligned}
&4\left\|(\Phi_h^{(k)\top} \Phi_h^{(k)})^{-1} \Phi_h^{(k)\top} \Sigma_h^{*-1} \left( \theta^{*}_h - \hat{\theta}_{h}^{(k)} \right)\right\| \\
&\leq \lambda_{\max}\left((\Phi_h^{(k)\top} \Phi_h^{(k)})^{-1}\right)\sqrt{\lambda_{\max}(\Phi_h^{(k)} \Phi_h^{(k)\top})}\lambda_{\max}(\Sigma_h^{*-1})\left\|\hat{\theta}_{h}^{(k)}-\theta_{h}^{*}\right\| \\
&\leq 32\sqrt{\frac{\mathcal{N}_{k}\Phi_{max}(\beta\lambda_{e}^{-1}+5\bar{\lambda})M\log_{e}(2MK^{2}N)}{\lambda_{e}^{2}\underline{\lambda}^{2}j}} \\
&\leq c_{1}\sqrt{\frac{M\mathcal{N}_{k}\log_{e}(2MK^{2}N)}{\beta k}}. 
\end{aligned}
\end{equation}

Furthermore, by the definition of $K_0$ in Eq. \ref{equ:5} , we have for all $k \geq K_0 + 1$,
\begin{align*}
H\left(4\sqrt{\beta\mathcal{N}_k \log_e (2KN)} \left\| (\Phi_h^{(k)\top} \Phi_h^{(k)})^{-1} \Phi_h^{(k)\top} \Sigma_h^{*-1} \left( \theta^{*}_h - \hat{\theta}_{h}^{(k)} \right) \right\| + \beta\left\| (\Phi_h^{(k)\top} \Phi_h^{(k)})^{-1} \Phi_h^{(k)\top} \Sigma_h^{*-1} \left( \theta^{*}_h - \hat{\theta}_{h}^{(k)} \right) \right\|^2\right) \leq 1.
\end{align*}

Combining Eqs. \ref{ali:28} and \ref{ali:29}, it yields
\begin{align*}
&\max_{\| \chi_{h}^{\text{MT}(k)} \| \leq 4 \sqrt{\beta\mathcal{N}_k \log_e (2KN)}} \exp \left( \frac{\| \chi_{h}^{\text{TS}(k)} (\chi_{h}^{\text{MT}(k)}) \|^2 - \| \chi_{h}^{\text{MT}(k)} \|^2}{2\beta} \right) \\
&\leq \exp \left(4\sqrt{\beta\mathcal{N}_k \log_e (2KN)} \left\| (\Phi_h^{(k)\top} \Phi_h^{(k)})^{-1} \Phi_h^{(k)\top} \Sigma_h^{*-1} \left( \theta^{*}_h - \hat{\theta}_{h}^{(k)} \right) \right\| +\frac{\beta\left\| (\Phi_h^{(k)\top} \Phi_h^{(k)})^{-1} \Phi_h^{(k)\top} \Sigma_h^{*-1} \left( \theta^{*}_h - \hat{\theta}_{h}^{(k)} \right) \right\|^2}{2} \right) \\
&\leq \exp \left( 8\sqrt{\beta\mathcal{N}_k \log_e (2KN)} \left\| (\Phi_h^{(k)\top} \Phi_h^{(k)})^{-1} \Phi_h^{(k)\top} \Sigma_h^{*-1} \left( \theta^{*}_h - \hat{\theta}_{h}^{(k)} \right) \right\|\right) \\
&\leq\exp  \left(2 c_1 \mathcal{N}_k\sqrt{ \frac{M\log_e (2MK^2 N) \log_e (2MK)}{k}} \right). 
\end{align*}

Plugging this into Eq. \ref{ali:27}, and $\exp (a) \le 1 + 2a,a \in [0, 1]$,we can now bound
\begin{equation}\label{ali:32}
\begin{aligned}
&\mathbb{E}_{\{\chi_{h}^{\text{MT}(k)}\}} \left[\text{REV}_{*}\left(\{\theta_{h}^{(k)}\},N-\mathcal{N}_{k}\right)-\text{REV}\left(\{\theta_{h}^{(k)}\},\{\theta_{\mathcal{N}_{k}+1,h}^{\text{MT}(k)}\},\{\Sigma_{\mathcal{N}_{k}+1,h}^{\text{MT}(k)}\},N-\mathcal{N}_{k}\right)\mid\mathcal{E}\right] \\
&\leq \left(1+4Hc_{1}\mathcal{N}_{k}\sqrt{\frac{M\log_{e}(2MK^{2}N)\log_{e}(2MK)}{k}}\right)\mathbb{E}_{\{\chi_{h}^{\text{TS}(k)}\}}\bigg[\text{REV}_{*}\left(\{\theta_{h}^{(k)}\},N-\mathcal{N}_{k}\right) \\
& -\text{REV}\left(\{\theta_{h}^{(k)}\},\{\theta_{\mathcal{N}_{k}+1,h}^{\text{TS}(k)}\},\{\Sigma_{\mathcal{N}_{k}+1,h}^{\text{TS}(h)}\},N-\mathcal{N}_{k}\right)\mid\mathcal{E}\bigg] \\
&\quad +\mathbb{E}_{\{\chi_{h}^{\text{MT}(k)}\}}\bigg[\text{REV}_{*}\left(\{\theta_{h}^{(k)}\},N-\mathcal{N}_{k}\right)-\text{REV}\left(\{\theta_{h}^{(k)}\},\{\theta_{\mathcal{N}_{k}+1,h}^{\text{MT}(k)}\},\{\Sigma_{\mathcal{N}_{k}+1,h}^{\text{MT}(k)}\},N-\mathcal{N}_{k}\right)\mid\mathcal{E}, \\ 
&\{\left\|\chi_{h}^{\text{MT}(k)}\right\|\geq 4\sqrt{\beta\mathcal{N}_{k}\log_{e}(2KN)}\}\bigg] \times \text{Pr}\left(\{\left\|\chi_{h}^{\text{MT}(k)}\right\|\geq 4\sqrt{\beta\mathcal{N}_{k}\log_{e}(2KN)}\}\right). 
\end{aligned}
\end{equation}

As desired, this establishes that the coefficient of our first term decays to $1$ as $j$ grows large. Thus, our meta regret from the first term approaches $0$ for large $j$. We now show that the second term in Eq. \ref{ali:32} is negligible with high probability. Similar to the proof of lemma \ref{lem:j4}, for any $u\in\mathbb{R}$, we can write $\text{Pr}\left(\left\|\chi_{h}^{\text{MT}(k)}\right\|\geq u\right)\leq 2\exp\left(-u^{2}/(10\beta\mathcal{N}_{k})\right)$, which implies
\begin{equation}\label{ali:33}
\begin{aligned}
\text{Pr}\left(\left\|\chi_{h}^{\text{MT}(k)}\right\|\geq 4\sqrt{\beta\mathcal{N}_{k}\log_{e}(2KN)}\right)\leq\frac{1}{KN}.
\end{aligned}
\end{equation}

Moreover, noting that the worst-case regret achievable in a single time period is $1$, and $\mathcal{N}_{k}\leq\mathcal{T}_{e}$ on the event $\mathcal{E}$, we can bound
\begin{equation}\label{ali:34}
\begin{aligned}
&\mathbb{E}_{\{\chi_{h}^{\text{MT}(k)}\}} \bigg[\text{REV}_{*}\left(\{\theta_{h}^{(k)}\},N-\mathcal{N}_{k}\right)-\text{REV}\left(\{\theta_{h}^{(k)}\},\{\theta_{\mathcal{N}_{k}+1,h}^{\text{MT}(k)}\},\{\Sigma_{\mathcal{N}_{k}+1,h}^{\text{MT}(k)}\},N-\mathcal{N}_{k}\right)\mid\mathcal{E},\\
& \left\|\chi_{h}^{\text{MT}(k)}\right\|\geq 4\sqrt{\beta\mathcal{N}_{k}\log_{e}(2KN)}\bigg] \\
&\leq 2(N-\mathcal{N}_{k})H \\
&=O \left({KN}\right)
\end{aligned}
\end{equation}

Substituting Eqs. \ref{ali:33} and \ref{ali:34}, into Eq. \ref{ali:32}, we obtain
\begin{align*}
&\left(1 + 4Hc_1 \mathcal{N}_k \sqrt{\frac{M \log_e (2MK^2N) \log_e (2MK)}{k}} \right) \mathbb{E}_{\{\chi_{h}^{\text{TS}(k)}\}} \bigg[ \text{REV}_* (\{\theta_h^{(k)}\},N - \mathcal{N}_k ) \\ 
& - \text{REV} (\{\theta_h^{(k)}\}, \{\theta_{\mathcal{N}_k +1,h}^{\text{TS}(k)}\}, \{\Sigma_{\mathcal{N}_k +1,h}^{\text{TS}(k)}\},N - \mathcal{N}_k ) \mid \mathcal{E} \bigg] + O \left( \frac{H^2}{K} \right) 
\end{align*}

Substituting the above into Eq.\ref{ali:25}, we can bound the meta regret of epoch $i$ as
\begin{align*}
\mathcal{R}_{K,N}(k) \mid \mathcal{E}
&\leq \left( 4Hc_1 \mathcal{N}_k \sqrt{\frac{M \log_e (2MK^2N) \log_e (2MK)}{k}} \right) \\
&\mathbb{E}_{\{\chi_{h}^{\text{TS}(k)}\}} \left[ \text{REV}_* (\{\theta_h^{(k)}\},N - \mathcal{N}_k ) - \text{REV} (\{\theta_h^{(k)}\}, \{\theta_{\mathcal{N}_k +1,h}^{\text{TS}(k)}\}, \{\Sigma_{\mathcal{N}_k +1,h}^{\text{TS}(k)}\},N - \mathcal{N}_k ) \mid \mathcal{E} \right] + O \left( \frac{\sqrt{d}}{N} \right) \\
&= \tilde{O} \left( H^{4}S^{3/2}A^{1/2}N^{1/2} \sqrt{\frac{1}{k}} + \frac{H^2}{K} \right).
\end{align*}

Here, we have used the fact that the meta oracle's true regret is bounded (Theorem \ref{thm:pri}), i.e.,
\begin{align*}
\mathbb{E}_{\{\chi_{h}^{\text{TS}(k)}\}} \left[ \text{REV}_* (\{\theta_h^{(k)}\},N - \mathcal{N}_k ) - \text{REV} (\{\theta_h^{(k)}\}, \{\theta_{\mathcal{N}_k +1,h}^{\text{TS}(k)}\}, \{\Sigma_{\mathcal{N}_k +1,h}^{\text{TS}(k)}\},N - \mathcal{N}_k ) \mid \mathcal{E} \right] \leq \tilde{O} (H^{3}S^{3/2}\sqrt{AN}).
\end{align*}

The remaining proof of Theorem \ref{thm:MTSRL} follows straightforwardly.

\textbf{Proof of Theorem \ref{thm:MTSRL}.} The meta regret can then be decomposed as follows:
\begin{align*}
\mathcal{R}_{K,N} &= (\mathcal{R}_{K,N} \mid \mathcal{E}) \Pr(\mathcal{E}) + (\mathcal{R}_{K,N} \mid \neg\mathcal{E}) \Pr(\neg\mathcal{E}) \\
&\leq (\mathcal{R}_{K,N} \mid \mathcal{E}) + (\mathcal{R}_{K,N} \mid \neg\mathcal{E}) \Pr(\neg\mathcal{E}).
\end{align*}

Recall that the event $\mathcal{E}$ is composed of event: a bound on $\|\hat{\theta}_h^{(k)} - \theta_h^*\|$ (bounded by Lemma 1). Applying a union bound over the MDP epochs $k \geq K_0 + 1$ to Lemma 1 (setting $\delta = 1/(K^2H^2N)$),and yielding a bound
\begin{align*}
\Pr(\mathcal{E}) \geq 1 - 1/(KHN).
\end{align*}

Recall that when the event $\mathcal{E}$ is violated, the meta regret is $O(NT)$, so we can bound $(\mathcal{R}_{K,N} \mid \neg\mathcal{E}) \Pr(\neg\mathcal{E}) \leq O(KNH \times 1/(KNH)) = O(1)$. Therefore, the overall meta regret is simply
\begin{align*}
\mathcal{R}_{K,N} \leq (\mathcal{R}_{K,N} \mid \mathcal{E}) + O(1).
\end{align*}

When \( k > K_0 \), applying our result in \ref{lem:Rj} yields
\begin{align*}
    \sum_{j=1}^{K_0} (\mathcal{R}_{K,N}(k) | \mathcal{E}) + \sum_{k=K_0+1}^{K} (\mathcal{R}_{K,N}(k) | \mathcal{E}) + O(1) 
    &\leq K_0 \tilde{O}(H^{3}S^{3/2}\sqrt{AN}) + \sum_{k=K_0+1}^{K} \tilde{O} \left(H^{4}S^{3/2}A^{1/2}N^{1/2} \sqrt{\frac{1}{k}} + \frac{H^2}{K}\right)\\
    &+ O(1) \\
    &\leq \sum_{k=1}^{K} \tilde{O} \left(H^{4}S^{3/2}A^{1/2}N^{1/2} \sqrt{\frac{1}{k}} + \frac{H^2}{K}\right) + \tilde{O}(H^{3}S^{3/2}\sqrt{AN})\\
    &= \tilde{O} \left(H^{4}S^{3/2}\sqrt{ANK} \right),
\end{align*}

where we have used the fact that \(\sum_{k=1}^{K} 1/\sqrt{k} \leq 2\sqrt{K}\) in the last step. \hfill $\square$

\section{Proof of Theorem \ref{thm:MTSRLplus}}\label{append:proMSTRL+}

Following the same proof strategy as for the MTSRL algorithm, 
we again employ \emph{prior alignment} to align the means of the meta-oracle’s (random) posterior estimates 
and those of $\text{MTSRL}^{+}$. 
In the previous section, where $\Sigma_h^{*}$ was assumed known, 
equality of the posterior means $\theta_{\mathcal{N}_k+1,h}^{\text{MT}} = \theta_{\mathcal{N}_k+1,h}^{\text{TS}}$ 
implied equality of the full posterior distributions (see Lemma~\ref{lem:prieq}). 
This correspondence allowed us to exactly match the expected regrets of the meta-oracle and our MTSRL algorithm after alignment. 

When $\Sigma_h^{*}$ is unknown, however, 
matching the posterior means $\theta_{\mathcal{N}_k+1,h}^{\text{MTS}} = \theta_{\mathcal{N}_k+1,h}^{\text{TS}}$ 
no longer guarantees equality of the posterior distributions. 
Therefore, the main additional challenge in proving Theorem~\ref{thm:MTSRLplus} 
is to bound the regret gap between $\text{MTSRL}^{+}$ and the meta-oracle 
after aligning the means of their posteriors at time $n = \mathcal{N}_k$. 

Specifically, for each non-exploration epoch $k > K_1$, 
the meta-oracle begins with the true prior $(\{\theta_h^*\}, \{\Sigma_h^*\})$, 
whereas $\text{MTSRL}^{+}$ initializes with the (widened) estimated prior 
$(\{\hat{\theta}_h^{(k)}\}, \{\hat{\Sigma}_h^{w(k)}\})$. 
Lemma~\ref{lem:theta} already bounds the estimation error 
$\|\hat{\theta}_h^{(k)} - \theta_h^{*}\|$, 
and the following lemma provides a bound on the covariance estimation error 
$\|\hat{\Sigma}_h^{(k)} - \Sigma_h^{(k)}\|$, 
as well as on the widened covariance error 
$\|\hat{\Sigma}_h^{w(k)} - \Sigma_h^{(k)}\|$, both with high probability:
\begin{lemma}\label{lem:Sigop}
For any fixed $k \geq 3$ and $\delta \in [0, 2/e]$,if $\lambda_{max}(\Sigma_h^{*}) \le \overline{\lambda}$, then with probability at least $1 - 2\delta$,
\begin{align*}
\left\| \hat{\Sigma}_h^{(k)} - \Sigma_h^{(k)} \right\|_{op} \leq \frac{128(\bar{\lambda}\lambda_e^2 + 8\beta M)}{\lambda_e^2} \left( \sqrt{\frac{5/2 M\log_e(2/\delta)}{k}} \vee \frac{5/2 M\log_e(2/\delta)}{k} \right).    
\end{align*} 
\end{lemma}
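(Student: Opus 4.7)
The plan is to view $\hat{\Sigma}_h^{(k)}$ as a bias-corrected empirical covariance of the i.i.d.\ vectors $\ddot{\theta}_h^{(1)},\ldots,\ddot{\theta}_h^{(k-1)}$ and to bound its operator-norm deviation from the prior covariance $\Sigma_h^{*}$ by combining a sub-Gaussian concentration bound for the sample covariance with a matrix-Bernstein bound for the noise-covariance correction term $\frac{1}{k-1}\sum_{i=1}^{k-1}\ddot{\Sigma}_h^{(i)}$. First I would decompose $\ddot{\theta}_h^{(i)} = \theta_h^{(i)} + \varepsilon_h^{(i)}$, where $\theta_h^{(i)}\sim\mathcal{N}(\theta_h^{*},\Sigma_h^{*})$ is the task parameter and $\varepsilon_h^{(i)}$ is the OLS residual with conditional mean zero and conditional covariance $\ddot{\Sigma}_h^{(i)}$. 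Since the prior draw and the OLS noise are conditionally independent across $i$, we have $\mathrm{Var}(\ddot{\theta}_h^{(i)}) = \Sigma_h^{*} + \mathbb{E}[\ddot{\Sigma}_h^{(i)}]$, so subtracting the empirical average of $\ddot{\Sigma}_h^{(i)}$ yields an (asymptotically) unbiased estimator of $\Sigma_h^{*}$.

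Next I would establish that each centered vector $\ddot{\theta}_h^{(i)} - \theta_h^{*}$ is sub-Gaussian with parameter $\sigma^2 := \bar{\lambda} + \beta M/\lambda_e$ (up to a constant). The prior component is Gaussian with covariance $\Sigma_h^{*}$, yielding the $\bar{\lambda}$ contribution as in the proof of Lemma~\ref{lem:j4}; the OLS noise component inherits its sub-Gaussian tail from Lemma~\ref{lem:j1}, contributing $\beta M/\lambda_e$. Conditioning on the event of Lemma~\ref{lem:tau} gives the deterministic bound $\|\ddot{\Sigma}_h^{(i)}\|_{op}\le \beta M/\lambda_e$, which removes the randomness of the conditional noise covariance and allows us to treat the $\ddot{\theta}_h^{(i)}$'s as i.i.d.\ sub-Gaussian with a \emph{deterministic} proxy variance $\sigma^2$. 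A triangle-inequality / Orlicz-norm argument then packages both contributions into the total sub-Gaussian parameter.

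With this in hand, I would invoke a standard concentration inequality for the sample covariance of i.i.d.\ sub-Gaussian random vectors in $\mathbb{R}^M$ (Vershynin's Theorem~5.39 / Koltchinskii--Lounici) to obtain, with probability at least $1-\delta$,
\[
\Big\|\tfrac{1}{k-2}\sum_{i=1}^{k-1}(\ddot{\theta}_h^{(i)}-\bar{\ddot{\theta}}_h)(\ddot{\theta}_h^{(i)}-\bar{\ddot{\theta}}_h)^\top - \mathrm{Var}(\ddot{\theta}_h^{(i)})\Big\|_{op}
\le C\sigma^2\Big(\sqrt{\tfrac{M\log(2/\delta)}{k}}\,\vee\,\tfrac{M\log(2/\delta)}{k}\Big),
\]
which captures the characteristic two-regime behavior appearing on the right-hand side of the lemma. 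In parallel, a matrix-Bernstein bound controls the deviation of $\frac{1}{k-1}\sum_i \ddot{\Sigma}_h^{(i)}$ from $\mathbb{E}[\ddot{\Sigma}_h^{(i)}]$ with the same rate, again using $\|\ddot{\Sigma}_h^{(i)}\|_{op}\le \beta M/\lambda_e$. A union bound over these two events gives the $1-2\delta$ probability, and tracking the constants yields the advertised prefactor $128(\bar{\lambda}\lambda_e^2+8\beta M)/\lambda_e^2 = 128\sigma^2\lambda_e^2/\lambda_e^2$.

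The main obstacle I expect is handling the two sources of randomness simultaneously: the Gaussian prior draw and the OLS noise whose conditional covariance $\ddot{\Sigma}_h^{(i)}$ is itself random through the exploration design $\{\Phi_h(s_{ih}^{(i)},a_{ih}^{(i)})\}$. The conditioning on Lemma~\ref{lem:tau}'s event is the lever that resolves this: it converts a random-variance setting into one with a uniform deterministic sub-Gaussian parameter, at the cost of an additional low-probability failure event which is absorbed into the $\delta$ budget. Once this conditioning is in place, the remainder is a relatively mechanical application of known matrix-concentration tools.
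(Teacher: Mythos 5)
Your proposal follows essentially the same route as the paper: decompose $\ddot{\theta}_h^{(i)}-\theta_h^{*}$ into the Gaussian prior draw plus the OLS noise, use the exploration event ($\lambda_{\min}(V)\ge\lambda_e$) to get a deterministic sub-Gaussian proxy combining $\bar{\lambda}$ and $\beta M/\lambda_e^2$, apply a standard sub-Gaussian sample-covariance concentration bound (the paper uses Wainwright's Theorem 6.5 after explicitly splitting off the empirical-mean-centering term), and union-bound to get $1-2\delta$. The only inessential difference is your extra matrix-Bernstein step for $\frac{1}{k-1}\sum_i\ddot{\Sigma}_h^{(i)}$: in the paper the correction term is defined as the exact expectation $\beta\,\mathbb{E}[V_{\mathcal{N}_ih}^{-1}]$, so no separate concentration for it is needed.
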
 

We then proof this core lemma.
\subsection{Convergence of Prior Covariance Estimate}

Lemma \ref{lem:Sigop} shows that, after observing $k$ MDP epochs of length $N$, our estimator $\hat{\Sigma}_h^{(k)}$ is close to $\Sigma^{*}_h$ with high probability. For ease of notation, denote the average of the estimated parameters from each MDP epoch as
\begin{align*}
&\bar{\theta}_{h}^{(k)} = \frac{1}{k-1} \sum_{i=1}^{k-1} \hat{\theta}_{h}^{(i)}, \\
&\Delta_{k}=\hat{\theta}_{h}^{(k)}-\theta_{h}^{(k)}. 
\end{align*}

Then noting that $\mathbb{E}[\Delta_{k}\Delta_{k}^{\top}]=\beta\mathbb{E}\left[V_{\mathcal{N}_kh}^{-1}\right]$ from \ref{sec:PEKP}.
Then, recall from the definition in Eq. \ref{eq:sighat} that
\begin{align*}
\hat{\Sigma}_h^{(k)} = \frac{1}{k-2} \sum_{i=1}^{k-1} \left( \hat{\theta}_{h}^{(i)} - \bar{\theta}_{h}^{(k)} \right) \left( \hat{\theta}_{h}^{(i)} - \bar{\theta}_{h}^{(k)} \right)^{\top} - \frac{\beta}{k-1} \sum_{i=1}^{k-1} \mathbb{E} \left[ V_{\mathcal{N}_ih}^{-1}\right].
\end{align*}

Then, we can expand
\begin{equation}\label{equ:Sigcaop}
\begin{aligned}
\left\|\hat{\Sigma}_h^{(k)}-\Sigma^{*}_h\right\|_{\text{op}} &= \left\|\frac{1}{k-2}\sum_{i=1}^{k-1}\left(\hat{\theta}_{h}^{(i)}-\bar{\theta}_{h}^{(k)}\right)\left(\hat{\theta}_{h}^{(i)}-\bar{\theta}_{h}^{(k)}\right)^{\top}-\frac{\beta\sum_{i=1}^{k-1}\mathbb{E}\left[V_{\mathcal{N}_ih}^{-1}\right]}{k-1}-\Sigma^{*}_h\right\|_{\text{op}} \\
&= \left\|\frac{1}{k-2}\sum_{i=1}^{k-1}\left(\hat{\theta}_{h}^{(i)}-\theta_h^{*}\right)\left(\hat{\theta}_{h}^{(i)}-\theta_h^{*}\right)^{\top}-\frac{k-1}{k-2}\left(\theta_h^{*}-\bar{\theta}_{h}^{(k)}\right)\left(\theta_h^{*}-\bar{\theta}_{h}^{(k)}\right)^{\top}-\frac{\beta\sum_{i=1}^{k-1}\mathbb{E}\left[V_{\mathcal{N}_ih}^{-1}\right]}{k-1}-\Sigma^{*}_h\right\|_{\text{op}} \\
&= \left\|\frac{1}{k-2}\sum_{i=1}^{k-1}\left(\hat{\theta}_{h}^{(i)}-\theta_h^{*}\right)\left(\hat{\theta}_{h}^{(i)}-\theta_h^{*}\right)^{\top}-\frac{k-1}{k-2}\Sigma^{*}_h-\frac{\beta\sum_{i=1}^{k-1}\mathbb{E}\left[V_{\mathcal{N}_ih}^{-1}\right]}{k-2}\right. \\
&\quad \left.-\frac{k-1}{k-2}\left(\theta_h^{*}-\bar{\theta}_{h}^{(k)}\right)\left(\theta_h^{*}-\bar{\theta}_{h}^{(k)}\right)^{\top}+\frac{1}{k-2}\Sigma^{*}_h+\frac{\beta\sum_{i=1}^{k-1}\mathbb{E}\left[V_{\mathcal{N}_ih}^{-1}\right]}{(k-1)(k-2)}\right\|_{\text{op}} \\
&\leq \frac{k-1}{k-2}\left\|\frac{1}{k-1}\sum_{i=1}^{k-1}\left(\hat{\theta}_{h}^{(i)}-\theta_h^{*}\right)\left(\hat{\theta}_{h}^{(i)}-\theta_h^{*}\right)^{\top}-\Sigma^{*}_h-\frac{\beta\sum_{i=1}^{k-1}\mathbb{E}\left[V_{\mathcal{N}_ih}^{-1}\right]}{k-1}\right\|_{\text{op}} \\
&\quad +\frac{k-1}{k-2}\left\|\left(\theta_h^{*}-\bar{\theta}_{h}^{(k)}\right)\left(\theta_h^{*}-\bar{\theta}_{h}^{(k)}\right)^{\top}-\frac{1}{k-1}\Sigma^{*}_h-\frac{\beta\sum_{i=1}^{k-1}\mathbb{E}\left[V_{\mathcal{N}_ih}^{-1}\right]}{(k-1)^{2}}\right\|_{\text{op}}. 
\end{aligned}
\end{equation}

We proceed by showing that each of the two terms is a subgaussian random variable, and therefore satisfies standard concentration results. The following lemma first establishes that both terms have expectation zero, \textit{i.e.}, $\hat{\Sigma}_h^{(k)}$ is an unbiased estimator of the true prior covariance matrix $\Sigma^{*}_h$.

\begin{lemma}\label{lem:sigeq}
For any epoch $k\geq 3$,
\begin{align*}
\mathbb{E}\left[\frac{1}{k-1}\sum_{i=1}^{k-1}\left(\hat{\theta}_{h}^{(i)}-\theta_h^{*}\right)\left(\hat{\theta}_{h}^{(i)}-\theta_h^{*}\right)^{\top}\right]&= \Sigma^{*}_h+\frac{\beta\sum_{i=1}^{k-1}\mathbb{E}\left[V_{\mathcal{N}_ih}^{-1}\right]}{k-1}, \\
\mathbb{E}\left[\left(\theta_h^{*}-\bar{\theta}_{h}^{(k)}\right)\left(\theta_h^{*}-\bar{\theta}_{h}^{(k)}\right)^{\top}\right]&=\frac{1}{k-1}\Sigma^{*}_h+\frac{\beta\sum_{i=1}^{k-1}\mathbb{E}\left[V_{\mathcal{N}_ih}^{-1}\right]}{(k-1)^{2}}.
\end{align*}
\end{lemma}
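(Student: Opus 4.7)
The plan is to decompose each $\hat{\theta}_h^{(i)} - \theta_h^{*}$ into a within-task OLS error $\Delta_i := \hat{\theta}_h^{(i)} - \theta_h^{(i)}$ plus a prior-draw deviation $\eta_i := \theta_h^{(i)} - \theta_h^{*}$, and then compute the required second moments via conditioning. I will rely on three facts already in place earlier in the paper: (a) $\eta_i \sim \mathcal{N}(0, \Sigma_h^{*})$ independently across $i$ by the shared Gaussian prior on $\theta_h^{(i)}$; (b) conditional unbiasedness of the OLS estimator, $\mathbb{E}[\Delta_i \mid \theta_h^{(i)}] = 0$, as used in the proof of Lemma~\ref{lem:j1} and in Appendix~\ref{sec:PEKP}; and (c) the conditional-variance identity $\mathbb{E}[\Delta_i \Delta_i^{\top}] = \beta\,\mathbb{E}[V_{\mathcal{N}_i h}^{-1}]$, stated right above the lemma. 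Finally, since $\Delta_i$ is a deterministic function of task $i$'s covariates and noise, $\Delta_i$ is independent of $\{\Delta_j, \eta_j\}_{j \neq i}$ and conditionally orthogonal to $\eta_i$.

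For the first identity, I would expand the outer product $(\Delta_i + \eta_i)(\Delta_i + \eta_i)^{\top}$ into four pieces. Taking expectations, the two diagonal pieces give $\mathbb{E}[\Delta_i \Delta_i^{\top}] + \mathbb{E}[\eta_i \eta_i^{\top}] = \beta\,\mathbb{E}[V_{\mathcal{N}_i h}^{-1}] + \Sigma_h^{*}$ by (a) and (c); the two cross terms vanish by the tower rule together with (b), since $\mathbb{E}[\Delta_i \eta_i^{\top}] = \mathbb{E}[\mathbb{E}[\Delta_i \mid \theta_h^{(i)}]\, \eta_i^{\top}] = 0$. Averaging over $i \in \{1,\dots,k-1\}$ and rearranging gives the claimed formula.

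For the second identity, I would write $\bar{\theta}_h^{(k)} - \theta_h^{*} = (k-1)^{-1} \sum_{i=1}^{k-1} (\Delta_i + \eta_i)$, expand the outer product into a double sum over $(i, j)$, and observe that all off-diagonal ($i \neq j$) contributions have zero expectation by cross-task independence combined with (b) (the $\Delta_i \eta_j^{\top}$ cross-task terms vanish because $\Delta_i$ has conditional mean zero given $\theta_h^{(i)}$, and $\eta_j$ is independent of everything at task $i$). The $k-1$ surviving diagonal terms each contribute $\beta\,\mathbb{E}[V_{\mathcal{N}_i h}^{-1}] + \Sigma_h^{*}$, so division by $(k-1)^2$ yields the second formula.

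The only real subtlety is fact (b) in the finite-horizon RL setting, since the OLS target $\ddot{b}_{ih}^{(i)}$ bootstraps from $\ddot{\theta}_{h+1}^{(i)}$, which is itself random. I expect to inherit (b) directly from the paper's earlier construction, which exploits the linear realizability $Q_{*,h}^{(i)} = \Phi_h \theta_h^{(i)}$ and a backward induction on $h$: at horizon $H$ the target $\ddot{b}_{iH}^{(i)} = r_{iH}^{(i)}$ has conditional mean $\Phi_H(s_{iH}, a_{iH}) \theta_H^{(i)}$, and at each earlier stage the Bellman equation together with the inductive unbiasedness of $\ddot{\theta}_{h+1}^{(i)}$ preserves the zero-mean property after conditioning on $\theta^{(i)}$ and the covariates. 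Once (b) is granted, the rest of the argument is a clean bias--variance expansion with no concentration required, and the identities follow by bookkeeping.
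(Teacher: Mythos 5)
Your proposal is correct and follows essentially the same route as the paper: both decompose $\hat{\theta}_h^{(i)}-\theta_h^{*}$ into the prior-draw deviation $\theta_h^{(i)}-\theta_h^{*}$ plus the OLS error $\Delta_i$, kill the cross terms via conditional unbiasedness of the OLS estimate, and identify the two surviving diagonal contributions as $\Sigma_h^{*}$ and $\beta\,\mathbb{E}[V_{\mathcal{N}_i h}^{-1}]$, with the second identity following from the same expansion applied to the double sum. Your explicit remark on why unbiasedness survives the Bellman bootstrapping is, if anything, more careful than the paper's treatment, which simply asserts $\mathbb{E}[\Delta_k]=0$.
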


\textbf{Proof of Lemma \ref{lem:sigeq}.}The random exploration time steps are completed before $n$ time steps. Then noting that $\mathbb{E}[\theta_{h}^{(k)}]=\theta_h^{*}$, $\mathbb{E}[\Delta_{k}]=0$, we can write
\begin{align*}
\mathbb{E}\left[\left(\hat{\theta}_{h}^{(i)}-\theta_h^{*}\right)(\hat{\theta}_{h}^{(i)}-\theta_h^{*\top})\right] &=\mathbb{E}\left[\left(\theta_{h}^{(i)}+\Delta_{k}\right)(\theta_{h}^{(i)}+\Delta_{k})^{\top}-\theta_h^{*}\theta_h^{*\top}\right] \\
&=\mathbb{E}\left[\theta_{h}^{(i)}\theta_{h}^{(i)\top}-\theta_h^{*}\theta_h^{*\top}\right]+\mathbb{E}\left[\Delta_{k}\Delta_{k}^{\top}\right] \\
&=\Sigma^{*}_h+\beta\mathbb{E}\left[V_{\mathcal{N}_kh}^{-1}\right].
\end{align*}

Summing over $i$ and dividing by $(k-1)$ on both sides yields the first statement. For the second statement, we can write
\begin{align*}
\mathbb{E}\left[\left(\bar{\theta}_{h}^{(k)}-\theta_h^{*}\right)(\bar{\theta}_{h}^{(k)}-\theta_h^{*})^{\top}\right]&=\mathbb{E}\left[\bar{\theta}_{h}^{(k)}\bar{\theta}_{h}^{(j)\top}-\theta_h^{*}\theta_h^{*\top}\right]\\
&=\mathbb{E}\left[\left(\frac{\sum_{i=1}^{k-1}\hat{\theta}_h^{(i)}}{k-1}\right)\left(\frac{\sum_{i=1}^{k-1} \hat{\theta}_h^{(i)}}{k-1}\right)^{\top}-\theta_h^{*}\theta_h^{*\top}\right]\\
&=\mathbb{E}\left[\frac{\sum_{i=1}^{k-1}\theta_h^{(i)}\theta_h^{(i)\top}+\sum_{i=1}^{k-1}\Delta_i\Delta_i^{\top}+ \sum_{1\leq k_1<k_2\leq k-1}\theta_{k_1}\theta_{k_2}^{\top}}{(k-1)^2}-\theta_h^{*}\theta_h^{*\top}\right]\\
&=\mathbb{E}\left[\frac{\sum_{i=1}^{k-1}\theta_h^{(i)}\theta_h^{(i)\top}+\sum_{i=1}^{k-1}\Delta_i\Delta_i^{\top}} {(k-1)^2}-\frac{1}{k-1}\theta_h^{*}\theta_h^{*\top}\right]\\
&=\frac{1}{k-1}\Sigma_h^{*}+\frac{\beta\sum_{i=1}^{k-1}\mathbb{E}\left[V_{\mathcal{N}_ih}^{-1}\right]}{(k-1)^2}\,.
\end{align*}

$\square$

Having established that both terms in Eq. \ref{equ:Sigcaop} have expectation zero, the following lemma shows that these terms are subgaussian and therefore concentrate with high probability.

\begin{lemma}
For any $\delta \in [0,1]$, the following holds with probability at least $1-2\delta$:

\begin{align*}
&\left\| \frac{\sum_{i=1}^{k-1} \left( \hat{\theta}_h^{(i)} - \theta_h^{*} \right) \left( \hat{\theta}_h^{(i)} - \theta_h^{*} \right)^\top}{k-1} - \Sigma_h^{*} - \frac{\beta \sum_{i=1}^{k-1} \mathbb{E} \left[ V_{\mathcal{N}_ih}^{-1} \right]}{k-1} \right\|_{\text{op}} \\
&\quad \leq \frac{16(\bar{\lambda}^2 + 8\beta M)}{\lambda_e^2} \left( \sqrt{\frac{5/2 M+2\log_e (2/\delta)}{k-1}} \vee \frac{5/2 M+2\log_e (2/\delta)}{k-1} \right), \\
&\left\| (\theta_h^{*} - \bar{\theta}_h^{(k)}) (\theta_h^{*} - \bar{\theta}_h^{(k)})^\top - \frac{1}{k-1} \Sigma_h^{*} - \frac{\beta \sum_{i=1}^{k-1} \mathbb{E} \left[ V_{\mathcal{N}_ih}^{-1} \right]}{(k-1)^2} \right\|_{\text{op}} \\
&\quad \leq \frac{16(\bar{\lambda}^2 + 8\beta M)(5/2 M+2\log_e (2/\delta))}{\lambda_e^2(k-1)}.
\end{align*}
\end{lemma}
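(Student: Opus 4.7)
The plan is to express both quantities as centered sums (or outer products) of independent sub-Gaussian random vectors and then invoke standard matrix concentration. Writing $X_i := \hat\theta_h^{(i)} - \theta_h^{*}$ and decomposing $X_i = (\theta_h^{(i)} - \theta_h^{*}) + \Delta_i$ with $\Delta_i := \hat\theta_h^{(i)} - \theta_h^{(i)}$, the first summand is $\mathcal{N}(0,\Sigma_h^{*})$ and hence has sub-Gaussian norm controlled by $\bar\lambda$, while $\Delta_i$ is the per-task OLS error whose conditional tail is supplied by Lemma~\ref{lem:j1}. Combining the two yields $\|X_i\|_{\psi_2}^{2} \lesssim K^{2} := \bar\lambda + \beta M/\lambda_e$, and the $X_i$ are mutually independent across tasks because distinct MDPs are executed with independent random seeds.

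For the first inequality, I would then invoke a standard sub-Gaussian sample-covariance concentration bound (e.g.\ Vershynin, Theorem~4.7.1 / Remark~5.40): with probability at least $1-\delta$,
\[
\left\|\frac{1}{k-1}\sum_{i=1}^{k-1} X_i X_i^{\top} - \mathbb{E}\bigl[X_1 X_1^{\top}\bigr]\right\|_{\mathrm{op}} \;\le\; C K^{2}\!\left(\sqrt{\tfrac{M+\log(1/\delta)}{k-1}} \,\vee\, \tfrac{M+\log(1/\delta)}{k-1}\right).
\]
Substituting the identity $\mathbb{E}[X_1 X_1^{\top}] = \Sigma_h^{*} + \beta\,\mathbb{E}[V_{\mathcal{N}_1 h}^{-1}]$ from Lemma~\ref{lem:sigeq} and tracking constants (which produce the factor $(\bar\lambda^{2} + 8\beta M)/\lambda_e^{2}$ in the stated form, after bounding $\|\mathbb{E}[V_{\mathcal{N}_1 h}^{-1}]\|_{\mathrm{op}} \le 1/\lambda_e$ by Lemma~\ref{lem:lamb}) delivers the first claim. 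For the second, I would treat $\bar X := \bar\theta_h^{(k)} - \theta_h^{*} = \frac{1}{k-1}\sum_i X_i$ as a zero-mean sub-Gaussian vector with $\|\bar X\|_{\psi_2}^{2} \lesssim K^{2}/(k-1)$ and apply Lemma~\ref{lem:j2} to obtain $\|\bar X\|^{2} \lesssim K^{2}(M+\log(1/\delta))/(k-1)$ with probability $1-\delta$. Since $\|\bar X \bar X^{\top}\|_{\mathrm{op}} = \|\bar X\|^{2}$ and, by Lemma~\ref{lem:sigeq}, $\mathbb{E}[\bar X \bar X^{\top}]$ already has operator norm of order $K^{2}/(k-1)$, a triangle inequality yields the claimed $1/(k-1)$-rate bound. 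A union bound over the two deviation events at level $\delta$ each gives the overall $1-2\delta$ confidence.

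The main obstacle is certifying the sub-Gaussian constant of $\Delta_i$ with the correct dependence on $\beta/\lambda_e$ and $M$ despite several layers of conditioning: the random initialization length $\mathcal{N}_i$, the Bellman-backup structure (so $\Delta_i$ at stage $h$ inherits randomness from stage $h+1$ through the target $\ddot b_{ih}^{(j)}$), and the need to preserve cross-task independence after this conditioning. I would handle the Bellman recursion by backward induction on $h = H, H-1, \dots, 1$, absorbing stage-dependent constants into $K^{2}$ at each level; the cross-task independence is immediate because each MDP's exploration uses its own independent noise, and $\mathcal{N}_i$ is a stopping time of within-task quantities only. A secondary subtlety is that Vershynin's bound is typically stated for i.i.d.\ vectors, while our $X_i$ are only independent (the distribution of $\Delta_i$ depends on the random $\mathcal{N}_i$); this is easily bridged by a union bound over the high-probability event $\{\mathcal{N}_i \le \mathcal{N}_e\}$ from Lemma~\ref{lem:tau}, on which a uniform sub-Gaussian constant applies.
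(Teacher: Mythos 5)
Your proposal follows essentially the same route as the paper's proof: the same decomposition $\hat\theta_h^{(i)}-\theta_h^{*}=(\theta_h^{(i)}-\theta_h^{*})+\Delta_i$ into an independent Gaussian prior draw plus a per-task OLS error controlled by Lemma~\ref{lem:j1}, yielding a sub-Gaussian vector whose sample covariance is then controlled by a standard concentration bound (the paper cites Wainwright's Theorem 6.5 where you cite Vershynin's equivalent result), followed by the substitution from Lemma~\ref{lem:sigeq} and a union bound over the two events. Your treatment of the second term via $\|\bar X\bar X^{\top}\|_{\mathrm{op}}=\|\bar X\|^{2}$ and Lemma~\ref{lem:j2} is a slightly more explicit rendering of the paper's ``similarly'' step, and your explicit attention to the conditioning issues (random $\mathcal{N}_i$, Bellman backups) addresses points the paper leaves implicit.
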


\textbf{Proof of Lemma 14.} First, since the OLS estimator is unbiased, we have that $\mathbb{E} \left[ \hat{\theta}_h^{(k)} - \theta_h^{*} \right] = 0$ for all $k$, and consequently, $\mathbb{E} \left[ \bar{\theta}_h^{(k)} - \theta_h^{*} \right] = 0$. Recall also our definition of $\Delta_k$ from Eq. (36). Then, for any $v \in \mathbb{R}^{M}$ such that $\|v\| = 1$, we can write for all $u \in \mathbb{R}$,

\begin{align*}
\mathbb{E} \left[ \exp(u \langle v, \hat{\theta}_h^{(k)} - \theta_h^{*} \rangle) \right] &= \mathbb{E} \left[ \exp(u \langle v, \theta_h^{(k)} - \theta_h^{*} \rangle) \exp(u \langle v, \Delta_j \rangle) \right] \\
&= \mathbb{E} \left[ \exp(u \langle v, \theta_h^{(k)} - \theta_h^{*} \rangle) \right] \mathbb{E} \left[ \exp(u \langle v, \Delta_j \rangle) \right] \\
&= \exp \left( \frac{u^2 v^\top \Sigma_h^{*} v}{2} \right) \mathbb{E} \left[ \exp(u \langle v, \Delta_j \rangle) \right] \\
&\leq \exp \left( u^2 \left( \frac{\bar{\lambda}}{2} + \frac{4\beta M}{\lambda_e^2} \right) \right),
\end{align*}

where we have re-used Lemma \ref{lem:j1} and Lemma 1.5 of \cite{rigollet2018high} in the last step. Similarly,
\begin{align*}
\mathbb{E} \left[ \exp(u \langle v, \bar{\theta}_h^{(k)} - \theta_h^{*} \rangle) \right] \leq \exp \left( \frac{u^2}{k-1} \left( \frac{\bar{\lambda}}{2} + \frac{4\beta M}{\lambda_e^2} \right) \right).
\end{align*}

By definition, along with Lemma \ref{lem:sigeq}, this implies that $\hat{\theta}_h^{(k)} - \theta_h^{*}$ is a $\left( \sqrt{(\bar{\lambda} \lambda_e^2 + 8\beta M)/2\lambda_e^2} \right)$-subgaussian vector and, similarly $\bar{\theta}_h^{(k)} - \theta_h^{*}$ is a $\left( \sqrt{(\bar{\lambda} \lambda_e^2 + 8\beta M)/[\lambda_e^2(k-1)]} \right)$-subgaussian vector. Applying concentration results for subgaussian random variables (see Theorem 6.5 of \cite{wainwright2019high}), we have with probability at least $1 - \delta$,

\begin{align*}
&\left\| \frac{\sum_{i=1}^{k-1} \left( \hat{\theta}_h^{(i)} - \theta_h^{*} \right) \left( \hat{\theta}_h^{(i)} - \theta_h^{*} \right)^\top}{k-1} - \Sigma_h^{*} - \frac{\beta \sum_{i=1}^{k-1} \mathbb{E} \left[ V_{\mathcal{N}_ih}^{-1} \right]}{k-1} \right\|_{\text{op}} \\
&\quad \leq \frac{16 (\bar{\lambda} \lambda_e^2 + 8\beta M)}{\lambda_e^2} \left( \sqrt{\frac{5/2 M + 2 \log_e (2/\delta)}{k-1}} \vee \frac{5/2 M + 2 \log_e (2/\delta)}{k-1} \right).
\end{align*}

Similarly, with probability at least $1 - \delta$,
\begin{align*}
&\left\| (\theta_h^{*} - \bar{\theta}_h^{(i)}) (\theta_h^{*} - \bar{\theta}_h^{(i)})^\top - \frac{1}{k-1} \Sigma_h^{*} - \frac{\beta \sum_{i=1}^{k-1} \mathbb{E} \left[ V_{\mathcal{N}_ih}^{-1} \right]}{(k-1)^2} \right\|_{\text{op}} \\
&\quad \leq \frac{16(\bar{\lambda}^2 + 8\beta M)(5/2 M + 2\log_e (2/\delta))}{\lambda_e^2(k-1)}.
\end{align*}

Combining these with a union bound yields the result. $\square$

\textbf{Proof of Lemma \ref{lem:Sigop}.} We can apply Lemma 14 to Eq. (35). It is helpful to note that $(k-1)/(k-2) \leq 2$ and $1/(k-1) \leq 2/k$ for all $k \geq 3$, and $5/2 M + 2\log_e (2/\delta) \leq 5M\log_e (2/\delta)$ for all $\delta \in [0, 2/e]$. Thus, a second union bound yields the result. $\square$

After establishing Lemma~\ref{lem:Sigop}, we proceed to derive the overall regret bound. 
At time $n = \mathcal{N}_i + 1$, we perform a change of measure to \emph{align} the prior of our $\text{MTSRL}^{+}$ algorithm, 
$(\{\theta_{\mathcal{N}_k+1,h}^{\text{MTS}}\}, \{\Sigma_{\mathcal{N}_k+1,h}^{\text{MTS}}\})$, 
with that of the meta-oracle, 
$(\{\theta_{\mathcal{N}_k+1,h}^{\text{TS}}\}, \{\Sigma_{\mathcal{N}_k+1,h}^{\text{MTS}}\})$. 
By combining Lemma~\ref{lem:Sigop} with the fact that both policies generate identical histories during the random exploration phases, 
we conclude that $\Sigma_{\mathcal{N}_k+1,h}^{\text{TS}}$ and $\Sigma_{\mathcal{N}_k+1,h}^{\text{MTS}}$ 
remain close with high probability in later MDP epochs. 

What remains is to bound the regret difference between the meta-oracle, 
which employs the prior $(\{\theta_{\mathcal{N}_k+1,h}^{\text{TS}}\}, \{\Sigma_{\mathcal{N}_k+1,h}^{\text{TS}}\})$, 
and our $\text{MTSRL}^{+}$ algorithm, 
which uses the prior $(\{\theta_{\mathcal{N}_k+1,h}^{\text{TS}}\}, \{\Sigma_{\mathcal{N}_k+1,h}^{\text{MTS}}\})$. 
Bounding this residual term constitutes the final step of the proof. 
Here, \emph{prior widening} plays a crucial role in guaranteeing that the importance weights remain well-behaved and do not diverge.

\subsection{$\text{MTSRL}^{+}$ Regret Analysis}
We first focus on the more substantive case where $K > K_1$. We define a new clean event

\begin{equation}\label{equ:J}
\begin{aligned}
\mathcal{J} = \left\{ 
\begin{array}{ll}
\forall k \geq K_1, & \mathcal{N}_k \leq \mathcal{N}_e, \\
& \| \hat{\theta}_h^{(k)} - \theta_h^{*} \| \leq 4 \sqrt{\frac{2(\beta/\lambda_e + 5\bar{\lambda})M\log_e(2MKHN)}{k}}, \\
& \| \hat{\Sigma}_h^{(k)} - \Sigma_h^{*} \|_{\text{op}} \leq \frac{128(\bar{\lambda}\lambda + 8\beta M)}{\lambda_e^2} \left( \sqrt{\frac{5/2 M\log_e(2KHN)}{k}} \vee \frac{5/2 M\log_e(2KHN)}{k} \right) (\le w), \\
& \| \theta_h^{(k)} \| \leq S + 5/2 \sqrt{2\beta M\log_e(2K^2N)},
\end{array}
\right.
\end{aligned}    
\end{equation}

which stipulates that for every epoch following the initial $K_1$ exploration epochs: 
(i) Lemma~\ref{lem:tau} holds, ensuring that the number of exploration periods per epoch is small;  
(ii) our estimated prior mean $\hat{\theta}_h^{(k)}$ is close to the true prior mean $\theta_h^{*}$;  
(iii) our estimated prior covariance $\hat{\Sigma}_h^{(k)}$ is close to the true prior covariance $\Sigma_h^{*}$; and  
(iv) the true parameter for epoch $k$, $\theta_h^{(k)} \sim \mathcal{N}(\theta_h^{*}, \Sigma_h^{*})$, is bounded in $\ell_2$-norm.  
All of these properties hold with high probability by Lemmas~\ref{lem:tau}, \ref{lem:theta}, and \ref{lem:Sigop}, and by standard properties of multivariate Gaussian distributions.  
Hence, the event $\mathcal{J}$ itself occurs with high probability.  

We denote by $\mathcal{R}_{K,N}(k) \mid \mathcal{J}$ the meta-regret of epoch $k$ conditioned on the event $\mathcal{J}$ defined in Eq.~\ref{equ:J}.  
As discussed earlier, during the exploration periods $1 \leq n \leq \mathcal{N}_k$, both the meta-oracle and our $\text{MTSRL}^{+}$ algorithm experience identical histories and thus achieve the same expected rewards; consequently, the conditional meta-regret in these periods is zero.  
Following the argument used in the proof of Theorem~\ref{thm:MTSRL}, we can then express
\begin{align*}
\mathcal{R}_{K,N}(k) \mid \mathcal{J} &= \mathbb{E}_{\{\theta_h^{(k)}\}, \{\hat{\theta}_h^{(k)}\}, \{\chi_h^{\text{TS}(k)}\}, \{\chi_h^{\text{MTS}(k)}\}} \bigg[ \text{REV} \left( \{\theta_h^{(k)}\}, \{\theta_{\mathcal{N}_k+1,h}^{\text{TS}}\}, \{\Sigma_{\mathcal{N}_k+1,h}^{\text{TS}}\}, N - \mathcal{N}_k\right)\\
& - \text{REV} \left( \{\theta_h^{(k)}\}, \{\theta_{\mathcal{N}_k+1,h}^{\text{MTS}}\}, \{\Sigma_{\mathcal{N}_k+1,h}^{\text{MTS}}\}, N - \mathcal{N}_k\right) \mid \mathcal{J} \bigg] \\
&= \mathbb{E}_{\{\theta_h^{(k)}\}, \{\hat{\theta}_h^{(k)}\}, \{\chi_h^{\text{MTS}(k)}\}} \left[ \text{REV}_* \left( \{\theta_h^{(k)}\}, N - \mathcal{N}_k\right) - \text{REV} \left( \{\theta_h^{(k)}\}, \{\theta_{\mathcal{N}_k+1,h}^{\text{MTS}}\}, \{\Sigma_{\mathcal{N}_k+1,h}^{\text{MTS}}\}, N - \mathcal{N}_k\right) \mid \mathcal{J} \right] \\
&\quad - \mathbb{E}_{\{\theta_h^{(k)}\}, \{\hat{\theta}_h^{(k)}\}, \{\chi_h^{\text{TS}(k)}\}} \left[ \text{REV}_* \left( \{\theta_h^{(k)}\}, N - \mathcal{N}_k\right) - \text{REV} \left( \{\theta_h^{(k)}\}, \{\theta_{\mathcal{N}_k+1,h}^{\text{TS}}\}, \{\Sigma_{\mathcal{N}_k+1,h}^{\text{TS}}\}, N - \mathcal{N}_k\right) \mid \mathcal{J} \right]. \tag{38}
\end{align*}

Appendix \ref{sec:inter} states two intermediate lemmas and Appendix \ref{sec:proof3} provides the proof of Theorem 3.

\subsubsection{Intermediate Lemmas}\label{sec:inter}

First, as we did for the proof of Theorem \ref{thm:MTSRLplus}, we characterize the meta regret accrued by aligning the mean of the meta oracle's posterior $\theta_{\mathcal{N}_k+1,h}^{\text{TS}}$ and the mean of our $\text{MTSRL}^{+}$ algorithm $\theta_{\mathcal{N}_k+1,h}^{\text{MTS}}$.

\begin{lemma}\label{lem:rev1}
For an epoch $k\geq K_1$,
\begin{align*}
&\mathbb{E}_{\{\theta_h^{(k)}\},\{\hat{\theta}_h^{(k)}\},\{\chi_h^{\text{MTS}(k)}\}} \left[\text{REV}_*(\{\theta_h^{(k)}\},N-\mathcal{N}_k)-\text{REV}\left(\{\theta_h^{(k)}\},\{\theta_{\mathcal{N}_k+1,h}^{\text{MTS}}\},\{\Sigma_{\mathcal{N}_k+1,h}^{\text{MTS}}\},N-\mathcal{N}_k\right)\mid \mathcal{J}\right] \\
&\leq \left(1+\frac{16c_3M^{3/2}\mathcal{N}_k\log_e^{3/2}(2MK^2N)}{\sqrt{k}}\right)\mathbb{E}_{\{\theta_h^{(k)}\},\{\hat{\theta}_h^{(k)}\},\{\chi_h^{\text{TS}(k)}\}} \bigg[\text{REV}_*(\{\theta_h^{(k)}\},N-\mathcal{N}_k) \\
& -\text{REV}\left(\{\theta_h^{(k)}\},\{\theta_{\mathcal{N}_k+1,h}^{\text{TS}}\},\{\Sigma_{\mathcal{N}_k+1,h}^{\text{TS}}\},N-\mathcal{N}_k\right)\mid \mathcal{J}\bigg] + O\left(\frac{H^2}{K}\right).
\end{align*}

Here:
\begin{align*}
K_1 = \max \left\{ K_0, 64c_2^2 H^2 \mathcal{N}_e^2  \log_e^3 (2MK^2N), c_3^2 N^2 H^2 \log_e^3 (2K^2N) \right\},
\end{align*}

and the constants are given by
\begin{align*}
c_2 &= \frac{8\sqrt{2\beta(\beta \lambda_e^{-1} + 5\bar{\lambda})M}}{ \lambda_e \overline{\lambda}} + \frac{256(\overline{\lambda}\lambda_e^2 + 8\beta M)}{\lambda_e^2 \underline{\lambda}^2} \left( \frac{8\Phi_{max}}{\lambda_e} + \frac{S\sqrt{\beta}}{\lambda_e} \right), \\
c_3 &= \frac{10^4 M^{5/2}\beta (\overline{\lambda}^2\lambda_e^2 + 16\beta)}{\lambda_e^2 \underline{\lambda}^2}.
\end{align*}
\end{lemma}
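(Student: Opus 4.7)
My plan is to adapt the prior-alignment change-of-measure argument used for Theorem~\ref{thm:MTSRL}, but now with the added complication that MTSRL$^{+}$ and the meta-oracle carry \emph{different covariances} even after any relabeling of the exploration noise. First I would introduce an intermediate policy whose prior is $(\{\theta_{\mathcal{N}_k+1,h}^{\text{TS}}\},\{\Sigma_{\mathcal{N}_k+1,h}^{\text{MTS}}\})$, i.e.\ the oracle's posterior mean after exploration combined with MTSRL$^{+}$'s (widened) posterior covariance. Then the regret of MTSRL$^{+}$ relative to $\text{REV}_{*}$ decomposes into (i) a \emph{mean-alignment} term comparing MTSRL$^{+}$ to this intermediate policy, which I handle exactly as in Theorem~\ref{thm:MTSRL} by a change of measure on the exploration noise $\chi_h^{\text{MTS}(k)}\to\chi_h^{\text{TS}(k)}$ chosen so that the posterior means coincide, plus (ii) a \emph{covariance-alignment} term comparing the intermediate policy to the meta-oracle, where only the posterior covariances differ. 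Conditioning on $\mathcal J$ ensures that all the quantities controlling both terms (namely $\|\hat\theta_h^{(k)}-\theta_h^*\|$, $\|\hat\Sigma_h^{(k)}-\Sigma_h^*\|_{\mathrm{op}}$, $\mathcal N_k$, and $\|\theta_h^{(k)}\|$) obey the bounds from Lemmas~\ref{lem:tau}, \ref{lem:theta}, \ref{lem:Sigop}.

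For the mean-alignment part I would reuse the computation in Eqs.~\eqref{ali:28}--\eqref{ali:29}: the Radon--Nikodym derivative between the laws of $\chi_h^{\text{MTS}(k)}$ and $\chi_h^{\text{TS}(k)}$ is a Gaussian exponential, which on the high-probability event $\{\|\chi_h^{\text{MTS}(k)}\|\le 4\sqrt{\beta\mathcal N_k\log_e(2KN)}\}$ I can bound by $\exp(O(\mathcal N_k\sqrt{M\log(\cdot)/k}))$, and hence by $1+O(H\mathcal N_k\sqrt{M\log(\cdot)/k})$ once $k\ge K_1\ge K_0$. The tail region contributes at most $O(H^{2}/K)$ by the same sub-Gaussian tail argument used in \eqref{ali:33}--\eqref{ali:34}.

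The covariance-alignment part is where the widening is essential and constitutes the main obstacle. Here I would proceed by a direct change of measure on the $H$ sampled parameters $\{\tilde\theta_{nh}\}$ used in Thompson sampling from period $\mathcal N_k+1$ onward. Writing the ratio of the two $\mathcal N(\theta_{\mathcal N_k+1,h}^{\text{TS}},\Sigma_{\mathcal N_k+1,h}^{\cdot})$ densities gives a factor $\bigl[\det(\Sigma^{\text{TS}}_{\cdot,h})/\det(\Sigma^{\text{MTS}}_{\cdot,h})\bigr]^{1/2}\exp\{-\tfrac12(\tilde\theta-\theta^{\text{TS}})^\top(\Sigma^{\text{TS},-1}_{\cdot,h}-\Sigma^{\text{MTS},-1}_{\cdot,h})(\tilde\theta-\theta^{\text{TS}})\}$. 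I would control both factors by: (a) the posterior covariance formula shows $\Sigma_{\mathcal N_k+1,h}^{\text{TS}}$ and $\Sigma_{\mathcal N_k+1,h}^{\text{MTS}}$ differ only through $\Sigma^{*,-1}_h$ vs.\ $\hat\Sigma^{w(k),-1}_h$; (b) on $\mathcal J$, the widening $+w\cdot I$ guarantees $\hat\Sigma^{w(k)}_h\succeq \tfrac12 \Sigma^*_h$, and combined with $\|\hat\Sigma_h^{(k)}-\Sigma^*_h\|_{\mathrm{op}}\le w$ yields $\|\Sigma^{*,-1}_h-\hat\Sigma^{w(k),-1}_h\|_{\mathrm{op}}\lesssim w/\underline\lambda^2$; (c) a Taylor expansion of $\log\det$ then gives the log-determinant factor bounded by the same quantity. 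Restricting to the high-probability region where $\|\tilde\theta_{nh}-\theta_{\mathcal N_k+1,h}^{\text{TS}}\|^2\le M\lambda_{\max}(\Sigma^{\text{TS}})\log(KN)$, the exponent is bounded by $O(M\log(KN)\cdot \|\hat\Sigma_h^{(k)}-\Sigma^*_h\|_{\mathrm{op}}/\underline\lambda^2)$, which by Lemma~\ref{lem:Sigop} is $O(M^{3/2}\log^{3/2}(KN)/\sqrt k)$; summing over the $H$ stages and the $N-\mathcal N_k$ periods yields the multiplicative factor $1+16c_3M^{3/2}\mathcal N_k\log_e^{3/2}(2MK^2N)/\sqrt k$ claimed in the lemma, and the tail of the restricted region contributes $O(H^{2}/K)$ by the same argument as in the mean-alignment step.

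Combining the two alignment steps and absorbing constants into the definitions of $c_2,c_3$ gives the bound, with the requirement $k\ge K_1$ being exactly what is needed to ensure the multiplicative factor is at most $2$ (so that the Taylor expansion of $\log\det$ and of $e^x\approx 1+2x$ for $x\in[0,1]$ are valid) and that the widening dominates the covariance estimation error; the numerical values of $c_2,c_3$ and $K_1$ then arise from tracking the constants through (a)--(c) above. The hardest step is establishing the spectral bound $\|\Sigma^{*,-1}_h-\hat\Sigma^{w(k),-1}_h\|_{\mathrm{op}}$, since the widening must be calibrated \emph{simultaneously} large enough to dominate the concentration radius of $\hat\Sigma_h^{(k)}$ (so that $\hat\Sigma_h^{w(k)}\succeq \tfrac12\Sigma^*_h$) and small enough that the induced multiplicative regret factor still decays with $k$; this is the sole reason for requiring $K_1=\tilde O(H^2N^2)$ rather than $\tilde O(H^2)$.
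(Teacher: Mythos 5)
Your proposal is correct in strategy and matches the paper's overall argument for Theorem~\ref{thm:MTSRLplus}, but you package the two alignment steps differently from the paper. The paper's proof of Lemma~\ref{lem:rev1} performs \emph{only} the mean alignment: it changes measure on the exploration noise $\chi_h^{\text{MTS}(k)}\to\chi_h^{\text{TS}(k)}$ via the (now more complicated) condition \eqref{equ:errx2}, in which the covariance discrepancy $\Sigma_h^{*-1}-(\hat\Sigma_h^{w(k)})^{-1}$ enters the displacement $\Delta_n$ and is controlled through \eqref{ali:42}--\eqref{ali:46}; the covariance-alignment step you describe (density ratios of the sampled $\tilde\theta_{nh}$ over periods $\mathcal N_k+1,\dots,N$) is deferred to Lemma~\ref{lem:needthm} and the main proof in Section~\ref{sec:proof3}. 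Your reading of the lemma statement — which literally compares $(\theta^{\text{MTS}},\Sigma^{\text{MTS}})$ against $(\theta^{\text{TS}},\Sigma^{\text{TS}})$ and therefore seems to require both steps — is defensible, and your two-step decomposition through the intermediate prior $(\theta^{\text{TS}},\Sigma^{\text{MTS}})$ is exactly what the combined argument in Section~\ref{sec:proof3} does. Two substantive differences: (i) for the determinant factor you propose a Taylor expansion of $\log\det$, whereas the paper exploits that widening makes $\hat\Sigma_h^{w(k)}\succeq\Sigma_h^*$, hence $\Sigma_h^{*-1}-(\hat\Sigma_h^{w(k)})^{-1}\succeq 0$ and the determinant ratio is at most $1$ with no expansion needed — this is cleaner and is precisely why widening is used; (ii) your claim that the covariance-alignment product over periods yields a factor scaling with $\mathcal N_k$ is off — the density ratio is incurred at every post-alignment period, so the resulting factor is $(1+2c_3 N\log_e^{3/2}(2K^2N)/\sqrt k)^H$, carrying the full $N$, and this is the source of the extra $N$ in Theorem~\ref{thm:MTSRLplus} relative to Theorem~\ref{thm:MTSRL}. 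You correctly identify the calibration tension in the widening parameter (it must dominate the concentration radius of $\hat\Sigma_h^{(k)}$ yet still permit $\|\Sigma_h^{*-1}-(\hat\Sigma_h^{w(k)})^{-1}\|_{op}=O(1/\sqrt k)$), which is indeed the delicate point of the argument.
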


\textbf{Proof of lemma \ref{lem:rev1}.} By the posterior update rule of Bayesian linear regression (Bishop 2006), we have

\begin{align*}
\theta_{\mathcal{N}_{k}+1,h}^{\text{TS}(k)} &= \left( \frac{1}{\beta_{\mathcal{N}_k +1}} \sum_{i=1}^{\mathcal{N}_k }\Phi_{h}^\top(s_{ih},a_{ih})\Phi_{h}(s_{ih},a_{ih}) + \Sigma_h^{*-1} \right)^{-1} (\frac{1}{\beta_{\mathcal{N}_k +1}}\sum_{i=1}^{\mathcal{N}_k }\Phi_{h}^\top(s_{ih},a_{ih})b^{\text{TS}(k)}_{ih} + \Sigma_h^{*-1}\theta^*_h), \\
\theta_{\mathcal{N}_{k}+1,h}^{\text{MTS}(k)} &= \left( \frac{1}{\beta_{\mathcal{N}_k +1}} \sum_{i=1}^{\mathcal{N}_k }\Phi_{h}^\top(s_{ih},a_{ih})\Phi_{h}(s_{ih},a_{ih}) + (\hat{\Sigma}_h^{w(k)})^{-1} \right)^{-1} (\frac{1}{\beta_{\mathcal{N}_k +1}}\sum_{i=1}^{\mathcal{N}_k }\Phi_{h}^\top(s_{ih},a_{ih})b^{\text{MTS}(k)}_{ih} + (\hat{\Sigma}_h^{w(k)})^{-1}\hat{\theta}_h^{(k)}).
\end{align*}

Denoting $\Phi_h^{(k)} = (\Phi_{h}^\top(s_{1h}^{(k)},a_{1h}^{(k)}), \ldots, (\Phi_{h}^\top(s_{\mathcal{N}_kh}^{(k)},a_{\mathcal{N}_kh}^{(k)})) \in \mathbb{R}^{M \times \mathcal{N}_k}$, and follow the proof in \ref{lem:prieq},we observe that prior alignment is achieved with $\theta_{\mathcal{N}_{k}+1,h}^{\text{MTS}(k)}=\theta_{\mathcal{N}_{k}+1,h}^{\text{TS}(k)}$ when the following holds:
\begin{equation}\label{equ:errx2}
\begin{aligned}
\underbrace{\chi_h^{\text{TS}(k)}-\chi_h^{\text{MTS}(k)}}_{\Delta_{n}} &= \beta_{\mathcal{N}_k +1}(\Phi_h^{(k)\top}\Phi_h^{(k)})^{-1}\bigg[\left(\hat{\Sigma}_h^{w(k)}\right)^{-1}\hat{\theta}_{h}^{(k)}-\Sigma_h^{*-1}\theta_h^{*} \\
&\quad +\left(\Sigma_h^{*-1}-\left(\hat{\Sigma}_h^{w(k)}\right)^{-1}\right)\left(\left(\hat{\Sigma}_h^{w(k)}\right)^{-1}\hat{\theta}_{h}^{(k)}+\frac{1}{\beta_{\mathcal{N}_k +1} }\Phi_h^{(k)}\Phi_h^{(k)\top}\theta_h^{(k)}+\Phi_h^{(k)}\chi_h^{\text{MTS}(k)}\right)\bigg].
\end{aligned}
\end{equation}

We denote the RHS of the above equation as $\Delta_{n}$ for ease of exposition. While this expression is more complicated than before, it still induces a mapping between $\chi_h^{\text{TS}(k)}$ and $\chi_h^{\text{MTS}(k)}$. We then proceed similarly to the proof of Lemma \ref{lem:Rj}. We start by expanding
\begin{align*}
&\mathbb{E}_{\{\chi_h^{\text{MTS}(k)}\}} \left[ \text{REV}_* (\{\theta_h^{(k)}\}, N- \mathcal{N}_k) - \text{REV} (\{\theta_h^{(k)}\}, \{\theta_{\mathcal{N}_k+1,h}^{\text{MTS}}\}, \{\Sigma_{\mathcal{N}_k+1,h}^{\text{MTS}}\}, N- \mathcal{N}_k) \middle| \mathcal{J} \right] \\
&\leq \int_{\{\chi_h^{\text{MTS}(k)}\}} \frac{\exp \left( -\sum_{h=1}^{H}\|\chi_{h}^{\text{MTS}(k)}\|^2 / 2\beta \right)}{(2\pi\beta)^{H\mathcal{N}_k/2}} \cdot \\
&\left[ \text{REV}_* (\{\theta_h^{(k)}\}, N- \mathcal{N}_k) - \text{REV} (\{\theta_h^{(k)}\}, \{\theta_{\mathcal{N}_k+1,h}^{\text{MTS}}\}, \{\Sigma_{\mathcal{N}_k+1,h}^{\text{MTS}}\}, N- \mathcal{N}_k) \middle| \mathcal{J} \right] dx.
\end{align*}

Given a realization of $\chi_h^{\text{MTS}(k)}$, we denote $\chi_h^{\text{TS}(k)}(\chi_h^{\text{MTS}(k)})$ (with some abuse of notation) as the corresponding realization of $\chi_h^{\text{TS}(k)}$ that satisfies Eq. \ref{equ:errx2}. It is easy to see that this is a unique one-to-one mapping. We then perform a change of measure (similar to Eq. \ref{ali:27}) to continue:
\begin{equation}\label{ali:40}
\begin{aligned}
&\mathbb{E}_{\{\chi_{h}^{\text{MTS}(k)}\}} \left[ \text{REV}_* \left( \{\theta_h^{(k)}\},N - \mathcal{N}_k \right) - \text{REV} \left( \{\theta_h^{(k)}\}, \{\theta_{\mathcal{N}_k +1,h}^{\text{MTS}(k)}\}, \{\Sigma_{\mathcal{N}_k +1,h}^{\text{MT}(k)}\},N - \mathcal{N}_k \right) \mid \mathcal{E} \right] \\
& \quad \leq \max_{\{ \|\chi_{h}^{\text{MTS}(k)}\|\leq 4\sqrt{\beta\mathcal{N}_{k}\log_{e}(2KN)}\}} \exp\left(\sum_{h=1}^{H}\frac{\|\chi_{h}^{\text{TS}(k)}(\chi_{h}^{\text{MT}(k)})\|^{2}-\sum_{h=1}^{H}\|\chi_{h}^{\text{MT}(k)}\|^{2}}{2\beta}\right)\\ &\mathbb{E}_{\{\chi_{h}^{\text{TS}(k)}\}}\left[\text{REV}_{*}(\{\theta_{h}^{(k)}\},N-\mathcal{N}_{k})-\text{REV}(\{\theta_{h}^{(k)}\},\theta_{\mathcal{N}_{k}+1,h}^{\text{TS}(k)},\{\Sigma_{\mathcal{N}_{k}+1,h}^{\text{TS}(h)}\},N-\mathcal{N}_{k}) \mid \mathcal{E}\right] \\
&\quad + \mathbb{E}_{\{\chi_{h}^{\text{MTS}(k)}\}}\bigg[\text{REV}_{*}(\{\theta_{h}^{(k)}\},N-\mathcal{N}_{k})-\text{REV}(\{\theta_{h}^{(k)}\},\{\theta_{\mathcal{N}_{k}+1,h}^{\text{MT}(k)}\},\{\Sigma_{\mathcal{N}_{k}+1,h}^{\text{MT}(k)}\},N-\mathcal{N}_{k}) \mid \mathcal{E}, \\
& \{\|\chi_{h}^{\text{MT}(k)}\|\geq 4\sqrt{\beta\mathcal{N}_{k}\log_{e}(2KN)}\}\bigg]  \times \Pr\left(\{\|\chi_{h}^{\text{MTS}(k)}\|\geq 4\sqrt{\beta\mathcal{N}_{k}\log_{e}(2KN)}\}\right) \\
&\leq \max_{\{ \|\chi_{h}^{\text{MTS}(k)}\|\leq 4\sqrt{\beta\mathcal{N}_{k}\log_{e}(2KN)}\}} \exp\left(\sum_{h=1}^{H}\frac{\|\chi_{h}^{\text{TS}(k)}(\chi_{h}^{\text{MT}(k)})\|^{2}-\sum_{h=1}^{H}\|\chi_{h}^{\text{MT}(k)}\|^{2}}{2\beta}\right) \mathbb{E}_{\{\chi_{h}^{\text{TS}(k)}\}}\bigg[\text{REV}_{*}(\{\theta_{h}^{(k)}\},N-\mathcal{N}_{k})\\
&-\text{REV}(\{\theta_{h}^{(k)}\},\theta_{\mathcal{N}_{k}+1,h}^{\text{TS}(k)},\{\Sigma_{\mathcal{N}_{k}+1,h}^{\text{TS}(h)}\},N-\mathcal{N}_{k}) \mid \mathcal{E}\bigg]  + O \left( \frac{H^2}{K} \right) 
\end{aligned}
\end{equation}

where the last step follows from Eqs. \ref{ali:33} and \ref{ali:34}. Thus, we have expressed the true regret of our $\text{MTSRL}^{+}$ algorithm as the sum of a term that is proportional to the true regret of a policy that is aligned with the meta oracle (\textit{i.e.}, it employs the prior $(\{\theta_{\mathcal{N}_{k}+1,h}^{\text{MTS}(k)}\},\{\Sigma_{\mathcal{N}_{k}+1,h}^{\text{MTS}(k)}\})$, and an additional term that is small (\textit{i.e.}, scales as $1/N$).

We now characterize the coefficient of the first term in Eq. \ref{ali:40}:
\begin{equation}\label{ali:41}
\begin{aligned}
&\max_{\|\chi_{h}^{\text{MTS}(k)}\|\leq 4\sqrt{\beta\mathcal{N}_{k}\log_{e}(2KN)}} \exp\left(\frac{\left\| \chi^{\text{TS}(k)}_{h}(\chi^{\text{MTS}(k)}_{h})\right\|^{2}-\left\|\chi^{\text{MTS}(k)}_{h}\right\|^{2}}{2\beta}\right) \\
&=\max_{\|\chi_{h}^{\text{MTS}(k)}\|\leq 4\sqrt{\beta\mathcal{N}_{k}\log_{e}(2KN)}}\exp\left(\frac{\left\|\chi^{\text{MTS}(k)}_{h}+\Delta_{n}\right\|^{2}-\left\|\chi^{\text{MTS}(k)}_{h}\right\|^{2}}{2\beta}\right) \\
&=\max_{\|\chi_{h}^{\text{MTS}(k)}\|\leq 4\sqrt{\beta\mathcal{N}_{k}\log_{e}(2KN)}}\exp\left(\frac{\left(\chi^{\text{MTS}(k)}_{h}\right)^{\top}\Delta_{n}}{\beta}+\frac{\left\|\Delta_{n}\right\|^{2}}{2\beta}\right) \\
&\leq\max_{\|\chi_{h}^{\text{MTS}(k)}\|\leq 4\sqrt{\beta\mathcal{N}_{k}\log_{e}(2KN)}}\exp\left(\frac{\left\|\chi^{\text{MTS}(k)}_{h}\right\|\left\|\Delta_{n}\right\|}{\beta}+\frac{\left\|\Delta_{n}\right\|^{2}}{2\beta}\right) \\
&=\exp\left(\frac{4\sqrt{\mathcal{N}_{k}\log_{e}(2KN)}\|\Delta_{n}\|}{\sqrt{\beta}}+\frac{\left\|\Delta_{n}\right\|^{2}}{2\beta}\right). 
\end{aligned}
\end{equation}

To continue, we must characterize $\|\Delta_{n}\|$. Applying the triangle inequality, we have that
\begin{equation}\label{ali:42}
\begin{aligned}
\|\Delta_{n}\|\leq \frac{\beta}{\lambda_{e}}\left\|\left(\hat{\Sigma}^{w}_{h}\right)^{-1}\hat{\theta}_{h}^{(k)}-\Sigma^{*-1}_h\theta_h^{*}\right\|+\frac{\beta}{\lambda_{e}}\left\|\left(\Sigma^{*-1}_h-\left(\hat{\Sigma}^{w}_{h}\right)^{-1}\right)\left(\left(\hat{\Sigma}^{w}_{h}\right)^{-1}\hat{\theta}_{h}^{(k)}+\frac{1}{\beta} \Phi_h^{(k)}\Phi_h^{(k)\top}\theta_h^{(k)}+\Phi_h^{(k)}\chi^{\text{MTS}(k)}_{h}\right)\right\|. 
\end{aligned}
\end{equation}

The first term of Eq. \ref{ali:42} satisfies
\begin{align*}
&\frac{\beta}{\lambda_e}\left\| \left(\hat{\Sigma}_h^{w(k)}\right)^{-1}\hat{\theta}_{h}^{(k)}-\Sigma^{*-1}_h\theta_h^{*}\right\| \\
&=\frac{\beta}{\lambda_e}\left\|\Sigma^{*-1}_h\left(\hat{\theta}_{h}^{(k)}-\theta_h^{*}\right)+\left(\left(\hat{\Sigma}_h^{w(k)}\right)^{-1}-\Sigma^{*-1}_h\right)\left(\hat{\theta}_{h}^{(k)}-\theta_h^{*}\right)+\left(\left(\hat{\Sigma}_h^{w(k)}\right)^{-1}-\Sigma^{*-1}_h\right)\theta_h^{*}\right\| \\
&\leq\frac{\beta}{\lambda_e}\left\|\Sigma^{*-1}_h\left(\hat{\theta}_{h}^{(k)}-\theta_h^{*}\right)\right\|+\frac{\beta}{\lambda_e}\left\|\left(\left(\hat{\Sigma}_h^{w(k)}\right)^{-1}-\Sigma^{*-1}_h\right)\left(\hat{\theta}_{h}^{(k)}-\theta_h^{*}\right)\right\|+\frac{\beta}{\lambda_e}\left\|\left(\left(\hat{\Sigma}_h^{w(k)}\right)^{-1}-\Sigma^{*-1}_h\right)\theta_h^{*}\right\| \\
&\leq 4\sqrt{\frac{2\beta^2(\beta/\lambda_e+5\bar{\lambda})M\log_e(2MK^{2}N)}{\lambda_e^2 k}}\left(\frac{1}{\bar{\lambda}}+\left\|\left(\hat{\Sigma}_h^{w(k)}\right)^{-1}-\Sigma^{*-1}_h\right\|_{op}\right)+\frac{S\beta}{\lambda_e}\left\|\left(\hat{\Sigma}_h^{w(k)}\right)^{-1}-\Sigma^{*-1}_h\right\|_{op}.
\end{align*}

Next, the second term of Eq. \ref{ali:42} satisfies
\begin{align*}
&\frac{\beta}{\lambda_e}\left\|\left(\Sigma^{*-1}_h-\left(\hat{\Sigma}_h^{w(k)}\right)^{-1}\right)\left(\left(\hat{\Sigma}_h^{w(k)}\right)^{-1}\hat{\theta}_{h}^{(k)}+\frac{1}{\beta} \Phi_h^{(k)}\Phi_h^{(k)\top}\theta_h^{(k)}+\Phi_h^{(k)}\chi_h^{\text{MTS}(k)}\right)\right\| \\
&\leq\frac{\beta\left\|\Sigma^{*-1}_h-\left(\hat{\Sigma}_h^{w(k)}\right)^{-1}\right\|_{op}}{\lambda_e}\left(\left\|\left(\hat{\Sigma}_h^{w(k)}\right)^{-1}\hat{\theta}_{h}^{(k)}\right\|+\left\|\frac{1}{\beta} \Phi_h^{(k)}\Phi_h^{(k)\top}\theta_h^{(k)}\right\|+\left\|\Phi_h^{(k)}\chi_h^{\text{MTS}(k)}\right\|\right) \\
&\leq\frac{\beta\left\|\Sigma^{*-1}_h-\left(\hat{\Sigma}_h^{w(k)}\right)^{-1}\right\|_{op}}{\lambda_e}\left(\left\|\left(\hat{\Sigma}_h^{w(k)}\right)^{-1}\right\|_{op}(S+1)+\frac{1}{\beta}\mathcal{N}_{k}\Phi_{max}^2+4\Phi_{max}\sqrt{\beta\mathcal{N}_{k}\log_e(2KN)}\right) \\
&\leq\frac{\beta\left\|\Sigma^{*-1}_h-\left(\hat{\Sigma}_h^{w(k)}\right)^{-1}\right\|_{op}}{\lambda_e}\left(\left\|\Sigma^{*-1}_h\right\|_{op}(S+1)+\frac{1}{\beta}\mathcal{N}_{k}\Phi_{max}^2+4\Phi_{max}\sqrt{\beta\mathcal{N}_{k}\log_e(2KN)}\right)\\
&\leq \frac{8\Phi_{max}\sqrt{\beta\mathcal{N}_{k}\log_e(2KN)}}{\lambda_e}\left\| \Sigma^{*-1}_h - \left( \hat{\Sigma}_h^{w(k)} \right)^{-1} \right\|_{op},
\end{align*}

where penult inequality follows from the fact that $\left\|\hat{\Sigma}_h^{w(k)}\right\|_{op} \geq \left\|\Sigma^{*}_h\right\|_{op}$ (on the event $\mathcal{J}$) and because both matrices are positive semi-definite (since they are covariance matrices). Applying matrix norm inequality, we can simplify the term
\begin{equation}\label{ali:46}
\begin{aligned}
&\left\| \Sigma^{*-1}_h - \left( \hat{\Sigma}_h^{w(k)} \right)^{-1} \right\|_{op} = \left\| \left( \hat{\Sigma}_h^{w(k)} \right)^{-1} \left( \hat{\Sigma}_h^{w(k)} - \Sigma^{*}_h \right) \Sigma^{*-1}_h \right\|_{op} \\
&\leq \left\| \left( \hat{\Sigma}_h^{w(k)} \right)^{-1} \right\|_{op} \left\| \hat{\Sigma}_h^{w(k)} - \Sigma^{*}_h \right\|_{op} \left\| \Sigma^{*-1}_h \right\|_{op} \\
&\leq \frac{256(\overline{\lambda}\lambda_e^2 + 8\beta M)}{\lambda_e^2 \underline{\lambda}^2} \sqrt{\frac{5/2 M \log_e (2K^2N)}{k}}.
\end{aligned}
\end{equation}

Combining Eqs. \ref{ali:42}-\ref{ali:46}, we have
\begin{align*}
\|\Delta_n\| \leq c_2\sqrt{\frac{\beta\mathcal{N}_k\log_e(2MK^2N)\log_e(2K^2N)}{k}}.
\end{align*}

Substituting this expression into Eq. \ref{ali:41}, we can bound the coefficient
\begin{align*}
&\max_{\|X_h^{\text{MTS}(k)}\| \leq 4\sigma\sqrt{\mathcal{N}_k\log_e(2KN)}} \exp\left(\frac{\|X_h^{\text{TS}(k)}(X_h^{\text{MTS}(k)})\|^2-\|X_h^{\text{MTS}(k)}\|^2}{2\beta}\right) \\
&\leq \exp\left(8c_2\mathcal{N}_k\log_e(2K^2N)\sqrt{\frac{\log_e(2MK^2N)}{k}}\right) \\
&\leq \exp{\left(8c_2\mathcal{N}_k\log_e^{3/2}(2MK^2N)\sqrt{\frac{1}{k}}\right),}
\end{align*}

Substituting into Eq. \ref{ali:40} yields the result. $\square$

We will use lemma \ref{lem:rev1} in the proof of Theorem 3 to characterize the meta regret from prior alignment. The next lemma will help us characterize the remaining meta regret due to the difference in the covariance matrices post-alignment.
\begin{lemma}\label{lem:needthm}
When the event $\mathcal{}$ holds, we can write
\begin{align*}
&\prod_{n=\mathcal{N}_{k}+1}^N \max_{\theta: \| \theta - \theta_{nh}^{\text{TS}(k)} \| \leq C} \frac{d\mathcal{N} \left( \theta_{nh}^{\text{TS}(k)}, \Sigma_{nh}^{\text{MTS}(k)} \right)}{d\mathcal{N} \left( \theta_{nh}^{\text{TS}(k)}, \Sigma_{nh}^{\text{TS}(k)} \right)} \leq 1 + \frac{2c_3  N \log_e^{3/2} (2K^2 N)}{\sqrt{k}} \leq 3.
\end{align*}
\end{lemma}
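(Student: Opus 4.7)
Since both Gaussians share the same mean $\theta_{nh}^{\text{TS}(k)}$, the log density ratio at a point $\theta$ reduces to
\[
\tfrac{1}{2}\log\det\!\bigl(\Sigma_{nh}^{\text{TS}(k)}(\Sigma_{nh}^{\text{MTS}(k)})^{-1}\bigr) + \tfrac{1}{2}(\theta-\theta_{nh}^{\text{TS}(k)})^{\top}\bigl((\Sigma_{nh}^{\text{TS}(k)})^{-1} - (\Sigma_{nh}^{\text{MTS}(k)})^{-1}\bigr)(\theta-\theta_{nh}^{\text{TS}(k)}).
\]
My plan is to bound each term uniformly in $n$ and in $\theta$ on the ball of radius $C$, sum the resulting log-ratios over $n = \mathcal{N}_k{+}1,\ldots,N$, and exponentiate.

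The crucial simplification is that the two posterior precisions share the same data-dependent Fisher term $\tfrac{1}{\beta_n}\Phi_h^{(k)\top}\Phi_h^{(k)}$ (up to the episode-index prefix) and differ only in the prior precision, so
\[
(\Sigma_{nh}^{\text{MTS}(k)})^{-1} - (\Sigma_{nh}^{\text{TS}(k)})^{-1} \;=\; (\hat{\Sigma}_h^{w(k)})^{-1} - \Sigma_h^{*-1},
\]
i.e.\ the precision gap is \emph{independent of $n$}. On the event $\mathcal{J}$, the operator-norm bound already derived in Eq.~\eqref{ali:46} then yields $\|(\hat{\Sigma}_h^{w(k)})^{-1}-\Sigma_h^{*-1}\|_{\text{op}} = O\!\bigl(\sqrt{M\log_e(2K^2N)/k}\bigr)$. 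The widening step $\hat{\Sigma}_h^{w(k)} = \hat{\Sigma}_h^{(k)} + wI$ is what makes this bound usable: it forces $\lambda_{\min}(\hat{\Sigma}_h^{w(k)}) \ge w$, so that $(\hat{\Sigma}_h^{w(k)})^{-1}$ stays uniformly bounded on finite samples.

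For the log-determinant, I would write $\Sigma_{nh}^{\text{TS}(k)}(\Sigma_{nh}^{\text{MTS}(k)})^{-1} = I + \Sigma_{nh}^{\text{TS}(k)}\bigl((\hat{\Sigma}_h^{w(k)})^{-1}-\Sigma_h^{*-1}\bigr)$, which is similar to the symmetric matrix $I + (\Sigma_{nh}^{\text{TS}(k)})^{1/2}\bigl((\hat{\Sigma}_h^{w(k)})^{-1}-\Sigma_h^{*-1}\bigr)(\Sigma_{nh}^{\text{TS}(k)})^{1/2}$. Since adding a PSD matrix to $\Sigma_h^{*-1}$ only shrinks the posterior, $\|\Sigma_{nh}^{\text{TS}(k)}\|_{\text{op}} \le \bar\lambda$, and Weyl's inequality places the eigenvalues of that symmetric matrix in $[1-\epsilon, 1+\epsilon]$ with $\epsilon \le \bar\lambda\|(\hat{\Sigma}_h^{w(k)})^{-1}-\Sigma_h^{*-1}\|_{\text{op}}$, which is at most $1/2$ once $k \ge K_1$; hence $|\log\det(\cdot)| \le 2M\epsilon$. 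For the quadratic term, the constraint $\|\theta-\theta_{nh}^{\text{TS}(k)}\|\le C$ bounds it by $\tfrac{C^2}{2}\|(\hat{\Sigma}_h^{w(k)})^{-1}-\Sigma_h^{*-1}\|_{\text{op}}$. Summing the two per-step bounds over the at-most-$N$ remaining episodes gives a log-ratio bound of the form $c_3 N\log_e^{3/2}(2K^2N)/\sqrt{k}$, absorbing $M, \bar\lambda, C$ and all prefactors into $c_3$. Since $k > K_1 \ge c_3^2 N^2 H^2 \log_e^{3}(2K^2N)$, this quantity is at most $1/H \le 1$, so the inequality $e^x \le 1+2x$ for $x \in [0,1]$ yields $1 + 2c_3 N\log_e^{3/2}(2K^2N)/\sqrt{k}$, which in turn is $\le 1 + 2/H \le 3$.

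The main obstacle is keeping the $N$-fold product close to $1$: the $n$-independence of the precision gap is what prevents an exponential blow-up in the product, and the widening step is what keeps $(\hat{\Sigma}_h^{w(k)})^{-1}$ finite so the operator-norm bound from Lemma~\ref{lem:Sigop} can be invoked. A secondary subtlety is that the log-determinant bound requires real-eigenvalue control on the \emph{non-symmetric} product $\Sigma_{nh}^{\text{TS}(k)}E$; the similarity-to-symmetric trick above converts the operator-norm control of $E$ into the required eigenvalue bound, after which the rest is straightforward algebra.
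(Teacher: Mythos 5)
Your proposal is correct and follows essentially the same route as the paper's proof: both exploit the fact that the data-dependent Fisher terms cancel so the precision gap equals $(\hat{\Sigma}_h^{w(k)})^{-1}-\Sigma_h^{*-1}$, invoke the operator-norm bound of Eq.~\eqref{ali:46} on the event $\mathcal{J}$, bound the quadratic term by $\tfrac{C^2}{2}\|\cdot\|_{op}$ on the ball of radius $C$, and finish with the product over at most $N$ episodes, $e^x\le 1+2x$, and the definition of $K_1$. The only (immaterial) difference is the determinant factor: the paper uses the widening-induced ordering $\hat{\Sigma}_h^{w(k)}\succeq\Sigma_h^{*}$ on $\mathcal{J}$ to conclude that the determinant ratio is at most $1$ and drop it outright, whereas you bound $|\log\det|$ by $2M\epsilon$ via the similarity-to-symmetric argument, which is also valid and merely absorbs an extra factor of $M$ into $c_3$.
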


\textbf{Proof of Lemma \ref{lem:needthm}}. By the definition of the multivariate normal distribution, we have
\begin{align*}
&\max_{\theta: \| \theta - \theta_{nh}^{\text{TS}(k)} \| \leq C} \frac{d\mathcal{N} \left( \theta_{nh}^{\text{TS}(k)}, \Sigma_{nh}^{\text{MTS}(k)} \right)}{d\mathcal{N} \left( \theta_{nh}^{\text{TS}(k)}, \Sigma_{nh}^{\text{TS}(k)} \right)} \\
&= \sqrt{\frac{\det \left( \Sigma_{nh}^{\text{TS}(k)} \right)}{\det \left( \Sigma_{nh}^{\text{MTS}(k)} \right)}}\max_{\theta: \| \theta - \theta_{nh}^{\text{TS}(k)} \| \leq C} \exp \bigg( \frac{\left( \theta - \theta_{nh}^{\text{TS}(k)} \right)^\top \left( \Sigma_{nh}^{\text{TS}(k)} \right)^{-1} \left( \theta - \theta_{nh}^{\text{TS}(k)} \right)}{2} \\
& - \frac{\left( \theta - \theta_{nh}^{\text{TS}(k)} \right)^\top \left( \Sigma_{nh}^{\text{MTS}(k)} \right)^{-1} \left( \theta - \theta_{nh}^{\text{TS}(k)} \right)}{2} \bigg) \\
&= \sqrt{\frac{\det \left( \Sigma_{nh}^{\text{TS}(k)} \right)}{\det \left( \Sigma_{nh}^{\text{MTS}(k)} \right)}} \cdot \max_{\theta: \| \theta - \theta_{nh}^{\text{TS}(k)} \| \leq C} \exp \left( \frac{\left( \theta - \theta_{nh}^{\text{TS}(k)} \right)^\top \left( \left( \Sigma_{nh}^{\text{TS}(k)} \right)^{-1} - \left( \Sigma_{nh}^{\text{MTS}(k)} \right)^{-1} \right) \left( \theta - \theta_{nh}^{\text{TS}(k)} \right)}{2} \right) \\
&\leq \sqrt{\frac{\det \left( \left( \hat{\Sigma}_h^{w(k)} \right)^{-1} + \frac{1}{\beta} \sum_{i=1}^n\Phi_{h}^\top(s_{ih},a_{ih})\Phi_{h}(s_{ih},a_{ih}) \right)}{\det \left( \Sigma^{*-1}_h +  \frac{1}{\beta} \sum_{i=1}^n\Phi_{h}^\top(s_{ih},a_{ih})\Phi_{h}(s_{ih},a_{ih}) \right)}} \exp \left( \frac{C^2 \left\| \left( \Sigma_{nh}^{\text{TS}(k)} \right)^{-1} - \left( \Sigma_{nh}^{\text{MTS}(k)} \right)^{-1} \right\|_{op}}{2} \right) \\
&\leq \sqrt{\frac{\det \left( \left( \hat{\Sigma}_h^{w(k)} \right)^{-1} +  \frac{1}{\beta} \sum_{i=1}^n\Phi_{h}^\top(s_{ih},a_{ih})\Phi_{h}(s_{ih},a_{ih}) \right)}{\det \left( \Sigma^{*-1}_h +  \frac{1}{\beta} \sum_{i=1}^n\Phi_{h}^\top(s_{ih},a_{ih})\Phi_{h}(s_{ih},a_{ih}) \right)}} \exp \left( \frac{128C^2 (\overline{\lambda}\lambda_e^2 + 8\beta M)}{\lambda_e^2 \underline{\lambda}^2} \sqrt{\frac{5/2 M \log_e (2K^2 N)}{k}} \right),
\end{align*}

where we have used Eq. \ref{ali:46} in the last step. Since our estimated covariance matrix is widened, we know that on the event $\mathcal{J}$, 
$\Sigma^{*-1}_h - \left( \hat{\Sigma}_h^{w(k)} \right)^{-1} = \Sigma^{*-1}_h \left( \hat{\Sigma}_h^{w(k)} - \Sigma^{*}_h \right) \left( \hat{\Sigma}_h^{w(k)} \right)^{-1}$ is positive semi-definite, and thus it is evident that
$\left( \Sigma^{*-1}_h + \frac{1}{\beta} \sum_{i=1}^n\Phi_{h}^\top(s_{ih},a_{ih})\Phi_{h}(s_{ih},a_{ih}) \right) - \left( \left( \hat{\Sigma}_h^{w(k)} \right)^{-1} + \frac{1}{\beta} \sum_{i=1}^n\Phi_{h}^\top(s_{ih},a_{ih})\Phi_{h}(s_{ih},a_{ih}) \right)$ is also positive semi-definite. Therefore, conditioned on the clean event $\mathcal{J}$, 
$\sqrt{\frac{\det \left( \left( \hat{\Sigma}_h^{w(k)} \right)^{-1} + \frac{1}{\beta} \sum_{i=1}^n\Phi_{h}^\top(s_{ih},a_{ih})\Phi_{h}(s_{ih},a_{ih}) \right)}{\det \left( \Sigma^{*-1}_h + \frac{1}{\beta} \sum_{i=1}^n\Phi_{h}^\top(s_{ih},a_{ih})\Phi_{h}(s_{ih},a_{ih}) \right)}} \leq 1.$The result follows directly.

\subsubsection{Detail for  Proof of Theorem \ref{thm:MTSRLplus}}\label{sec:proof3}

\begin{proof}
First, we consider the ``small $K$'' regime, where $k \leq K_1$. In this case, our $\text{MTSRL}^{+}$ algorithm simply executes $k$ instances prior-independent Thompson sampling. Thus, the result already holds in this case.

We now turn our attention to the ``large $K$'' regime, i.e., $k > K_1$. The meta regret can be decomposed as
\begin{align*}
\mathcal{R}_{K,N} &= (\mathcal{R}_{K,N} | \mathcal{J}) \Pr(\mathcal{J}) + (\mathcal{R}_{K,N} | \neg \mathcal{J}) \Pr(\neg \mathcal{J}) \\
&\leq (\mathcal{R}_{K,N} | \mathcal{J}) + (\mathcal{R}_{K,N} | \neg \mathcal{J}) \Pr(\neg \mathcal{J}).
\end{align*}

Recall that the event $\mathcal{J}$ is composed of four events, each of which hold with high probability. Applying a union bound over the epochs $k \geq K_1 + 1$ to  Lemma \ref{lem:theta} (setting $\delta = 1/(KNH)$), Lemma \ref{lem:Sigop} (with $\delta = 1/(KNH)$), and Eq. \ref{ali:23} (with $u = 5/2 \sqrt{2\beta K \log_e (2N^2 L)}$), we obtain that
\begin{align*}
\Pr(\mathcal{J}) \geq 1 - 6/(KNH) \geq 1 - 6/(KNH).
\end{align*}

Recall that when the event $\mathcal{J}$ is violated, the meta regret is $O(KNH)$, so we can bound $(\mathcal{R}_{K,N} | \neg \mathcal{J}) \Pr(\neg \mathcal{J}) = O(KNH \times 1/(KNH)) = O(1)$. Therefore, the overall meta regret is simply
\begin{align*}
\mathcal{R}_{K,N} \leq (\mathcal{R}_{K,N} | \mathcal{J}) + O(1).
\end{align*}

Thus, it suffices to bound $\mathcal{R}_{K,N} \mid \mathcal{J}$. As described before, we consider bounding the meta regret post-alignment $(N = \mathcal{N}_k + 1, \cdots,N)$, 
where our $\text{MTSRL}^{+}$ algorithm follows the aligned posterior $ (\{\theta_{\mathcal{N}_k+1,h}^{\text{TS}(k)}\},\{\Sigma_{\mathcal{N}_k+1,h}^{\text{MTS}(k)}\})$. Let  $(\{\theta_{nh}^{\text{TS}(k)}\}, \{\Sigma_{nh}^{\text{MTS}(k)})\}$ denote the posterior of our $\text{MTSRL}^{+}$ algorithm at time step $n$, if it begins with the prior $\mathcal{N}(\{\theta_{\mathcal{N}_k+1,h}^{\text{TS}(k)}\},\{\Sigma_{\mathcal{N}_k+1,h}^{\text{MTS}(k)}\})$  in time step $\mathcal{N}_k + 1$,  but follows the randomness of the oracle. Then, we can write
\begin{align*}
&\mathbb{E}_{\{\theta_h^{(k)}\}, \{\hat{\theta}_h^{(k)}\}} \left[ \text{REV}_* (\{\theta_h^{(k)}\}, N - \mathcal{N}_k) - \text{REV} (\{\theta_h^{(k)}\}, \{\theta_{\mathcal{N}_k+1,h}^{\text{TS}(k)}\},\{\Sigma_{\mathcal{N}_k+1,h}^{\text{MTS}(k)}\}, N - \mathcal{N}_k) \middle| \mathcal{J} \right] \\
&= \mathbb{E}_{\{\theta_h^{(k)}\}, \{\hat{\theta}_h^{(k)}\}} \bigg[ \int_{\theta} \text{REV}_* (\{\theta_h^{(k)}\}, N - \mathcal{N}_k) - \text{REV} (\{\theta_h^{(k)}\}, \{\theta_{h}\}, 0, 1) \\
& - \text{REV} (\{\theta_h^{(k)}\}, \{\theta_{\mathcal{N}_k+2,h}^{\text{MTS}(k)}\}, \{\Sigma_{\mathcal{N}_k+2,h}^{\text{MTS}(k)}\}, N - \mathcal{N}_k - 1) \, d\mathcal{N}(\{\theta_{\mathcal{N}_k+1,h}^{\text{TS}(k)}\},\{\Sigma_{\mathcal{N}_k+1,h}^{\text{MTS}(k)}\}) \bigg| \mathcal{J} \bigg] \\
&= \mathbb{E}_{\{\theta_h^{(k)}\}, \{\hat{\theta}_h^{(k)}\}} \bigg[ \int_{\theta: \|\theta\| \leq C} \text{REV}_* (\{\theta_h^{(k)}\}, N - \mathcal{N}_k) - \text{REV} (\{\theta_h^{(k)}\}, \{\theta_{h}\}, 0, 1) \\
& - \text{REV} (\{\theta_h^{(k)}\}, \{\theta_{\mathcal{N}_k+2,h}^{\text{MTS}(k)}\}, \{\Sigma_{\mathcal{N}_k+2,h}^{\text{MTS}(k)}\}, N - \mathcal{N}_k - 1) \, d\mathcal{N}(\{\theta_{\mathcal{N}_k+1,h}^{\text{TS}(k)}\},\{\Sigma_{\mathcal{N}_k+1,h}^{\text{MTS}(k)}\}) \bigg| \mathcal{J} \bigg] \\
&\quad + \mathbb{E}_{\{\theta_h^{(k)}\}, \{\hat{\theta}_h^{(k)}\}} \bigg[ \int_{\theta: \|\theta\| > C} \text{REV}_* (\{\theta_h^{(k)}\}, N - \mathcal{N}_k) - \text{REV} (\{\theta_h^{(k)}\}, \{\theta_{h}\}, 0, 1) \\
&- \text{REV} (\{\theta_h^{(k)}\}, \{\theta_{\mathcal{N}_k+2,h}^{\text{MTS}(k)}\}, \{\Sigma_{\mathcal{N}_k+2,h}^{\text{MTS}(k)}\}, N - \mathcal{N}_k - 1) \, d\mathcal{N}(\{\theta_{\mathcal{N}_k+1,h}^{\text{TS}(k)}\},\{\Sigma_{\mathcal{N}_k+1,h}^{\text{MTS}(k)}\}) \bigg| \mathcal{J} \bigg] \\
&\leq \mathbb{E}_{\{\theta_h^{(k)}\}, \{\hat{\theta}_h^{(k)}\}} \bigg[ \max_{\theta: \|\theta - \theta_{nh}^{\text{TS}(k)}\| \leq C} \left( \frac{d\mathcal{N}(\theta_{\mathcal{N}_k+1,h}^{\text{TS}(k)},\Sigma_{\mathcal{N}_k+1,h}^{\text{MTS}(k)})}{d\mathcal{N}(\theta_{\mathcal{N}_k+1,h}^{\text{TS}(k)}, \Sigma_{\mathcal{N}_k+1,h}^{\text{TS}(k)})} \right)^{H}\\
&\left( \text{REV}_* (\{\theta_h^{(k)}\}, 1) - \text{REV} (\{\theta_h^{(k)}\}, \{\theta_{\mathcal{N}_k+1,h}^{\text{TS}(k)}\}, \{\Sigma_{\mathcal{N}_k+1,h}^{\text{TS}(k)}\}, 1) \right) \bigg| \mathcal{J} \bigg]\\
&+ \mathbb{E}_{\{\theta_h^{(k)}\}, \{\hat{\theta}_{h}^{(k)}\}} \bigg[ \max_{\theta: \| \theta - \theta_{\mathcal{N}_{k}+1,h}^{\text{TS}(k)} \| \leq C} \left(\frac{d\mathcal{N}(\theta_{\mathcal{N}_{k}+1,h}^{\text{TS}(k)}, \Sigma_{\mathcal{N}_{k}+1,h}^{\text{MTS}(k)})}{d\mathcal{N}(\theta_{\mathcal{N}_{k}+1,h}^{\text{TS}(k)}, \Sigma_{\mathcal{N}_{k}+1,h}^{\text{TS}(k)})} \right)^{H}\\
&\left( \text{REV}_* (\{\theta_h^{(k)}\}, N - \mathcal{N}_k- 1) - \text{REV} (\{\theta_h^{(k)}\}, \{\theta_{\mathcal{N}_{k}+2,h}^{\text{TS}(k)}\}, \{\Sigma_{\mathcal{N}_{k}+2,h}^{\text{MTS}(k)}\}, N - \mathcal{N}_k- 1) \right) \bigg| \mathcal{J} \bigg] \\
&+ \mathbb{E}_{\{\theta_h^{(k)}\}, \{\hat{\theta}_{h}^{(k)}\}} \left[ \int_{\theta: \| \theta - \theta_{\mathcal{N}_{k}+1,h}^{\text{TS}(k)} \| > C} \text{REV}_* (\{\theta_h^{(k)}\}, N- \mathcal{N}_k) d\mathcal{N}(\theta_{\mathcal{N}_{k}+1,h}^{\text{TS}(k)}, \Sigma_{\mathcal{N}_{k}+1,h}^{\text{MTS}(k)}) \middle| \mathcal{J} \right],
\end{align*}

where $C = 5/2 \sqrt{2\beta M\log_e (KN)}$. Inductively, we have
\begin{equation}\label{equ:48}
\begin{aligned}
&\mathbb{E}_{\{\theta_h^{(k)}\}, \{\hat{\theta}_{h}^{(k)}\}, \{X_h^{\text{TS}(k)}\}} \left[ \text{REV}_* (\{\theta_h^{(k)}\}, N- \mathcal{N}_k) - \text{REV} (\{\theta_h^{(k)}\}, \theta_{\mathcal{N}_{k}+1,h}^{\text{TS}(k)}, \Sigma_{\mathcal{N}_{k}+1,h}^{\text{MTS}(k)}, N- \mathcal{N}_k) \middle| \mathcal{J} \right] \\
&\leq \mathbb{E}_{\{\theta_h^{(k)}\}, \{\hat{\theta}_{h}^{(k)}\}} \bigg[ \prod_{n= \mathcal{N}_{k}+1}^N \max_{\| \theta - \theta_{nh}^{\text{TS}(k)} \| \leq C} \left(\frac{d\mathcal{N}(\theta_{nh}^{\text{TS}(k)}, \Sigma_{nh}^{\text{MTS}(k)})}{d\mathcal{N}(\theta_{nh}^{\text{TS}(k)}, \Sigma_{nh}^{\text{TS}(k)})}\right)^{H} \\
& \left( \text{REV}_* (\{\theta_h^{(k)}\}, N- \mathcal{N}_k) - \text{REV} (\{\theta_h^{(k)}\}, \theta_{\mathcal{N}_{k}+1,h}^{\text{TS}(k)}, \Sigma_{\mathcal{N}_{k}+1,h}^{\text{TS}(k)}, N- \mathcal{N}_k) \right) \bigg| \mathcal{J} \bigg] + \sum_{n= \mathcal{N}_{k}+1}^N \mathbb{E}_{\{\theta_h^{(k)}\}, \{\hat{\theta}_{h}^{(k)}\}} \\
&\left[ \prod_{n= \mathcal{N}_{k}+2}^N \max_{\| \theta - \theta_{nh}^{\text{TS}(k)} \| \leq C} \left(\frac{d\mathcal{N}(\theta_{nh}^{\text{TS}(k)}, \Sigma_{nh}^{\text{MTS}(k)})}{d\mathcal{N}(\theta_{nh}^{\text{TS}(k)}, \Sigma_{nh}^{\text{TS}(k)})}\right)^{H}  \int_{\theta: \| \theta \| > C} \text{REV}_* (\{\theta_h^{(k)}\}, N- n) d\mathcal{N}(\theta_{nh}^{\text{TS}(k)}, \Sigma_{nh}^{\text{MTS}(k)}) \middle| \mathcal{J} \right].
\end{aligned}
\end{equation}

Applying Lemma \ref{lem:needthm}, we can bound Eq. \ref{equ:48} as
\begin{align*}
&\mathbb{E}_{\{\theta_h^{(k)}\},\{\hat{\theta}_{h}^{(k)}\},\{X_{h}^{\text{TS}(k)}\}}\left[ \text{REV}_{*}\left(\{\theta_h^{(k)}\},N-\mathcal{N}_{k}\right)-\text{REV}\left(\{\theta_h^{(k)}\},\theta_{\mathcal{N}_{k}+1,h}^{\text{TS}(k)},\Sigma_{\mathcal{N}_{k}+1,h}^{\text{MTS}(k)},N-\mathcal{N}_{k}\right)\middle|\mathcal{J}\right] \\
&\leq \left(1+\frac{2c_{3}N\log_{e}^{3/2}(2K^{2}N)}{\sqrt{k}}\right)^{H}\mathbb{E}_{\{\theta_h^{(k)}\},\{\hat{\theta}_{h}^{(k)}\}}\left[\text{REV}_{*}\left(\{\theta_h^{(k)}\},N-\mathcal{N}_{k}\right)-\text{REV}\left(\{\theta_h^{(k)}\},\theta_{\mathcal{N}_{k}+1,h}^{\text{TS}(k)},\Sigma_{\mathcal{N}_{k}+1,h}^{\text{TS}(k)},N-\mathcal{N}_{k}\right)\middle|\mathcal{J}\right] \\
&+\sum_{n=\mathcal{N}_{k}+1}^N\mathbb{E}_{\{\theta_h^{(k)}\},\{\hat{\theta}_{h}^{(k)}\}}\left[e\int_{\theta:\|\theta\|>C}\text{REV}_{*}\left(\{\theta_h^{(k)}\}, N-n\right)d\mathcal{N}(\theta_{nh}^{\text{TS}(k)},\Sigma_{nh}^{\text{MTS}(k)})\middle|\mathcal{J}\right] \\
&= \left(1+\frac{2c_{3}N\log_{e}^{3/2}(2K^{2}N)}{\sqrt{k}}\right)^H\mathbb{E}_{\{\theta_h^{(k)}\},\{\hat{\theta}_{h}^{(k)}\}}\left[\text{REV}_{*}\left(\{\theta_h^{(k)}\},N-\mathcal{N}_{k}\right)-\text{REV}\left(\{\theta_h^{(k)}\},\theta_{\mathcal{N}_{k}+1,h}^{\text{TS}(k)},\Sigma_{\mathcal{N}_{k}+1,h}^{\text{TS}(k)},N-\mathcal{N}_{k}\right)\middle|\mathcal{J}\right]\\
&+O\left(\frac{H^2}{K}\right),
\end{align*}

where we used Eq. \ref{ali:23} in the last step. Thus, we have expressed the post-alignment meta regret as the sum of a term that is proportional to the true regret of the meta oracle and a negligibly small term. We can now apply lemma \ref{lem:rev1} to further include the meta regret accrued from our prior alignment step to obtain
\begin{align*}
&\mathbb{E}_{\{\theta_h^{(k)}\},\{\hat{\theta}_{h}^{(k)}\},\{X_{h}^{\text{MTS}(k)}\}}\left[ \text{REV}_{*}\left(\{\theta_h^{(k)}\},N-\mathcal{N}_{k}\right)-\text{REV}\left(\{\theta_h^{(k)}\},\theta_{\mathcal{N}_{k}+1,h}^{\text{MTS}(k)},\Sigma_{\mathcal{N}_{k}+1,h}^{\text{MTS}(k)},N-\mathcal{N}_{k}\right)\middle|\mathcal{J}\right] \\
&\leq \left(1+\frac{8c_{2}\mathcal{N}_{k}\log_{e}^{3/2}(2MK^{2}N)}{\sqrt{k}}\right)\left(1+\frac{2c_{3}N\log_{e}^{3/2}(2K^{2}N)}{\sqrt{k}}\right)^H \\
&\quad \times\mathbb{E}_{\{\theta_h^{(k)}\},\{\hat{\theta}_{h}^{(k)}\}}\left[\text{REV}_{*}\left(\{\theta_h^{(k)}\},N-\mathcal{N}_{k}\right)-\text{REV}\left(\{\theta_h^{(k)}\},\theta_{\mathcal{N}_{k}+1,h}^{\text{TS}(k)},\Sigma_{\mathcal{N}_{k}+1,h}^{\text{TS}(k)},N-\mathcal{N}_{k}\right)\middle|\mathcal{J}\right]+O\left(\frac{H^2}{K}\right).
\end{align*}

As desired, this establishes that the coefficient of our first term decays to 1 as $k$ grows large. Thus, our meta regret from the first term approaches 0 for large  $k$, and all other terms are clearly negligible.
Noting that  $K > K_{1} = \tilde{O}(N^{2}T^{2})$ in the ``large $K$'' regime, we can upper bound the meta regret as
\begin{align*}
&\sum_{k=K_{1}+1}^{K} \left[ \left(1+\frac{8c_{2}H\mathcal{N}_{k}\log_{e}^{3/2}(2MK^{2}N)}{\sqrt{k}}\right)\left(1+\frac{2c_{3}N\log_{e}^{3/2}(2K^{2}N)}{\sqrt{k}}\right)^H-1 \right] \\
&\quad \times \mathbb{E}_{\{\theta_h^{(k)}\},\{\hat{\theta}_{h}^{(k)}\}} \left[ \text{REV}_{*} \left(\{\theta_h^{(k)}\},N-\mathcal{N}_{k}\right) - \text{REV} \left(\{\theta_h^{(k)}\},\theta^{\text{TS}(k)}_{\mathcal{N}_{k}+1,h},\Sigma^{\text{TS}(k)}_{\mathcal{N}_{k}+1,h},N-\mathcal{N}_{k} \right) \middle| \mathcal{J} \right] + O\left(\frac{H^2}{K}\right) \\
&= \tilde{O} \left( \sum_{k=K_{1}+1}^{K} \frac{H^{4}S^{3/2}A^{1/2}N^{3/2}}{\sqrt{k}} \right) = \tilde{O} \left( H^{4}S^{3/2}\sqrt{AN^3K}  \right) 
\end{align*}
\end{proof}

\section{Bandit Meta-learning Algorithm}\label{sec:MTSBD}
Let $H_n = (s_{11}, a_{11}, r_{11}, \dots, s_{n-1,h}, a_{n-1,h}, r_{n-1,h})$ denote the history of observations made prior to period $n$. Observing the actual realized history $H_n$, the algorithm computes the posterior  $\mathcal{N}\left( \theta^{TS}_{nh},\Sigma^{TS}_{nh} \right), h \in [H]$ for round $n$. Specifically, 
$\underline{b_{ih} \leftarrow r_{ih}}$, the posterior at period l is:
    \begin{align*}
         \theta^{TS}_{nh} &\leftarrow  \left( \frac{1}{\beta_n} \sum_{i=1}^{n-1}\Phi_{h}^\top(s_{ih},a_{ih})\Phi_{h}(s_{ih},a_{ih}) + \Sigma_h^{*-1} \right)^{-1} (\frac{1}{\beta_n}\sum_{i=1}^{n-1}\Phi_{h}^\top(s_{ih},a_{ih})b_{ih} + \Sigma_h^{*-1}\theta^*_h) \\
        \Sigma^{TS}_{nh} &\leftarrow \left( \frac{1}{\beta_n} \sum_{i=1}^{n-1}\Phi_{h}^\top(s_{ih},a_{ih})\Phi_{h}(s_{ih},a_{ih}) +  \Sigma_h^{*-1} \right)^{-1}
    \end{align*}

\begin{algorithm}[t]
    \caption{TSBD($\{\theta^{*}_{h}\}$,$\{\Sigma^{*}_{h}\}$, $n$):Known-Prior \underline{T}hompson \underline{S}ampling in \underline{B}an\underline{d}it}
    \begin{algorithmic}[1]
    \setlength{\itemsep}{0pt}
        \State \textbf{Input:} Data $\left\{\Phi_1(s_{i1}, a_{i1}), r_{i1}, \ldots,\Phi_{H}(s_{iH}, a_{iH}), r_{iH}\right\}_{i < n}$, the noise parameter $\{\beta_n\}_{n=1}^N$,
        \Statex \quad the prior mean vectors $\{\theta^{*}_{h}\}$ and covariance matrixs $\{\Sigma^{*}_{h}\}$,  $\tilde{\theta}_{H+1} = 0$.
        \For{$n = 1, \ldots, N$}
            \For{$h = H, \ldots, 1$}
                \State Compute the posterior $\theta^{TS}_{nh},\Sigma^{TS}_{nh}$
                \State Sample $\tilde{\theta}_{nh} \sim \mathcal{N}\left( \theta^{TS}_{nh},\Sigma^{TS}_{nh} \right)$ from Gaussian posterior
            \EndFor
            \State Observe $s_{l0}$
            \For{$h = 1, \ldots, H-1$}
                \State Sample $a_{nh} \in \arg\max\limits_{\alpha \in \mathcal{A}} \left( \Phi_h \tilde{\theta}_{nh} \right)(s_{nh}, \alpha)$
                \State Observe $r_{nh}$ and $s_{n, h+1}$
            \EndFor
            \State Sample $a_{nH} \in \arg\max\limits_{\alpha \in \mathcal{A}}\left( \Phi_H \tilde{\theta}_{nH} \right)(s_{nH}, \alpha)$
            \State Observe $r_{nH}$
        \EndFor
    \end{algorithmic}
\label{alg_bandit}
\end{algorithm}

And replace TSRL to TSBD in other algorithm, we can get Bandit meta-learning algorithm. The differences are mainly concentrated in choice of $b_{ih}$.

\vspace{.2in}

\end{APPENDICES}

\end{document}